\pgfplotsset{compat=newest}
\newtheorem{lemma}{Lemma}
\newtheorem{theorem}{Theorem}
\newtheorem{definition}{Definition}
\newtheorem{proposition}{Proposition}
\newtheorem{assumption}{Assumption}
\newtheorem{remark}{Remark}
\DeclareMathOperator*{\argmax}{arg\,max}
\DeclareMathOperator*{\argmin}{arg\,min}
\newcommand{\E}{\mathbb{E}}
\newcommand{\Prob}{\mathbb{P}}
\newcommand{\bpi}{\boldsymbol{\pi}}
\newcommand{\bsigma}{\boldsymbol{\sigma}}
\begin{document}
\begin{center}
        \Large \bf Reinforcement Learning for Intensity Control: An Application to Choice-Based Network Revenue Management
	\end{center}
	\begin{center}
		{Huiling Meng}\,\footnote{Department of Systems Engineering and Engineering Management, The Chinese University of Hong Kong, Hong Kong, China. Email: \url{hmeng@se.cuhk.edu.hk.} },
		{Ningyuan Chen}\,\footnote{Rotman School of Management, University of Toronto, Toronto, Canada. Email: \url{ningyuan.chen@utoronto.ca.}},
		{Xuefeng Gao}\,\footnote{Department of Systems Engineering and Engineering Management, The Chinese University of Hong Kong, Hong Kong, China. Email: \url{xfgao@se.cuhk.edu.hk.} }
	\end{center}
	\begin{center}
		\today
	\end{center}

\begin{abstract}
Intensity control is a class of continuous-time dynamic optimization problems with many important applications in Operations Research including queueing and revenue management. In this study, we propose a practical continuous-time reinforcement learning framework for intensity control using choice-based network revenue management as a case study, which is a classical problem in revenue management that features a large state space, a large action space, and a continuous time horizon.  We show that by leveraging the event-driven structure of the problem and the inherent discretization of sample paths created by the state-jump times, a defining feature of intensity control, one does not need to discretize the time horizon in advance. We adapt discrete-time Monte Carlo and temporal difference learning algorithms for policy evaluation to continuous time and develop policy-gradient-based actor-critic algorithms for event-driven intensity control. Through a comprehensive numerical study, we evaluate the proposed approach against various state-of-the-art benchmarks, demonstrating its overall superior performance and effective scalability to large-scale problems. Notably, compared to discretization-based reinforcement learning methods, our continuous-time approach delivers significantly superior performance while maintaining comparable computational efficiency. This advantage is particularly pronounced in highly non-stationary environments. 
\end{abstract}

\section{Introduction}
Many dynamic optimization problems in Operations Research are intensity controls problems, which is a class of problems with continuous time and a discrete state space.
Two notable areas are control problems in queueing \citep{bremaud1981point, chen1990optimal} and dynamic pricing/assortment problems \citep{gallego1997multiproduct,strauss2018review,gallego2019revenue} in revenue management.
Although both areas have been studied extensively in the literature, it is fair to say that most problems are still challenging to solve in practice, due to a large number of states.
In dynamic pricing and assortment, for example, the possible combinations of the remaining inventory of the products/resources make the state space impractically large and render the exact optimal solutions extremely difficult.

Meanwhile, reinforcement learning (RL) provides a computational framework to solve general dynamic optimization problems that can be formulated as Markov decision processes (MDPs).
For a comprehensive introduction to RL, see \cite{sutton2018reinforcement}.
A prototypical problem that can be solved by RL is tabular MDPs: 
There is a finite state space, a finite action space, and a discrete time horizon.

Faced with intensity control problems, one may be tempted to convert such problems to tabular MDPs and then apply RL algorithms.
One conspicuous discrepancy between intensity control and tabular MDPs is whether the time horizon is continuous or discrete.
In fact, the continuous time horizon is the defining feature of intensity control problems.
For the conversion, one may approximate continuous-time stochastic processes with discrete-time ones.
For example, arrivals of customers following a Poisson process are widely used and driving the dynamics of the two areas mentioned above. 
They can be approximated by a single arrival in a period having a Bernoulli distribution when the time horizon is discretized with a sufficiently refined grid.
This type of discretization scheme is usually carried out before the RL algorithm is executed with a uniform and pre-specified grid size.

As for the choice of the discretization grid size, one can clearly see the computational trade-off.
On one hand, an accurate approximation of intensity control problem requires a fine grid. 
This point can be perfectly illustrated by the approximation of Poisson processes: the grid size $\Delta t$ needs to be sufficiently small so that it is unlikely to have more than one arrival during a period of $\Delta t$.
As such, the dynamics can be discretized and the probability of an arrival in a time period has a Bernoulli distribution.
On the other hand, if a time step in the discrete-time system corresponds to a minuscule duration in the continuous-time system, then the computational cost is high because of the inflated length of horizon in the discrete-time system and it may lead to numerical instabilities.
To make things worse, there is no guideline on how to choose a proper discretization scheme and the trade-off cannot be evaluated beforehand.
In practice, one may experiment RL algorithms on a set of diminishing grid sizes and inspect if the obtained solutions have converged as the grid becomes finer. The computational cost is prohibitively high because a sequence of increasingly challenging problems have to be solved, not to mention that the convergence may not even be warranted at the first place. 
Indeed, it is known in the RL community that the performance of RL algorithms can be very sensitive with respect to the discreteization grid size; see, e.g., \cite{tallec2019making} in which it is empirically shown that standard $Q$-learning methods are not robust to changes in time discretization of continuous-time control problems.

In this study, we focus on a class of intensity control problems where the underlying state dynamics are driven by a Poisson arrival process and only actions taken upon these arrivals affect the system, referred to as \emph{event-driven} intensity control problems.
For this class of problems, we provide a practical framework to implement RL algorithms \emph{without the need to discretize the time horizon upfront}.
We use the classical application of choice-based network revenue management (see \citealt{strauss2018review} for a recent review) as the primary expository vehicle, while further demonstrating the generality of the proposed framework through an additional application in queueing systems in  Appendix~\ref{app:queueing-control}.
The key insight is that for event-driven 
intensity control problems, a continuous-time RL policy can be implemented exactly by querying it only at arrival times, and the state trajectory is piecewise constant and inherently discretized by its own jump times.
Although these arrivals and state jumps occur at different time points across the sample paths of the system and thus cannot be pre-determined, they are finite during the time horizon and typically substantially sparser (for a given sample path) than the grid points required in the na\"ive discretization scheme.
These structural properties enable an efficient continuous-time RL approach that circumvents the approximation error in the na\"ive discretization.
We summarize the contributions of the study below.
\begin{itemize}
    \item
    Based on the continuous-time RL formulation for the choice-based network revenue management problem, we adapt policy evaluation (including Monte Carlo and Temporal Difference methods) and policy gradient from standard discrete-time RL to the continuous-time setting. We then combine them and develop actor-critic algorithms.
    Crucially, we show that by leveraging the inherent discretization of the state-jump times to evaluate integrals, the adapted RL algorithms can be implemented largely free of discretization errors.
    Compared to the na\"ive procedure that first discretizes the time horizon and then applies the standard RL algorithms for MDPs, our approach has two major strengths.
    First, it offers several structural advantages.
    By adopting a continuous-time formulation, our approach allows the agent to learn the true continuous-time optimal policy, rather than a sub-optimal approximation restricted to the discretization grid.
    Moreover, it naturally circumvents the numerical instability
    and convergence checking of the na\"ive discretization.
    Second, it is more computationally efficient, as 
    the arrival and state-jump times for a given sample path are typically much sparser than the grid points in a refined discretization scheme.
    In particular, we do not need to consider the union of the arrival or state-jump times of all sample paths.
    \item  We provide a careful martingale formalization to legitimize the use of continuous-time policy evaluation and policy gradient methods.
    In this process, we extend the martingale approach, originally proposed in \cite{jia2022policyevaluation, jia2022policygradient} for entropy-regularized RL in controlled diffusion processes, to event-driven intensity control problems with discrete states.  While this extension certainly constitutes a technical contribution to the literature, it is somewhat independent of the application-driven focus of this paper; therefore, we relegate its detailed discussion to Section~\ref{ssec:literature}.
    \item We conduct comprehensive numerical experiments to compare the performance of the proposed continuous-time actor-critic algorithm with three approximation schemes against two categories of benchmarks: (i) classical heuristics and state-of-the-art non-RL algorithms in the literature, including the greedy policy, the CDLP policy \citep{liu2008choice}, the ADP policy \citep{zhang2009approximate}, and the optimal dynamic programming policy with refined time discretization;
    and (ii)
    discretization-based RL methods, specifically the A2C algorithm~\citep{mnih2016asynchronous} applied under different discretization levels.
    Our empirical findings are as follows: First, compared to classical heuristics and state-of-the-art non-RL benchmarks, the performance of the proposed RL algorithm is among the best, despite the fact that it is the only policy that does not need to know the environment and has to learn it through simulated samples. 
    The ADP policy from \cite{zhang2009approximate}, when the time horizon is discretized properly, has a similar performance on small or medium sized problems. But its performance may be unstable and non-monotone with respect to the decreasing size of the grid.
    Moreover, the proposed RL algorithm inherits the superior scalability of discrete-time RL methods equipped with function approximation.
    With an appropriate neural network-based function approximation, 
    it can effectively handle very large-scale problems, as demonstrated in an experiment featuring a state space of size $11^{100}$ and an action space of size $2^{200}$.
    The numerical results highlight the strong potential of deploying the proposed RL algorithm in practice.
    Second, in a bursty arrival environment, our continuous-time RL algorithm exhibits superior performance compared to the A2C algorithm applied under various discretization levels, while only incurring a computational cost comparable to that of the coarser discretization.
    This advantage is particularly pronounced in the considered bursty environment, 
    where discretization-based methods face an inherent performance-efficiency trade-off.
    \footnote{All code for the experiments in this paper is available at \url{https://github.com/huilingmeng/ct-rl-intensity-control}.}
\end{itemize}
Below we discuss the connection of this work to the literature.


\subsection{Literature Review}\label{ssec:literature}
The network revenue management problem \citep{gallego1997multiproduct} is one of the classical problems in revenue management that has been studied by numerous papers.
Its choice-based variants have been proposed and studied by  \citet{gallego2004managing,talluri2004revenue,zhang2005revenue,liu2008choice,zhang2009approximate,zhang2011improved} and many subsequent papers.
See \cite{strauss2018review} for a review.
As a dynamic optimization problem, the discrete-time version can be formulated as an MDP.
However, to solve the optimal policy, even numerically, is essentially infeasible due to the exponentially large state and action spaces.
The focus of the literature has been to provide efficient algorithms, usually with provable performance guarantees, that solve the problem approximately.
The objective of this study is to use the choice-based network revenue management as a case study and show how to adapt the RL framework to the continuous time, because RL algorithms have been shown to have impressive empirical performance for a large class of practical problems.
We note that in this literature, a number of studies including \cite{zhang2009approximate,ma2020approximation} design algorithms based approximate dynamic programming (ADP), which is an important concept and approach in RL. 
However, they focus on the discrete-time formulation and value function approximations, while we study the continuous-time formulation and general RL algorithms including exploration and the policy gradient method.
Moreover, our theoretical results are not focused on the performance guarantee but the foundation and well-posedness of RL algorithms in the continuous time.
In the numerical experiments, we compare our algorithm to two important benchmarks in the literature \citep{liu2008choice,zhang2009approximate}.

In terms of methodology, our paper builds on a series of recent studies \citep{wang2020reinforcement, jia2022policygradient, jia2022policyevaluation} on continuous-time reinforcement learning with \emph{continuous} state and action spaces. 
In particular, the stochastic processes driving the system are controlled diffusion processes, and the reward is continuously accrued over time in their models.
In contrast, we consider continuous-time reinforcement learning 
for event-driven intensity control with piecewise constant sample paths, where
both the state and action spaces are discrete, and the reward is collected only at jump times.
This leads to several subtle yet significant differences in our theoretical analysis and algorithm design, which we elaborate below. 
First and foremost, the major discovery in this study that inherent discretization can be exploited is specific to event-driven intensity control problems.
For controlled diffusion processes, one typically discretizes the time uniformly in the implementation of RL algorithms \citep{jia2022policygradient}.
Second, to facilitate a theoretical analysis of value functions under stochastic policies, \citep{wang2020reinforcement, jia2022policygradient} derived the so-called exploratory state process by applying a law-of-large-numbers argument to the drift and diffusion coefficients of the controlled diffusion process. 
This approach does not apply to event-driven intensity control, and we instead derive the exploratory dynamics based on analyzing the infinitesimal generator of the observable/sample state process. 
Third, given that jump rewards in our setting may arise only at times when customers arrive, we aim to 
generate actions only at these specific times, rather than continuously throughout the time horizon as in \cite{jia2022policygradient}. 
Fourth, while our general framework extends the martingale approach proposed in \cite{jia2022policyevaluation, jia2022policygradient} to event-driven intensity control, there are important differences in the specific formulas and implementations for policy evaluation (PE) and policy gradient (PG).  
A notable distinction arises from the treatment of integrals with respect to the time, state and randomized policies. 
Because the sample paths of the state in our problem are piecewise constant, we propose an adaptive discretization approach that takes into account the jump times of each sample trajectory to compute the aforementioned integrals. This is in sharp contrast to \cite{jia2022policyevaluation, jia2022policygradient} where integrals are computed by discretizing the horizon uniformly.
Our strategy is expected to significantly reduce the approximation errors that often arise with regular uniform discretization scheme. 
We also mention a concurrent work \citep{gao2024reinforcement} which study RL for general jump-diffusion processes. Their focus is to develop q-learning algorithms, the continuous-time counterpart of Q-learning, for jump-diffusion processes, whereas our focus is to develop PE and PG based actor-critic methods tailored for intensity control.  
Note that a few recent RL studies have been focusing on the discretization of the (continuous) state space \citep{sinclair2020adaptive,sinclair2023adaptive}, whereas this study considers intensity control with a discrete space and the inherent discretization of the (continuous) time horizon by jump points.

The (finite-horizon) intensity control problem that we study is related to continuous-time MDPs or the more general semi-Markov decision processes (SMDPs, \cite{puterman2014markov}) with discrete state spaces. It is important to note that an infinite-horizon continuous-time MDP can be transformed to an equivalent discrete-time MDP using the method of uniformization (without time discretization), but this is not the case for finite-horizon continuous-time MDPs or intensity control problems \cite[Chapter 11]{puterman2014markov}. 
More precisely, 
for infinite-horizon continuous-time MDPs (CTDMP) with the discounted or average reward, the (relative) value function is a function of the state but not the time; the optimal policy is also stationary.
One can convert the optimality equation for the infinite-horizon CTMDP
to that for an infinite-horizon discrete-time MDP with the uniformization technique under certain technical conditions; See Chapter 11.5 of \cite{puterman2014markov}. 
Specifically, the uniformization procedure transforms the original infinite-horizon CTMDP to an equivalent infinite-horizon CTMDP where the expected transition times are equal for all states and actions.
Then, the new CTMDP is essentially equivalent to a discrete-time MDP. 
For finite-horizon intensity control, however, the optimal policy is non-stationary and the optimal value function is a function of both the state and the (continuous) time. 
As a result, one can not directly convert the problem to an equivalent discrete-time finite-horizon MDP whose value function does not depend on the continuous time. This is a fundamental challenge.
Several RL algorithms were developed for infinite-horizon continuous-time MDPs as well as for SMDPs very early on  \citep{bradtke1995reinforcement, das1999solving}. 
In terms of theoretical results, \cite{gao2022logarithmic} recently establish logarithmic regret bounds for learning tabular continuous-time MDPs in the infinite-horizon average-reward setting. \cite{gao2025square} establish regret bounds for continuous-time MDPs in the finite-horizon episodic setting.
By contrast, we develop model-free RL algorithms for the \textit{finite-horizon} network revenue management problem without considering regret bounds. 
Besides RL for continuous-time MDPs with discrete spaces, there is also a surge of interest in studying continuous-time RL for controlled diffusion processes and its applications (mostly in finance), see, e.g., \cite{wang2020continuous, guo2022entropy, wang2023reinforcement, jia2023q, zhao2024policy, wu2024reinforcement, dai2023learning}. 
In contrast, we study continuous-time RL for intensity control problems with discrete state spaces. 
As we will see, the need for discretization is significantly reduced in our setting compared to the problems above.

Finally, we mention a growing body of literature on RL algorithms applied to Operations Management. 
\cite{dai2022queueing} develop proximal policy optimization methods for queueing network control problems with a long-run average cost objective. 
\cite{gijsbrechts2022can} demonstrate that the RL algorithm can match the performance of the state-of-the-art policies in inventory management, although tuning the hyperparameters for instances is needed.
\cite{zhalechian2023data} apply a learning approach to hospital admission control.
See \cite{oroojlooyjadid2022deep,li2023deep,azagirre2024better} for other applications.

\section{Problem Formulation}\label{sec:formulation}
We consider the network revenue management problem with $m$ resources and $n$ products.
The consumption matrix is given by $A \coloneqq [a_{ij}]_{m \times n}$. 
The entry $a_{ij}$ represents the amount of resource $i$ used by selling one unit of product $j$, making the $j$th column $A^{j}$ of $A$ the incidence vector for product $j$.
Let $\mathcal{J} \coloneqq \{1, \ldots, n\}$ be the set of products, with fixed prices denoted by $p = (p_1, \ldots, p_n)^\top$.

We consider a continuous-time finite selling horizon $[0, T]$. 
The initial inventory of the resources is denoted by $c = (c_1, \ldots, c_m)^\top$.
Consumers arrive according to a Poisson process with rate $\lambda$.\footnote{The reinforcement learning framework can be easily extended to non-stationary arrival rates, because (1) it is data-driven and doesn't assume the knowledge of $\lambda$ (or $\lambda_t$), and (2) the solved policy is non-stationary itself.
We can also extend the main theoretical results of the paper to the non-stationary setting.
In contrast, some of the benchmarks we consider in Section~\ref{sec:Numerical_Experiments} have to assume a stationary arrival process.
In this study, for simplicity, we focus on stationary arrivals.}
Upon arrival, based on the assortment offered by the firm $S \subseteq \mathcal{J}$ at the moment, the customer makes a choice $j \in S\cup \{0\}$.
For convenience, 
we use $0$ to represent the no-purchase option.
The choice behavior is typically captured by the choice probability $P_j(S)\in [0,1]$.
In other words, the customer purchases product $j$ with probability $P_j(S)$ when the offered assortment is $S$.
The choice probabilities are fixed over time and satisfy the standard regularity conditions such as $P_j(S) = 0$ for $j \notin S$ and $\sum_{j\in S\cup\{0\}} P_j(S) =1$, although they may be unknown for the RL algorithms.
The firm's decision problem is to find a dynamic policy that offers assortment $S_t$ at time $t$ that maximizes the expected total revenue over the selling horizon $[0, T]$.

We briefly discuss why we focus on the continuous-time setting of the problem at the first place,
because many other studies in revenue management (for example, \citealt{zhang2009approximate}) start with the discrete-time setting in which at most one customer may arrive in a time period.
(Note that we are not referring to the na\"ive discretization of the continuous-time formulation, but the discrete-time formulation of the problem itself.)
We choose the continuous-time setting mainly to illustrate the design and implementation of the RL algorithm.
Moreover, although it is well expected that the discrete-time formulation is a good approximation of the continuous-time formulation in practice, to our knowledge, there are no theoretical results characterizing the gap between their value functions and optimal policies.
Therefore, we believe there are theoretical and practical values in demonstrating how to adapt the RL algorithms to the continuous-time formulation.


\subsection{Classical Formulation of Optimal Intensity Control}\label{ssec:optimal-control}
In this section, we formulate the problem in the language of optimal intensity control.
The system state is the remaining inventory levels, and the action is the assortments offered by the firm. Accordingly, the state space $\mathcal{X}$ is $\{0, \ldots, c_1\} \times \cdots \times \{0, \ldots, c_m\}$, and the action space $\mathcal{A}$ is the collection of all subsets of $\mathcal{J}$. 
For each state $x \in \mathcal{X}$, we define $\mathcal{A}(x) \coloneqq \{S \in \mathcal{A}: x \geq A^j \text{ for all } j \in S\}$ as the collection of all available actions at state $x$.
We further denote $\mathbb{K} \coloneqq \{(t, x, S) : t \in [0, T],\ x \in \mathcal{X},\ S \in \mathcal{A}(x)\}$ to represent all valid time-state-action triples.

{Let $\{N_t^{\lambda}: t \in [0, T] \}$ be a standard Poisson process with rate $\lambda$, modeling the arrival process of all potential consumers.}
A control process can be represented as $\boldsymbol{u} = \{S_t \in \mathcal{A}: t \in [0, T] \}$, where $S_t$ specifies the firm's offered set at time $t$. 
Given a control process $\boldsymbol{u}$, let $N_t^{\boldsymbol{u}} = (N_{1,t}^{\boldsymbol{u}}, \ldots, N_{n, t}^{\boldsymbol{u}})^{\top}$ be a vector of controlled Poisson processes with intensities $(\lambda P_{1}(S_t), \ldots, \lambda P_{n}(S_t))$, interpreted as the cumulative number of the $n$ products sold by time $t$ under the control $\boldsymbol{u}$.
The process $\{N_t^{\boldsymbol{u}}: t\in [0, T]\}$ is defined on a filtered probability space $(\Omega, \mathcal{F}, \Prob; \{\mathcal{G}_t\}_{t \geq 0})$, where $\mathcal{G}_t$ includes all the information regarding customer arrival times associated with $N^{\lambda}$ and customer choices up to time $t$.
Note that the customer arrival process $N_t^{\lambda}$ is not influenced by our control process $\boldsymbol{u} = \{S_t : t \in [0, T]\}$, which adjusts the offered assortments. 
In contrast, $N_t^{\boldsymbol{u}}$ is a vector-valued process that is generated by thinning $N_t^{\lambda}$ using the control policy and captures the number of each product sold over time.

The remaining inventory of the resources at time $t$ is represented by $X_t^{\boldsymbol{u}} = c - AN_t^{\boldsymbol{u}}$. 
Given a control process $\boldsymbol{u}$ generated by a deterministic function $u$ as $S_t = u(t, X_{t-}^{\boldsymbol{u}})$, the process $X_t^{\boldsymbol{u}} \in \mathcal{X}$ is a continuous-time Markov chain.
In particular, for $(t, x, S) \in \mathbb{K}$, the controlled transition rates of $X_t^{\boldsymbol{u}}$ are given by
\begin{equation}\label{eq:q-rate}
        q ( y \mid t, x, S) = \sum_{\{j \in \mathcal{J}: A^j = x - y\}} \lambda P_j(S), \quad \forall\,  y \neq x; \quad
        q ( x \mid t, x, S) = - \lambda [1 - P_0(S)].
\end{equation}
The state $x$ can only transition to state $y$ if a product $j$ consumes an array of resources $A^j=x-y$.
Consider $\mathcal{U}$ to be the set of all non-anticipating control processes, which satisfies $\int_{0}^{T} A dN_t^{\boldsymbol{u}} \leq c$, $\Prob\text{-a.s.}$
Then, for a policy $\boldsymbol{u} \in \mathcal{U}$, the expected total revenue is given by 
\begin{align}\label{eq:value-func-V}
    V(0, c; \boldsymbol{u})\coloneqq \E\left[\int_{(0, T]} p^{\top} d {N}^{\boldsymbol{u}}_{t}\right] = \E\left[\int_0^T r(S_t) dt\right],
\end{align}
where $r(S)\coloneqq \lambda \sum_{j=1}^{n} p_j P_j(S) \text{ for all } S \in \mathcal{A}$.
The value function, defined by $V(t, x; \boldsymbol{u}) \coloneqq $ $\E\big[\int_{(t, T]} p^{\top} d {N}^{\boldsymbol{u}}_{s} \mid X_t^{\boldsymbol{u}} = x\big]$,
calculates the expected revenue during the time interval $(t, T]$ given that the vector of remaining inventory at time $t$ is $x$.
The goal of this intensity control problem is to find a control $\boldsymbol{u}^* \in \mathcal{U}$ which achieves 
$V^{*}(t, x) = \sup_{\boldsymbol{u} \in \mathcal{U}} V(t, x; \boldsymbol{u})$ for all $(t, x) \in [0, T] \times \mathcal{X}$.
From the optimal control theory, the optimal value function $V^*(t, x)$ satisfies the following Hamilton–Jacobi–Bellman (HJB) equation

\begin{equation}\label{eq:hjb}
\left\{
    \begin{aligned}
        &\frac{\partial V^*}{\partial t} (t, x) + \max_{S \in \mathcal{A}(x)} H(t, x, S, V^*(\cdot, \cdot)) = 0, \quad (t, x) \in [0, T] \times \mathcal{X} \\
        &V^*(T, x) = 0, \quad x \in \mathcal{X},
    \end{aligned}
\right.
\end{equation}
where the \textit{Hamiltonian} $H: \mathbb{K} \times C^{1, 0}([0, T] \times \mathcal{X}) \mapsto \mathbb{R}$ is defined as:
\begin{equation}\label{eq:Hamiltonian}
H(t, x, S, v(\cdot, \cdot)) = r(S) + \sum_{y \in \mathcal{X}} v(t, y) q(y \mid t, x, S).
\end{equation}
The space $C^{1, 0}([0, T] \times \mathcal{X})$ consists of all real-valued functions defined on $[0, T] \times \mathcal{X}$ that are {continuously differentiable} in $t$ over $[0, T]$ for all $x \in \mathcal{X}$.

For readers' convenience, we  provide below a heuristic derivation of the HJB equation \eqref{eq:hjb}. A rigorous treatment of general intensity control problems can be found  in Chapter VII of \cite{bremaud1981point}.
Consider the system for an infinitesimal time interval $[t, t+\Delta t]$.
The probability that no customers arrive during this interval is $1 - \lambda \Delta t + o(\Delta t)$. 
Meanwhile, the probability that exactly one customer arrives is $\lambda \Delta t + o(\Delta t)$. In the latter case, assuming the offered set is $S$, then a product $j \in S$ is sold with probability $P_j(S)$, generating a revenue of $p_j$, or a no-sale occurs with probability $P_0(S)$, resulting in no revenue.
Additionally, the probability that more than one customer arrives within $[t, t+\Delta t]$ is $o(\Delta t)$. 
Then, we have
\begin{align*}
    V^*(t, x) = {}& [1 - \lambda \Delta t + o(\Delta t)] \times V^*(t + \Delta t, x) + [\lambda \Delta t + o(\Delta t)] \times
    \\& \max_{S \in \mathcal{A}(x)}
    \bigg\{ \sum_{j \in S} P_j(S)\cdot [p_j + V^*(t + \Delta t, x - A^j)] + P_0(S) \cdot V^*(t + \Delta t, x) \bigg\} + o(\Delta t).
\end{align*}
By rearranging the terms, dividing both sides by $\Delta t$ and taking the limit as $\Delta t \rightarrow 0$, we obtain the HJB equation \eqref{eq:hjb}.

We note that the optimal control problem \eqref{eq:hjb} is challenging to solve, both analytically and computationally.
First, the continuous time horizon generally has to be discretized in order to obtain a numerical solution.
This procedure inevitably introduces discretization errors.
Furthermore, the discretization scheme needs to be carefully designed to avoid instability and guarantee convergence.  
Unfortunately, there are no general guidelines, and the practice is rather ad hoc depending on the application.
Second, the state and action spaces of the problem are of the sizes $\prod_{i=1}^m (1+c_i)$ and $2^n$, respectively.
It is virtually impossible to solve the problem exactly for a medium $m$ or $n$.
Third, in practice, the choice probabilities $P_j(S)$ that determine the transition rates $q(\cdot)$ are typically unknown to the firm and have to be learned through the collected data.
In other words, the algorithm needs to be fully data-driven \citep{chen2023frontiers}.
These challenges motivate the use of RL, which provides a computational framework for high-dimensional control problems. 
However, most classical RL algorithms are developed for discrete-time settings.
Applying such methods to our continuous-time problem typically requires an upfront time discretization to convert it into a discrete-time MDP, which reintroduces the aforementioned discretization challenges in traditional numerical methods. To circumvent this, we directly formulate the problem within a continuous-time RL framework and develop continuous-time counterparts of discrete-time RL algorithms to solve the problem without upfront time discretization.


\subsection{Formulation of Continuous-Time Reinforcement Learning}\label{ssec:RL-formulation}
In this study, we focus on policy-based reinforcement learning.
To start, we consider the following policy class, following Definition 2.1 in Chapter 2 of \cite{guo2009continuous}.
\begin{definition}
A randomized Markov policy is 
{a function $\bpi: [0, T] \times \mathcal{X} \times \mathcal{A} \mapsto [0, 1]$} that satisfies 
\begin{itemize}
    \item [(i)]
    For all $(x, S) \in \mathcal{X} \times \mathcal{A}$, the mapping $t \mapsto \bpi(S \mid t, x)$ is measurable on $[0, T]$.
    \item [(ii)]
    For all $(t, x) \in [0, T] \times \mathcal{X}$, $\bpi(\cdot \mid t,x)$ is a probability distribution on the action space $\mathcal{A}$.
\end{itemize}
Moreover, a randomized Markov policy $\bpi(\cdot \mid \cdot, \cdot)$ is called admissible if it further satisfies
	\begin{itemize}
        \item [(iii)]
        For all $(t, x) \in [0, T] \times \mathcal{X}$, it holds that $\bpi ( S \mid t, x ) = 0$ if $S \notin \mathcal{A}(x)$;
        \item [(iv)]
        For all $(x, S) \in \mathcal{X} \times \mathcal{A}$, the mapping $t \mapsto \bpi(S \mid t, x)$ is continuous on $[0, T]$.
	\end{itemize}
\end{definition}
We denote by $\Pi$ the set of admissible randomized Markov policies. When the context is clear, we simply use $\bpi(S \mid t,x)$ to denote the probability of choosing $S$ as the offered assortment in state $(t,x)$. Note that the action randomization in the stochastic policy $\bpi(\cdot \mid \cdot, \cdot)$ is independent of the customer arrival process $N^{\lambda}$ discussed in Section \ref{ssec:optimal-control}. 

In the RL formulation, instead of solving \eqref{eq:hjb} in the hope of obtaining the optimal (deterministic) policy, we consider a class of randomized policies that may choose the offered assortment at $t$ according to some probability distribution over feasible assortments.
Such randomized policies encourage exploration of actions and states that are ``suboptimal'' in the current iteration, which is a key principle of algorithmic design in reinforcement learning.
The exploratory policies can help collect data and gradually refine the approximation of the environment.
In the meantime, the policy can improve itself and converge to the optimal policy if the RL algorithm is properly designed.

We will specify the choice of the parametric family of policies
within the class of admissible randomized Markov policies in Section \ref{sec:Numerical_Experiments}. 
Generally, the choice needs to satisfy the following two conditions. First, the family is flexible enough so that it should include the optimal policy as a member or at least be able to approximate it.
This condition allows the reinforcement learning to converge to a near-optimal policy over time.
The crafting of such policy family usually depends on the problem context.
Second, the randomness can be tuned by a parameter so that one can control the degree of exploration, depending on the phase of the algorithm. 
For example, in the end of reinforcement learning, the algorithm can easily turn off or reduce exploration to generate a near-optimal policy from $\Pi$.


\begin{remark}
Because we consider randomized Markov policies, we need to enlarge the original filtered probability space $(\Omega, \mathcal{F}, \Prob; \{\mathcal{G}_t\}_{t \geq 0})$ to include the additional randomness
from sampling actions/assortments. Let $\mathcal{F}_t$ be the new sigma-algebra generated by $\mathcal{G}_t$ and another sequence of i.i.d. uniform random variables used to generate randomized actions (at customer arrival times) up to time $t$. The new filtered probability space is denoted by $(\Omega, \mathcal{F}, \Prob; \{\mathcal{F}_t\}_{t \geq 0})$.
\end{remark}


To encourage exploration, we follow \cite{jia2022policygradient} and introduce the entropy to measure the randomness of a stochastic policy $\bpi$. For all $\bpi \in \Pi$ and $(t, x) \in [0, T] \times \mathcal{X}$, denote the entropy of $\bpi(\cdot \mid t, x)$ by {$$\mathcal{H}(\bpi(\cdot \mid t, x)) \coloneqq - \sum_{S \in \mathcal{A}(x)} \bpi ( S \mid t, x )\log \bpi ( S \mid t, x).$$}
Then, we add the entropy as a bonus to the original value function, leading to  
\begin{align}
    J ( t, x; \bpi ) &= \E \left[ \int_{(t,T]} p^{\top} d {N}^{\bpi}_{s} + \gamma \int_t^T  \mathcal{H}(\bpi(\cdot \mid s, {X}^{\bpi}_{s-})) ds \mid X_t^{\bpi} = x \right],
     \label{eq:value_func_entropy_data}
\end{align}
where $\gamma \geq 0$ is referred to as the temperature parameter and controls the degree of exploration, ${N}_t^{\bpi}$ captures the number of each product sold over time under the randomized policy $\bpi$, and ${X}_t^{\bpi} = c - A {N}_t^{\bpi}$. 
Such entropy regularization is a commonly used technique to improve exploration in RL, see also \cite{haarnoja2018soft}.
We note that the main recipe of the paper that utilizes the inherent discretization is independent of whether entropy regularization is used, although without it the RL algorithm tends to perform poorly due to a lack of exploration.

The task of RL is to find a policy $\bpi^* \in \Pi$ which attains
\begin{align}\label{eq:overall-obj-RL}
   J^*(t, x) = \sup_{\bpi \in \Pi} J(t, x; \bpi) 
\end{align}
for all $(t, x) \in [0, T] \times \mathcal{X}$. 
Compared with the original problem, the optimal policy $\bpi^*$ of RL can be characterized by a Boltzmann (or softmax) distribution:
\begin{align}\label{eq:fixed_point}
 \bpi^*(S \mid t, x) = \frac{\exp\{\frac{1}{\gamma} H(t, x, S, J^*(\cdot, \cdot))\}}{\sum_{\bar S \in \mathcal{A}(x)}\exp\{\frac{1}{\gamma} H(t, x, \bar S, J^*(\cdot, \cdot))\}},
\end{align}
where the Hamiltonian $H$ is introduced in \eqref{eq:Hamiltonian}. A proof of this characterization is provided in Appendix \ref{app:pf_statements}.
It is clear that even at optimality, the exploration parameter $\gamma$ encourages offering assortments randomly, although it is more likely to sample the assortments with a higher Hamiltonian in the original problem. With a fine-tuned small $\gamma > 0 $, we can expect that the optimal policy $\bpi^*$, obtained by maximizing $J$, will achieve performance that closely approximates the original optimal value function $V^*$ in \eqref{eq:hjb}.
 

{For theoretical analysis, we introduce a Markov process $\{\tilde X^{\bpi}_t: t \in [0, T] \}$, defined on the original probability space $(\Omega, \mathcal{F}, \Prob; \{\mathcal{G}_t\}_{t \geq 0})$
that averages out the randomness in the action/policy.} 
The process $\{\tilde X^{\bpi}_t: t \in [0, T] \}$ will be referred to as the exploratory state process.
It is equivalent to the sample state process $\{X^{\bpi}_t: t \in [0, T] \}$ (defined on $(\Omega, \mathcal{F}, \Prob; \{\mathcal{F}_t\}_{t \geq 0})$) in the sense that
the distribution (law) of $\{\tilde X^{\bpi}_t: t \in [0, T] \}$ is the same as that of $\{X^{\bpi}_t: t \in [0, T] \}$. Moreover, note that for each $j=1, \ldots, n$, the process $\{N^{\bpi}_{j,t} - \int_0^t \sum_{S \in \mathcal{A}(X^{\bpi}_{t-})} \lambda P_j(S)\bpi (S \mid s, X^{\bpi}_{t-}) dt : t \in [0, T]\}$ is an $\{\mathcal{F}_t\}_{t\geq 0}$-martingale. 
Now we can rewrite 
\begin{align} 
    J ( t, x; \bpi )
    &= \E \left[ \int_t^T \bigg\{\sum_{S \in \mathcal{A}(\tilde{X}_{s-}^{\bpi})} r(S)\bpi (S \mid s, \tilde{X}^{\bpi}_{s-}) + \gamma \mathcal{H}(\bpi (\cdot \mid s, \tilde{X}_{s-}^{\bpi}))\bigg\} ds \mid \tilde{X}_t^{\bpi} = x \right].
    \label{eq:value_func_entropy_exploratory}
\end{align}
In \eqref{eq:value_func_entropy_exploratory}, 
we use {the exploratory state process to express the value function}, which is easier to work with because {the randomness in the policy and customer choice} introduces complexity to the analysis of value functions.
Interested readers may find details about the {reformulation \eqref{eq:value_func_entropy_exploratory}} in Appendix \ref{app:rl-reformulation}. 
We also emphasize that unlike the sample state process $\{X^{\bpi}_t: t \in [0, T] \}$, 
the exploratory dynamics $\{\tilde X^{\bpi}_t: t \in [0, T] \}$ is not observable, and hence its trajectories will not be used in our RL algorithm design. 

For a given randomized Markov policy $\bpi$, its value function $J(t, x; \bpi)$ can be characterized by a differential equation. The proof of Lemma \ref{lem:ode_value_func} is deferred to Appendix \ref{app:pf_statements}.
\begin{lemma}\label{lem:ode_value_func}
    A function $v \in C^{1, 0}([0, T] \times \mathcal{X})$ coincides with the value function under policy $\bpi$, i.e., $v(t, x) = J(t, x; \bpi)$ for all $(t, x) \in [0, T] \times \mathcal{X}$, if and only if it satisfies the following differential equation:
    \begin{align}\label{eq:ode_value_func}
        \frac{\partial v }{\partial t }(t, x)  + \sum_{S \in \mathcal{A}(x)} H(t, x, S, v(\cdot, \cdot)) \bpi(S \mid t, x) + \gamma \mathcal{H}(\bpi(\cdot \mid t, x))= 0, \quad (t, x) \in [0, T) \times \mathcal{X},
    \end{align}
    with the terminal condition $v(T, x) = 0,\ x \in \mathcal{X}$.
\end{lemma}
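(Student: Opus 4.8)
The plan is to read \eqref{eq:ode_value_func} as the Feynman--Kac backward equation for the exploratory state process $\tilde{X}^{\bpi}$ and to prove the two implications separately: the ``if'' part by a Dynkin/martingale argument on the finite-state chain, and the ``only if'' part by first constructing a $C^{1,0}$ solution of the equation and then invoking the ``if'' part to identify it with $J(\cdot,\cdot;\bpi)$. I would first fix notation: set $\bar r(t,x):=\sum_{S\in\mathcal{A}}r(S)\bpi(S\mid t,x)$ and $g(t,x):=\bar r(t,x)+\gamma\mathcal{H}(\bpi(\cdot\mid t,x))$ (continuous in $t$ for each $x$, using the continuity of $t\mapsto\bpi(S\mid t,x)$ from admissibility and the continuity of the entropy on the simplex), and let $\tilde q(y\mid t,x):=\sum_{S\in\mathcal{A}}q(y\mid t,x,S)\bpi(S\mid t,x)$ be the averaged transition rates of $\tilde{X}^{\bpi}$, with generator $(\mathcal{L}^{\bpi}_t v)(t,x):=\sum_{y\in\mathcal{S}}\tilde q(y\mid t,x)v(t,y)$. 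The algebraic identity
\[
\sum_{S\in\mathcal{A}}H(t,x,S,v(\cdot,\cdot))\,\bpi(S\mid t,x)=\bar r(t,x)+(\mathcal{L}^{\bpi}_t v)(t,x),
\]
immediate from \eqref{eq:Hamiltonian} and \eqref{eq:q-rate}, shows that \eqref{eq:ode_value_func} is equivalent to $\partial_t v+\mathcal{L}^{\bpi}_t v+g=0$ on $[0,T)\times\mathcal{S}$ with $v(T,\cdot)=0$.

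For the ``if'' direction, given $v\in C^{1,0}$ solving $\partial_t v+\mathcal{L}^{\bpi}_t v+g=0$, I would apply Dynkin's formula to $s\mapsto v(s,\tilde{X}^{\bpi}_s)$ on $[t,T]$. Since $\mathcal{S}$ is finite and the rates $\tilde q$ are bounded, the compensated-jump term is a genuine martingale with zero conditional mean, so
\[
\mathbb{E}^{\mathbb{P}}\big[v(T,\tilde{X}^{\bpi}_T)\mid \tilde{X}^{\bpi}_t=x\big]-v(t,x)=\mathbb{E}^{\mathbb{P}}\Big[\int_t^T(\partial_s v+\mathcal{L}^{\bpi}_s v)(s,\tilde{X}^{\bpi}_{s-})\,ds\,\Big|\,\tilde{X}^{\bpi}_t=x\Big]=-\,\mathbb{E}^{\mathbb{P}}\Big[\int_t^T g(s,\tilde{X}^{\bpi}_{s-})\,ds\,\Big|\,\tilde{X}^{\bpi}_t=x\Big].
\]
Using $v(T,\cdot)=0$ and the representation \eqref{eq:value_func_entropy_exploratory}, the right-hand side is exactly $-J(t,x;\bpi)$, whence $v(t,x)=J(t,x;\bpi)$.

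For the ``only if'' direction it suffices to exhibit a $C^{1,0}$ solution of $\partial_t v+\mathcal{L}^{\bpi}_t v+g=0$, $v(T,\cdot)=0$, since the ``if'' part then forces it to equal $J(\cdot,\cdot;\bpi)$, giving $J(\cdot,\cdot;\bpi)\in C^{1,0}$ and solving \eqref{eq:ode_value_func}; the terminal condition $J(T,x;\bpi)=0$ is immediate from \eqref{eq:value_func_entropy_exploratory}. To build the solution I would exploit that every jump of $\tilde{X}^{\bpi}$ strictly decreases the total remaining inventory $\|x\|_1$ (a transition $x\to y\neq x$ corresponds to a sale with $A^j=x-y$, a nonzero nonnegative vector), so $\mathcal{S}$ carries a well-founded order and one can solve layer by layer by backward induction on $\|x\|_1$. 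Writing $\Lambda(t,x):=-\tilde q(x\mid t,x)$ and conditioning on the first jump out of $x$, a solution must satisfy the renewal-type integral equation
\[
v(t,x)=\int_t^T e^{-\int_t^s\Lambda(u,x)\,du}\Big[g(s,x)+\sum_{y\neq x}\tilde q(y\mid s,x)\,v(s,y)\Big]\,ds;
\]
for $x$ minimal this reads $v(t,x)=\int_t^T g(s,x)\,ds\in C^1$, and in the inductive step the bracketed integrand is continuous in $s$ (by the induction hypothesis together with continuity of $g$, $\tilde q$, $\Lambda$ in $t$), so $v(\cdot,x)\in C^1$ by the fundamental theorem of calculus; differentiating via Leibniz's rule and using $\Lambda(t,x)=-\tilde q(x\mid t,x)$ recovers $\partial_t v+\mathcal{L}^{\bpi}_t v+g=0$.

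The main obstacle is precisely this regularity claim in the ``only if'' direction: $J(\cdot,x;\bpi)\in C^{1,0}$ does not follow from its probabilistic definition alone, and the integral-equation/backward-induction argument is where the continuity of $t\mapsto\bpi(S\mid t,x)$ in the definition of admissibility (Definition~1(iv)) is genuinely used, keeping the coefficients --- and hence the constructed solution --- continuously differentiable in time. An essentially equivalent but slicker route is to prove uniqueness of $C^{1,0}$ solutions directly (the equation is linear with a terminal condition, so a Dynkin/Gr\"onwall argument gives uniqueness) and then construct one solution as above; everything else in the proof is routine bookkeeping once the generator identity is in place.
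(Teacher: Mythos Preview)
Your proof is correct and the ``if'' direction coincides with the paper's argument: apply Dynkin's formula to the finite-state exploratory chain $\tilde X^{\bpi}$ and read off the Feynman--Kac representation. The existence/``only if'' direction, however, takes a different route. The paper weights the backward integral equation by $e^{\beta t}$ with $\beta=2\lambda+1$ and shows that the resulting operator on $C^{0,0}([0,T]\times\mathcal S)$ is a contraction (contraction factor $2\lambda/\beta<1$), obtaining a unique fixed point whose de-weighted version lies in $C^{1,0}$ and solves \eqref{eq:ode_value_func}. Your argument instead exploits the monotone structure of the revenue-management dynamics---every jump strictly decreases $\|x\|_1$---to build the solution by backward induction on the inventory level via a first-jump renewal equation. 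Your approach is more elementary (no Banach fixed-point theorem, no artificial weighting) and makes the $C^1$ regularity in $t$ completely transparent at each layer, but it leans on the acyclicity of the transition graph and would not transfer to intensity-control problems where the state can move in both directions. The paper's contraction argument is agnostic to the transition structure and simultaneously delivers uniqueness, at the cost of a slightly more opaque construction. Either route closes the lemma.
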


\subsection{Structural Advantages of Continuous-Time RL Formulation}\label{sec:structural-advantage}
In this section, we elaborate on the structural advantage of our continuous-time RL framework over the discretization-based RL methods that apply discrete-time RL algorithms after time discretization.

In a discretization-based RL method, the time horizon $[0, T]$ is typically discretized upfront into a uniform grid $0 = t_0 < t_1 < \cdots < t_K = T$ with step size $\Delta t = \frac{T}{K}$.
Given a policy $\bpi$, the firm interacts with the environment as follows.
At each grid time $t_k$, the firm observes the current state $X_{t_k}^{\bpi}$ and samples an assortment $S_{t_k}^{\bpi}$ from the distribution $\bpi(\cdot \mid t_k, X_{t_k}^{\bpi})$, which is held fixed over the interval $(t_k, t_{k+1}]$. 
During this interval, the state evolves according to the continuous-time dynamics and may undergo zero, one, or multiple jumps, reaching state $X_{t_{k+1}}^{\bpi}$ at time $t_{k+1}$. The firm then receives a cumulative reward $r_{(t_k, t_{k+1}]}^{\bpi}$ over the interval $(t_k, t_{k+1}]$ in the continuous-time system, which also corresponds to the one-step reward at $(t_k, X_{t_k}^{\bpi})$ in the discrete-time RL formulation.
Treating $r_{(t_k,t_{k+1}]}^{\bpi}$ as the one-step reward, the above interaction induces a discrete-time MDP on the grid.
The dataset $\mathcal{D}_{\Delta t}^{\bpi} = \{(t_k, X_{t_k}^{\bpi}, S_{t_k}^{\bpi}, r_{(t_k,t_{k+1}]}^{\bpi}): k=0, 1, \ldots, K-1\}$ collected at grid points is then used by the discrete-time RL algorithm to learn an optimal policy on the grid.
From an optimal control perspective, this approach essentially optimizes $V(0,c;\boldsymbol{u})$ over the restricted class $\mathcal{U}_{\Delta t}$ of grid-adapted piecewise-constant controls, rather than over the full non-anticipative control class $\mathcal{U}$.
Therefore, even if the learning algorithm perfectly converges to the optimum within $\mathcal{U}_{\Delta t}$, this approach suffers from an inherent suboptimality gap. It is expected that as $\Delta t\to 0$,  $\sup_{\boldsymbol{u} \in \mathcal{U}_{\Delta t}} V(0, c; \boldsymbol{u})$ converges to the optimal value $V^*(0, c)$. 
While this is the reason why time discretization is so widely used in practice, to our knowledge, there is no guideline on how to choose $\Delta t$ considering the trade-off between performance and computational efficiency.
Moreover, there is no framework that universally guarantees the convergence and stability of discrete-time RL algorithms as $\Delta t\to 0$.
 
In contrast, within the continuous-time RL framework, we aim to design algorithms that operate without upfront time discretization, thereby directly targeting the optimal value $V^*(0,c)$ of the original control problem. As a result, it has an inherent structural advantage over the discretization-based methods. 
This advantage is expected to be particularly pronounced in highly non-stationary environments.
In such scenarios, while discretization-based methods require sufficiently fine time discretization to maintain performance---which leads to substantial computational costs---our continuous-time approach naturally adapts to rapid system dynamics without such a trade-off. 
The key observation behind our continuous-time approach is that, for event-driven intensity control problems, only actions at customer arrival times affect the state dynamics. This enables the firm to exactly implement a continuous-time policy $\bpi$. Specifically,
whenever a customer arrival occurs at time $\tau$, the firm observes the pre-jump state $X_{\tau-}^{\bpi}$ and samples an assortment $S_{\tau}^{\bpi}$ from the distribution $\bpi(\cdot \mid \tau, X_{\tau-}^{\bpi})$. 
Then, if the customer purchases a product from this assortment, the system undergoes a state jump and the firm receives the jump reward;
otherwise, the customer leaves without purchasing, the state remains unchanged and zero reward is received.
We record observations only at jump times, yielding the dataset $\mathcal{D}^{\bpi} = \{(\tau_l, X_{\tau_l}^{\bpi}, S_{\tau_l}^{\bpi},
r_{\tau_l}^{\bpi}): l =1, \ldots, L\}$, where $L$ denotes the number of jumps that occur in this sample trajectory of the state process, and $\tau_l$ is the time of the $l$-th jump.
Since the dataset $\mathcal{D}^{\bpi}$ captures the full state dynamics,
the update formulas of continuous-time algorithms---which are typically defined as integrals over the entire trajectory---can even be computed exactly from the dataset $\mathcal{D}^{\bpi}$.
The accurate computation of the update formulas ensures that the continuous-time approach preserves its structural advantage over the discretization-based methods.
Therefore, as we develop our continuous-time algorithms in the subsequent sections, we systematically exploit the nature of the state process to evaluate the update formulas exactly whenever possible.

In this study, we address the continuous-time RL task in \eqref{eq:overall-obj-RL} by decomposing it into two primary objectives.
The first objective is policy evaluation (PE): for a given policy $\bpi \in \Pi$, PE aims to employ a (numerical) procedure to determine $J(t, x; \bpi)$ as a function of $(t, x)$ without any knowledge of the customer arrival rate or the choice probabilities.
This is presented in Section \ref{sec:PE}.
The second objective is policy improvement, in particular, the well-established policy gradient (PG) method: 
we attempt to estimate the policy gradient $\nabla_{\phi} J(0, c; \bpi^{\phi})$ within a suitably chosen parametric family $\{\bpi^{\phi}: \phi \in \Phi\}$ to optimize the value function $J(0, c; \bpi^{\phi})$. 
We also require this method to operate solely on observable data, as well as the learned value function of $\bpi^{\phi}$, in the absence of the knowledge about the environment.
This is presented in Section \ref{sec:PG}.
It is important to point out that we do not aim to develop fundamentally new PE and PG algorithms, but rather to provide a principled adaptation of standard discrete-time RL methods to the continuous-time setting.
In Section \ref{sec:AC}, we will combine PE and PG in an iterative manner which leads to actor-critic algorithms.
Actor-critic algorithms with PE and PG components are a popular class of model-free algorithms in discrete-time RL, where ``actor'' and ``critic'' refer to the learned
policy and value function, respectively. 
The actor–critic approach employs an actor to improve the policy using PG, and a critic to evaluate the current policy through PE. It combines the strengths of actor-only and critic-only methods. See e.g. \citep{konda1999actor, sutton2018reinforcement} for more details. Hence, we adopt this approach for choice-based network revenue management which features large state and action spaces, and requires function approximations.
An alternative (model-free) approach is the deep $Q$-learning type method from the discrete-time RL literature \citep{mnih2015human}, which uses a deep neural network to approximate the $Q$-function (the state-action value) instead of approximating the value and policy functions separately. 
However, in a continuous-time setting, the standard $Q$-function can be degenerate \citep{jia2023q} and may not be applicable to our problem unless an upfront discretization is used.

\section{Policy Evaluation}\label{sec:PE}
The objective of PE is to estimate the value function $J(t, x; \bpi)$ of a given policy $\bpi$ from data collected at jump times, without the knowledge of the environment.
In practice, this is often achieved through function approximations, where $J(t, x; \bpi)$ is approximated by a parametric family of functions $\{J^{\theta}(t, x): \theta \in \Theta\}$. The particular form of $\{J^{\theta}(t, x): \theta \in \Theta\}$  for the network revenue management problem will be discussed later. 

Before delving into the proposed continuous-time PE methods, we first recall how PE is usually conducted in standard discrete-time RL by applying discretization-based RL methods to our problem, and then draw the connection to our continuous-time approach.
In discrete-time RL, Monte Carlo and Temporal Difference (TD) methods are two most common techniques for PE.
While the Monte Carlo methods are suited for offline learning, TD methods work both online and offline.
With the goal of evaluating the approximate value function $J_{\Delta t}(t_k, x; \bpi)$, the \textit{gradient Monte Carlo} method (see Chapter 9 in \citep{sutton2018reinforcement}) updates $\theta$ using 
\begin{align}\label{eq:discrete_time_MC}
    \theta \leftarrow \theta + \alpha_{\theta} \sum_{k=0}^{K-1} \nabla_{\theta} J_{\Delta t}^{\theta}(t_k, X_{t
    _k}^{\bpi}) \bigg(\sum_{k'=k}^{K - 1} \left(r_{(t_{k'}, t_{k'+1}]}^{\bpi} + \gamma \mathcal{H}(\bpi(\cdot \mid t_{k'}, {X}^{\bpi}_{t_{k'}})) \Delta t\right) -  J_{\Delta t}^{\theta}(t_k, X_{t
    _k}^{\bpi})\bigg),
\end{align}
where $\alpha_{\theta}$ is the learning rate for $\theta$ and 
the term $\mathcal{H}(\bpi(\cdot \mid t_k, {X}^{\bpi}_{t_k}))$ represents the exploration/entropy bonus.
To interpret \eqref{eq:discrete_time_MC} at a high level, note that the underlying loss function that the gradient Monte Carlo algorithm seeks to minimize can be formulated as
\begin{align}\label{eq:discrete_time_MC_loss_func}
    L_{\Delta t}(\theta) = \frac{1}{2}\E \left[\sum_{k=0}^{K-1} \bigg(\sum_{k'=k}^{K - 1} \left(r_{(t_{k'}, t_{k'+1}]}^{\bpi} + \gamma \mathcal{H}(\bpi(\cdot \mid t_{k'}, {X}^{\bpi}_{t_{k'}})) \Delta t\right) - J_{\Delta t}^{\theta}(t_k, X_{t
    _k}^{\bpi})\bigg)^2\right],
\end{align}
where the difference term captures the deviation of the estimated value function from the realized reward (and the exploration bonus) aggregated for each time step along the sample path.
In contrast to the Monte Carlo methods, which use the whole trajectory to update $\theta$, the TD methods, {when used online}, update the estimate of the value function at each discrete time point.
For instance, the online TD(0) algorithm (with function approximations) updates $\theta$ at every time step $k$ using the following formula:
\begin{align}\label{eq:discrete_time_TD}
    \theta \leftarrow \theta + \alpha_{\theta} \nabla_{\theta} J_{\Delta t}^{\theta}(t_k, X_{t
    _k}^{\bpi}) \Big(r_{(t_k, t_{k+1}]}^{\bpi} + \gamma \mathcal{H}(\bpi(\cdot \mid t_k, {X}^{\bpi}_{t_k})) \Delta t + J_{\Delta t}^{\theta}(t_{k+1}, X_{t
    _{k+1}}^{\bpi}) - J_{\Delta t}^{\theta}(t_k, X_{t
    _k}^{\bpi}) \Big),
\end{align}
{where the TD error characterizes the difference between the estimated value of the current state and the estimated value of the subsequent state plus the realized reward associated with the transition from $X_{t_{k}}^{\bpi}$ to $X_{t_{k+1}}^{\bpi}$.}
For more details, we refer the readers to
Chapter 9 in \cite{sutton2018reinforcement}.

%
In the next two subsections, we adapt the discrete-time gradient Monte Carlo and TD algorithms to the continuous-time setting: the former is used for offline learning, while the latter enables online learning.
We emphasize that the continuous-time approach we develop can be combined with popular RL frameworks, including tabular MDPs and value/policy function approximations.



\subsection{Monte Carlo Methods}
\label{sec:Loss_Function_MC}
In this subsection, we formulate a principled loss function $L(\theta)$ for our continuous-time Monte Carlo method.
With the loss function, a gradient-based update rule similar to \eqref{eq:discrete_time_MC} can be derived in continuous time.
An ideal loss function is the mean-squared error between the estimated value function $J^{\theta}(\cdot, \cdot)$ and the true value function $J(\cdot, \cdot; \bpi)$, which we refer to as the \textit{mean-squared value error} (MSVE):
\begin{align}\label{eq:MSVE}
    \operatorname{MSVE}(\theta) \coloneqq \frac{1}{2}\E \left[\int_{0}^{T} | J(t, X_t^{\bpi}; \bpi) - J^{\theta}(t, X_t^{\bpi}) |^{2} dt\right].
\end{align}
However, since the true value function $J(\cdot, \cdot; \bpi)$ is not known, minimizing MSVE does not directly produce a feasible algorithm.

Following the loss function \eqref{eq:discrete_time_MC_loss_func} designed for discrete-time MDPs, which tracks the error between the estimated value function and the realized reward along sample paths, we propose its continuous-time counterpart $L(\theta)$:
\begin{align}\label{eq:CT-Loss}
    L(\theta) = \frac{1}{2} \E \left[\int_{0}^{T} \bigg(\int_{(t, T]} p^\top d {N}^{\bpi}_{s} + \gamma \int_t^T  \mathcal{H}(\bpi(\cdot \mid s, {X}^{\bpi}_{s-})) ds - J^{\theta} ( t, {X}_t^{\bpi}) \bigg)^{2} dt \right].
\end{align}
It is worthwhile to note that our proposed loss function $L(\theta)$ also emerges when replacing $J(t, X_t^{\bpi}; \bpi)$ in 
\eqref{eq:MSVE}---the expected value-to-go when starting at time $t$ and state $X_t^{\bpi}$---with the reward along the trajectory afterwards.
Beyond this observation, we proceed to establish a certain equivalence between $L(\theta)$ and $\operatorname{MSVE}(\theta)$ from a theoretical standpoint.

{The next theorem states that minimizing the loss function $L(\theta)$ is equivalent to minimizing MSVE.
{Indeed, the difference between $L(\theta)$ and $\operatorname{MSVE}(\theta)$ is merely a constant term that does not vary with $\theta$.}
This demonstrates the theoretical justification of our proposed loss function $L(\theta)$ in the continuous-time setting.
\begin{theorem}\label{thm:argminML=argminMSVE}
    It holds that {$\argmin_{\theta} L(\theta) = \argmin_{\theta} \operatorname{MSVE} (\theta)$}.
\end{theorem}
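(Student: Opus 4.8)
The plan is to show that $L(\theta)$ and $\operatorname{MSVE}(\theta)$ differ by a $\theta$-independent constant. Write $G_t \coloneqq \int_{(t,T]} p^\top d N^{\bpi}_s + \gamma \int_t^T \mathcal{H}(\bpi(\cdot \mid s, X^{\bpi}_{s-}))\,ds$ for the realized reward-plus-entropy-to-go from time $t$, so that, by \eqref{eq:value_func_entropy_data}, $\mathbb{E}^{\bar{\mathbb{P}}}[\,G_t \mid \mathcal{F}_t\,] = J(t, X_t^{\bpi}; \bpi)$ (using the Markov property, conditioning on $X_t^{\bpi}$ suffices). Then inside the time integral defining $L(\theta)$ I would expand the square $(G_t - J^\theta(t,X_t^{\bpi}))^2 = (G_t - J(t,X_t^{\bpi};\bpi))^2 + 2(G_t - J(t,X_t^{\bpi};\bpi))(J(t,X_t^{\bpi};\bpi) - J^\theta(t,X_t^{\bpi})) + (J(t,X_t^{\bpi};\bpi) - J^\theta(t,X_t^{\bpi}))^2$.

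First I would handle the cross term: take $\mathbb{E}^{\bar{\mathbb{P}}}$ and condition on $\mathcal{F}_t$ (equivalently on $X_t^{\bpi}$). Since $J(t,X_t^{\bpi};\bpi)$ and $J^\theta(t,X_t^{\bpi})$ are $\mathcal{F}_t$-measurable, they pull out, and $\mathbb{E}^{\bar{\mathbb{P}}}[G_t - J(t,X_t^{\bpi};\bpi)\mid \mathcal{F}_t] = 0$ by the tower property, so the cross term vanishes after taking expectation — for every $t$, hence after integrating in $t$ as well. The third term integrates (with the factor $1/2$ and the expectation) to exactly $\operatorname{MSVE}(\theta)$. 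The first term, $C \coloneqq \tfrac12 \mathbb{E}^{\bar{\mathbb{P}}}\big[\int_0^T (G_t - J(t,X_t^{\bpi};\bpi))^2\,dt\big]$, does not involve $\theta$ at all. Therefore $L(\theta) = \operatorname{MSVE}(\theta) + C$, and since $C$ is a constant, the two functions share the same minimizers, i.e. $\argmin_\theta L(\theta) = \argmin_\theta \operatorname{MSVE}(\theta)$.

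The main technical care is justifying the interchange of expectation and the $dt$-integral (Fubini/Tonelli) and the conditioning step: one needs $G_t$ to be integrable and, more precisely, that $\mathbb{E}^{\bar{\mathbb{P}}}[G_t\mid \mathcal{F}_t] = \mathbb{E}^{\bar{\mathbb{P}}}[G_t \mid X_t^{\bpi}] = J(t,X_t^{\bpi};\bpi)$ holds $\bar{\mathbb{P}}$-a.s. for a.e.\ $t$. Integrability follows since the number of arrivals on $[0,T]$ is dominated by a Poisson$(\lambda T)$ variable (so $\int_{(t,T]} p^\top dN^{\bpi}_s$ has finite second moment uniformly in $t$) and the entropy term is bounded by $\gamma T \log|\mathcal{A}|$; these also give finiteness of $C$ and legitimize Fubini. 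The Markov property of $\{(t, X_t^{\bpi})\}$ under $\bar{\mathbb{P}}$ — which is what lets us replace conditioning on $\mathcal{F}_t$ by conditioning on $X_t^{\bpi}$, consistent with the definition \eqref{eq:value_func_entropy_data} — is the one place where the structure of the problem (rather than pure measure theory) enters, so I would state it explicitly, possibly referencing the reformulation in Appendix~\ref{app:rl-reformulation}. I do not expect any genuine obstacle; the result is essentially the standard ``bias–variance'' decomposition of the Monte Carlo target, adapted to continuous time.
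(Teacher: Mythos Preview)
Your proof is correct, and indeed more direct than the paper's. You go straight to the bias--variance decomposition $G_t - J^\theta = (G_t - J) + (J - J^\theta)$ and kill the cross term in one conditioning step. The paper instead routes through its martingale machinery: it first splits off $U_t \coloneqq \int_{(t,T]} p^\top dN_s^{\bpi} - \int_t^T \sum_{S} r(S)\bpi(S\mid s, X_{s-}^{\bpi})\,ds$ (the point-process integral minus its compensator) to reduce $L(\theta)$ to $\mathbb{E}^{\bar{\mathbb{P}}}\big[\int_0^T (M_T - M_t^\theta)^2\,dt\big]$, and then invokes Theorem~\ref{thm:martingale} --- the martingality of $\tilde{M}_t$ on the exploratory probability space --- to decompose $M_T - M_t^\theta = (\tilde M_T - \tilde M_t) + (\tilde M_t - \tilde M_t^\theta)$ and kill that cross term. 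Your argument buys simplicity and avoids the detour through the exploratory process $\tilde X^{\bpi}$ altogether; the paper's route, while longer, deliberately showcases Theorem~\ref{thm:martingale}, which is the common engine behind both the Monte Carlo and the TD developments in Section~\ref{ssec:TD}. Functionally the two are equivalent: your identity $\mathbb{E}^{\bar{\mathbb{P}}}[G_t \mid \mathcal{F}_t] = J(t, X_t^{\bpi}; \bpi)$ is precisely the content of the martingale statement $\tilde M_t = \mathbb{E}^{\mathbb{P}}[\tilde M_T \mid \mathcal{F}_t^{\tilde X^{\bpi}}]$, just phrased on the sample-state side rather than the exploratory side.
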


The proof of Theorem \ref{thm:argminML=argminMSVE} is somewhat delicate, and it hinges on the martingale property associated with $J(t, \tilde{X}_t^{\bpi}; \bpi)$, where $\{\tilde X^{\bpi}_t: t \in [0, T] \}$ is the exploratory dynamics introduced in \eqref{eq:value_func_entropy_exploratory} to facilitate the theoretical analysis.
To this end, we establish Proposition~\ref{prop:martingale}.
It is worth noting that Proposition~\ref{prop:martingale} delivers a stronger result than is needed for the proof: it provides a martingale characterization of the value function. This characterization lays the theoretical foundation for the continuous-time TD methods discussed in the next section.
The proofs of Proposition~\ref{prop:martingale} and Theorem \ref{thm:argminML=argminMSVE} are deferred to Appendix \ref{app:pf_statements}. 

\begin{proposition}\label{prop:martingale}
A function $v \in C^{1, 0}([0, T] \times \mathcal{X})$ coincides with the value function under policy $\bpi$, i.e., $v(t, x) = J(t, x; \bpi)$ for all $(t, x) \in [0, T] \times \mathcal{X}$, if and only if it satisfies $v(T, x) = 0$ for all $x \in \mathcal{X}$, and the associated process $\{\tilde{M}_t^{v}: t \in [0, T] \}$ is a square-integrable  $\{\mathcal{F}_t^{\tilde{X}^{\bpi}}\}_{t \geq 0}$-martingale, where
\begin{align*}
    \tilde{M}_t^{v} \coloneqq v (t, \tilde{X}_t^{\bpi}) + \int_0^t \bigg\{\sum_{S \in \mathcal{A}(\tilde{X}^{\bpi}_{s-})} r(S)\bpi (S \mid s, \tilde{X}^{\bpi}_{s-}) + \gamma \mathcal{H}(\bpi (\cdot \mid s, \tilde{X}_{s-}^{\bpi}))\bigg\} ds.
\end{align*}
\end{proposition}
\paragraph{An adaptive discretization procedure.}
We note that the computation of the continuous-time loss function in \eqref{eq:CT-Loss} involves integrals along the state trajectory. When no additional structure in the state process can be exploited, a common practice is to discretize the time interval in advance and use data collected at the resulting grid points to approximate these integrals.
However, for intensity control problems, the piecewise constant nature of the state process $X_t^{\bpi}$ enables us
to evaluate these integrals from data collected at the jump times, 
which substantially reduces and may even eliminate the approximation error inherent in a pre-specified discretization scheme.
More precisely, given the dataset $\mathcal{D}^{\bpi} = \{(\tau_l, X_{\tau_l}^{\bpi}, S_{\tau_l}^{\bpi},
r_{\tau_l}^{\bpi} ): l =1, \ldots, L\}$ collected at the jump times along a sample trajectory, 
integrals that only accumulate at jump times can be computed exactly as finite sums over the jump times and hence incur no approximation error. For instance, $\int_{(t, T]} p^\top d {N}^{\bpi}_{s} = \sum_{\tau_l \in (t, T]} r_{\tau_l}^{\bpi}$.
Moreover, for integrals of the form $\int_0^T z(t, X_t^{\bpi}) dt$, the piecewise-constant nature of $X_t^{\bpi}$ implies that
\begin{align}\label{eq:natural_discretization}
    \int_0^T z(t, X_t^{\bpi}) dt = \sum_{l = 0}^{L} \int_{\tau_l}^{\tau_{l+1}} z(t, X_{\tau_l}^{\bpi}) dt.
\end{align}
The expression in \eqref{eq:natural_discretization} essentially 
discretizes the time horizon for each trajectory without any discretization error.
For each $l$, since $X_{\tau_l}^{\bpi}$ is a constant, $z(t, X_{\tau_l}^{\bpi})$ reduces to a univariate function of $t$.
If this function admits a closed-form antiderivative with respect to $t$, such as a polynomial, it allows for exact evaluation of the integral $\int_{\tau_l}^{\tau_{l+1}} z(t, X_{\tau_l}^{\bpi}) dt$.
In such cases, we can completely avoid the numerical procedure and compute the value analytically.
If $z(t, X_{\tau_l}^{\bpi})$ is not integrable analytically,
we instead approximate each integral $\int_{\tau_l}^{\tau_{l+1}} z(t, X_{\tau_l}^{\bpi}) dt$ separately with a numerical integration method.
Even in this case, our approach eliminates the approximation error in the state trajectory that is inherent in a pre-specified discretization scheme for integrals of time- and state-dependent functions.
This advantage becomes particularly pronounced in environments with bursty customer arrivals, where a coarse pre-specified time grid would miss critical state jumps, while a fine one would incur a high computational cost.
We refer to the above integration scheme based on jump times as an adaptive discretization procedure, which yields a strictly more accurate evaluation of continuous-time integrals than pre-specified discretization and will be used repeatedly in the subsequent sections.

Denote $D(t_1, t_2, x; \theta) \coloneqq \int_{t_1}^{t_2} [J^{\theta}(s, x)]^2 ds$, $b(t_1, t_2, x; \theta) \coloneqq \int_{t_1}^{t_2} J^{\theta}(s, x) ds$, and $E(t_1, t_2, x; v, \bpi) \coloneqq \int_{t_1}^{t_2} v(s) \mathcal{H}(\bpi ( \cdot \mid s, x)) ds$. Let $\boldsymbol{1}$ denote the constant function equal to 1.
By applying the proposed adaptive discretization procedure to the continuous-time loss function in \eqref{eq:CT-Loss}, we derive the update rule for the value parameters $\theta$ as follows: 
\begin{align}
    \theta \leftarrow \theta - \alpha_{\theta} \nabla_{\theta} \Bigg\{\sum_{l = 0}^{L} \bigg( &\frac{1}{2} D(\tau_l, \tau_{l+1}, X_{\tau_l}^{\bpi}; \theta) - b(\tau_{l}, \tau_{l+1}, X_{\tau_l}^{\bpi}; \theta) \cdot \sum_{l' = l + 1}^{L} [ r_{\tau_{l'}}^{\bpi} + \gamma E(\tau_{l'}, \tau_{l'+1}, X_{\tau_{l'}}^{\bpi}; \boldsymbol{1}, \bpi^{\phi})] \notag \\
    &- \gamma E(\tau_{l}, \tau_{l+1}, X_{\tau_l}^{\bpi}; b(\tau_{l}, \cdot, X_{\tau_l}^{\bpi}; \theta), \bpi) \bigg)\Bigg\},
    \label{eq:MC-general-update-rule}
\end{align}
where $\alpha_{\theta}$ is the learning rate for $\theta$.
We note that in \eqref{eq:MC-general-update-rule}, the summations are always taken at the jump points whereas in the discrete-time Monte Carlo update rule \eqref{eq:discrete_time_MC}, they are taken at grid points.
For a given sample path, the jumps are typically much sparser than the refined time grid required by the discretization-based method to achieve comparable performance.
When the functions involved in \eqref{eq:MC-general-update-rule} admit closed-form expressions, our approach achieves both improved computational efficiency and higher accuracy.
Even when closed-form expressions are not available and numerical discretization is required, our approach remains more accurate under comparable discretization levels, as it avoids the approximation error in the state trajectory that is inherent to a pre-specified discretization scheme.

Having discussed the general idea, we turn our attention to the network revenue management problem, which exhibits unique characteristics. 
The practice of approximating the optimal value function using a linear combination of basis functions is well-documented in the literature (see, e.g., \citealt{zhang2009approximate}, \citealt{adelman2007dynamic}, \citealt{ma2020approximation}).
Inspired by this, we consider the linear parametrization $J^{\theta}(t, x) \coloneqq \theta^\top \varphi(t, x)$, where $\theta \in \mathbb{R}^{W}$ and $\varphi(t, x) \coloneqq (\varphi_1(t, x), \ldots, \varphi_W(t, x))^\top$ is the vector of basis functions.
As a result, the optimization problem $\arg\min_\theta L(\theta)$ can be considerably simplified.
In particular, we show next that we do not have to resort to the gradient method \eqref{eq:MC-general-update-rule} and can compute the optimal solution $\theta^*$ explicitly with Monte Carlo.

Denote $M_{\varphi,\varphi} \coloneqq \int_{0}^{T} \varphi(t, X_t^{\bpi}) \varphi(t, X_t^{\bpi})^{\top} dt$, $h(t, N^{\bpi}_{(t, T]}) \coloneqq \int_{(t, T]} p^\top d{N}^{\bpi}_{s}  + \gamma \int_{t}^{T} \mathcal{H}(\bpi(\cdot \mid s, {X}^{\bpi}_{s-})) ds$, $b_{\varphi,h} \coloneqq \int_{0}^{T} \varphi(t, X_t^{\bpi}) h(t, N^{\bpi}_{(t, T]}) dt $, and $c_{h,h} \coloneqq \E \int_{0}^{T} [h(t, N^{\bpi}_{(t, T]})]^2 dt $.
By expanding \eqref{eq:CT-Loss}, we obtain
\begin{align*}
    L(\theta)
    = \frac{1}{2}\theta^{\top}  \E[M_{\varphi, \varphi}] \theta - \theta^{\top} \E[b_{\varphi,h}]  + \frac{1}{2} \E[c_{h,h}].
\end{align*}
It is easy to see that the matrix
$\E[M_{\varphi, \varphi}]$ is positive semi-definite.
Hence, the optimization problem $\arg\min_\theta L(\theta)$ reduces to a simple unconstrained quadratic programming problem.
Applying the associated theory of unconstrained quadratic programming, we can assert the existence of a minimizer for $L(\theta)$, though it may not be unique. One such minimizer can be computed as \begin{align}\label{eq:PE_MC}
    \theta^* = \E[M_{\varphi,\varphi}]^{(-1)} \E[b_{\varphi, h}],
\end{align}
where $\E[M_{\varphi,\varphi}]^{(-1)}$ represents the Moore–Penrose inverse of matrix $\E[M_{\varphi,\varphi}]$.

In the implementation, the expectations in \eqref{eq:PE_MC}  are estimated using Monte Carlo across multiple sample trajectories. Given a sample trajectory $\mathcal{D}^{\bpi} = \{(\tau_l, X_{\tau_l}^{\bpi}, S_{\tau_l}^{\bpi},
r_{\tau_l}^{\bpi} ): l =1, \ldots, L\}$, we compute $M_{\varphi, \varphi}$ and $b_{\varphi, h}$ using the adaptive discretization procedure. Denote $\bar D(t_1, t_2, x) \coloneqq \int_{t_1}^{t_2} \varphi(s, x) \varphi(s, x)^{\top} ds$ and $\bar b(t_1, t_2, x) \coloneqq \int_{t_1}^{t_2} \varphi(s, x) ds$. Then we have
\begin{align}\label{eq:integral_D}
    M_{\varphi,\varphi} = 
    \sum_{l = 0}^{L} \bar D(\tau_{l}, \tau_{l+1}, X_{\tau_{l}}^{\bpi}),
\end{align}
and
\begin{equation}\label{eq:integral_MC_right_vector}
        b_{\varphi, h} = \sum_{l = 0}^{L} \left\{
        \bar b(\tau_{l}, \tau_{l+1}, X_{\tau_{l}}^{\bpi}) \cdot  \sum_{l' = l + 1}^{L} [ r_{\tau_{l'}}^{\bpi} + \gamma E(\tau_{l'}, \tau_{l'+1}, X_{\tau_{l'}}^{\bpi}; \boldsymbol{1}, \bpi)] + \gamma E(\tau_l, \tau_{l+1}, X_{\tau_{l}}^{\bpi}; \bar b(\tau_l, \cdot, X_{\tau_{l}}^{\bpi}), \bpi)\right\}.
\end{equation}
As will be detailed in Section \ref{sec:Numerical_Experiments}, 
the functions $\bar D$ and $\bar b$ admit closed-form expressions under our specific choice of polynomial basis functions $\varphi(t,x)$.
For the function $E$, which depends on the policy parametrization (see \eqref{eq:simplified_parametric_policy} and \eqref{eq:RO_parametric_policy}) and generally lacks an analytical expression, we approximate its values for each specific tuple $(t_1, t_2, x)$ via numerical integration.

To conclude this subsection, we have introduced a continuous-time loss function for the Monte Carlo method and established its theoretical justification. 
Furthermore, we have explored the special case of linear function approximations, where the optimal parameter that minimizes our proposed loss function can be explicitly expressed in a closed form. 
Concentrating on the closed form that involves integrals along sample trajectories, we have proposed an adaptive discretization procedure to compute the integrals, significantly reducing or even completely avoiding discretization errors.
With all the techniques in place, our Monte Carlo algorithm
enables an efficient implementation while maintaining high accuracy. This accuracy, in turn, preserves the structural advantage of our continuous-time RL approach over the discretization-based RL methods.

\subsection{TD Methods Based on Martingale Orthogonality Conditions}\label{ssec:TD} 
Given that the continuous-time Monte Carlo PE requires the entire sample trajectory over $[0, T]$, this approach is inherently offline and presents challenges when adapting for online use.
In this section, we propose
TD methods for the event-driven intensity control problem, which are suitable for use in both online and offline learning settings.
It is important to emphasize that while TD methods are standard in the discrete-time RL literature, they are much less studied for continuous-time RL problems. In particular, it is not immediately clear how one can develop TD methods for the finite-horizon event-driven intensity control problem under consideration.
To motivate our continuous-time TD method, we first review the discrete-time TD(0) algorithm in \eqref{eq:discrete_time_TD} and the logic behind it.
In fact, the discrete TD algorithms are rooted in the Bellman equation.
A key observation is that the Bellman equation holds if and only if for any bounded test function $\eta = (\eta_0, \ldots, \eta_{K-1})^{\top}$ adapted to the natural filtration, the following equation holds:
\begin{align}\label{eq:discrete_time_orthogonality_characterization}
    \E \left[ \sum_{k = 0}^{K-1} \eta_k \big\{ r_{(t_k, t_{k+1}]}^{\bpi} + \gamma \mathcal{H}(\bpi(\cdot \mid t_k, X_{t_k}^{\bpi})) \Delta t + J_{\Delta t}(t_{k+1}, X_{t_{k+1}}^{\bpi}; \bpi) - J_{\Delta t}(t_k, X_{t_k}^{\bpi}; \bpi)\big\}\right] = 0.
\end{align}
This characterization of the value function naturally leads to a practical learning procedure: by substituting $J_{\Delta t}$ with its parametric approximation 
$J_{\Delta t}^{\theta}$ and selecting an appropriate test function $\eta$, the system of equations in \eqref{eq:discrete_time_orthogonality_characterization} can be solved for $\theta$ via stochastic approximation  \citep{robbins1951stochastic}.
In particular, taking $\eta_k = \nabla_{\theta}  J_{\Delta t}^{\theta}(t_k, X_{t_k}^{\bpi})$ recovers the discrete-time TD(0) algorithm, while taking $\eta_k = \sum_{k' = 0}^{k} \lambda^{t_k - t_{k'}} \nabla_{\theta}  J_{\Delta t}^{\theta}(t_{k'}, X_{t_{k'}}^{\bpi})$ yields the TD($\lambda$) algorithm.

Guided by the logic underlying the discrete-time TD algorithms,
we establish in Theorem~\ref{thm:martingale_orthogonality_condition} a martingale orthogonality condition as a continuous-time analog of condition \eqref{eq:discrete_time_orthogonality_characterization}.
This condition is derived from the martingale characterization of the value function $J(\cdot, \cdot; \bpi)$ established in Proposition~\ref{prop:martingale}, and its proof is deferred to Appendix~\ref{app:pf_statements}.
\begin{theorem}\label{thm:martingale_orthogonality_condition}
    A function $v \in C^{1, 0}([0, T] \times \mathcal{X})$ coincides with the value function under policy $\bpi$, i.e., $v(t, x) = J(t, x; \bpi)$ for all $(t, x) \in [0, T] \times \mathcal{X}$, if and only if it satisfies $v(T, x) = 0$ for all $x \in \mathcal{X}$, and the following martingale orthogonality condition holds for any bounded process $\xi$ with $\xi_t \in \mathcal{F}_{t-}^{X^{\bpi}}$ for all $t \in [0, T]$:
    \begin{equation}\label{eq:orthogonality_condition_sample_alg}
        \E \left[ \int_{0}^{T} \xi_t \big\{ d v ( t, X_t^{\bpi}) + p^\top d N_t^{\bpi} + \gamma \mathcal{H}(\bpi(\cdot \mid t, X_{t-}^{\bpi})) dt \big\} \right] = 0.
    \end{equation}
\end{theorem}

The martingale orthogonality condition ~\eqref{eq:orthogonality_condition_sample_alg} serves as the theoretical foundation of our continuous-time TD method for event-driven intensity control problems. By specifying a parametric family $\{J^{\theta}(t, x): \theta \in \Theta\}$ to approximate the value function and the corresponding test function $\xi$, we can solve the equation~\eqref{eq:orthogonality_condition_sample_alg} via stochastic approximation for $\theta$, which leads to a practical learning algorithm. 

As a special case, we examine the TD(0) algorithm with linear function approximation to illustrate how the integrals in \eqref{eq:orthogonality_condition_sample_alg} can be computed with high accuracy using data collected at jump times together with the adaptive discretization procedure. 
We take $\xi_t = \nabla_{\theta} J^{\theta}(t, X_{t-}^{\bpi})$ 
and set $J^{\theta}(t, x) = \theta^{\top} \varphi(t, x)$, where $\varphi(t, x) = (\varphi_1(t, x), \ldots, \varphi_W(t, x))^{\top}$ is the vector of basis functions. Then there is a unique solution
to the system of equations in \eqref{eq:orthogonality_condition_sample_alg} under mild conditions. 
Indeed, let $\tilde{M}_{\varphi, \varphi} = \int_0^T \varphi(t, X_{t-}^{\bpi}) d \varphi(t, X_t^{\bpi})^{\top}$ and $\tilde{b}_{\varphi,r} = \int_0^T \varphi(t, X_{t-}^{\bpi}) \big\{ p^{\top} d N_t^{\bpi} + \gamma \mathcal{H}(\bpi(\cdot \mid t, X_{t-}^{\bpi})) dt \big\}$, the solution is given by
\begin{align}\label{eq:PE_TD_0}
    \theta^* = \E[\tilde{M}_{\varphi, \varphi}]^{-1} \E [\tilde{b}_{\varphi,r}],
\end{align}
assuming the existence of the inverse.

In practice, the expectations in \eqref{eq:PE_TD_0} are typically approximated by the empirical means over a batch of sample trajectories.
Given a sample trajectory $\mathcal{D}^{\bpi} = \{(\tau_l, X_{\tau_l}^{\bpi}, S_{\tau_l}^{\bpi},
r_{\tau_l}^{\bpi} ): l =1, \ldots, L\}$ under policy $\bpi$, we apply the adaptive discretization procedure detailed in Section~\ref{sec:Loss_Function_MC} to compute the two integrals $\tilde{M}_{\varphi, \varphi}$ and $\tilde{b}_{\varphi,r}$ in \eqref{eq:PE_TD_0}.
First, we denote $F(t_1, t_2, x) \coloneqq \int_{t_1}^{t_2} \varphi(s, x) \frac{\partial \varphi}{\partial s}(s, x) ds$ and evaluate $\tilde{M}_{\varphi, \varphi}$ as
\begin{equation}\label{eq:integral_F}
     \tilde{M}_{\varphi, \varphi} = \sum_{l = 1}^{L} \varphi(\tau_l, X_{\tau_{l-1}}^{\bpi}) [\varphi(\tau_l, X_{\tau_l}^{\bpi}) - \varphi(\tau_l, X_{\tau_{l-1}}^{\bpi})] +
     \sum_{l = 0}^{L} F(\tau_{l}, \tau_{l+1}, X_{\tau_l}^{\bpi}),
\end{equation}
where It\^{o}'s formula has been applied to $d\varphi(t, X_t^{\bpi})$. 
Next, with the function $E(t_1, t_2, x; v, \pi)$ defined in Section~\ref{sec:Loss_Function_MC},
we compute $\tilde{b}_{\varphi,r}$ as
\begin{equation}\label{eq:integral_TD_right_vector}
    \tilde{b}_{\varphi,r} = \sum_{l = 1}^{L} \varphi(\tau_l, X_{\tau_{l-1}}^{\bpi}) r_{\tau_l}^{\bpi} + \gamma \sum_{l = 0}^{L} E(\tau_l, \tau_{l+1}, X_{\tau_l}^{\bpi}; \varphi(\cdot, X_{\tau_l}^{\bpi}), \bpi).
\end{equation}
The accurate integral computation via the adaptive discretization procedure helps preserve the structural advantage inherent in our continuous-time approach.

\section{Policy Gradient}\label{sec:PG}
For an admissible policy, once we have obtained an estimate of its value function from the PE step, the next step is to improve the policy.
To this end, this section develops a practical continuous-time policy gradient (PG) method for event-driven intensity-control problems. 

To formalize this method, we consider a parametric family of admissible policies $\{\bpi^{\phi}(\cdot \mid \cdot, \cdot): \phi \in \Phi\}$ that satisfies Assumption~\ref{apt:policy_phi}. Our objective is to determine the optimal policy parameter by solving $\argmax_{\phi \in \Phi} J( 0, c; \bpi^{\phi} )$, 
which naturally directs our attention to the calculation of the gradient $\nabla_{\phi} J( 0, c; \bpi^{\phi})$.
\begin{assumption}\label{apt:policy_phi}
For all $(t, x, S) \in [0, T] \times \mathcal{X} \times \mathcal{A}$, the mapping $\phi \mapsto \bpi^{\phi}(S \mid t, x)$ is smooth on $\Phi$. Moreover, 
for all $(x, S, \phi) \in \mathcal{X} \times \mathcal{A} \times \Phi$, the mapping $t \mapsto \nabla_\phi \bpi^{\phi}(S \mid t, x)$ is continuous on $[0, T]$.
\end{assumption}

The core of developing an implementable PG algorithm lies in deriving a computable representation for the policy gradient $\nabla_{\phi} J( 0, c; \bpi^{\phi})$. Below, we explain the main idea behind this derivation in our continuous-time setting. 
Note that Lemma~\ref{lem:ode_value_func} provides a  differential-equation characterization of the value function $J(t, x; \bpi^{\phi})$ in terms of a system of differential equations.
By differentiating this system on both sides with respect to $\phi$, we obtain a new system of equations satisfied by $g(t, x; \phi) \coloneqq \nabla_{\phi} J( t, x; \bpi^{\phi})$:
\begin{equation} \label{eq:PG-ode}
    \left\{ 
    \begin{aligned}
        & \frac{\partial g}{\partial t}(t, x; \phi) + R(t, x; \phi) + \sum_{S \in \mathcal{A}(x)} \bigg(\sum_{y \in \mathcal{X}} g(t, y; \phi) q(y \mid t, x, S) \bigg) \bpi^{\phi} (S \mid t, x )  = 0, \quad (t, x) \in [0, T) \times \mathcal{X} \\
        & g(T, x; \phi) = 0, \quad x \in \mathcal{X},
    \end{aligned}
    \right.
\end{equation}
where an auxiliary reward function $R(t, x; \phi)$ is defined as 
\begin{align*}
    R(t, x; \phi) 
    & \coloneqq \sum_{S \in \mathcal{A}(x)} H(t, x, S, J(\cdot, \cdot; \bpi^{\phi})) \nabla_{\phi} \bpi^{\phi} (S \mid t, x) + \gamma\nabla_{\phi} \mathcal{H}(\bpi^{\phi}(\cdot \mid t, x)).
\end{align*}
A key observation is that the system \eqref{eq:PG-ode} coincides with the system \eqref{eq:ode_value_func} in Lemma~\ref{lem:ode_value_func} when the averaged reward function $\sum_{S \in \mathcal{A}(x)}r(S) \bpi(S \mid t, x)$ in \eqref{eq:ode_value_func} is replaced by the auxiliary reward function $R(t, x; \phi)$.
Using a proof similar to that of Lemma~\ref{lem:ode_value_func}, we obtain a representation of the policy gradient in terms of $R(t, x; \phi)$: 
\begin{align}\label{eq:representation_of_gradient_Hamiltonian}
    \nabla_{\phi} J( 0, c; \bpi^{\phi}) 
    = \E \left[ \int_{0}^{T} R(t, {X}^{\bpi^{\phi}}_{t-}; \phi ) dt \mid {X}_0^{\bpi^{\phi}} = c\right].
\end{align}  
However, it is important to note that the auxiliary reward function $R(t, x; \phi)$ incorporates the Hamiltonian $H$, which cannot be directly observed nor calculated in the absence of knowledge about environmental parameters. 
At this stage, the representation in \eqref{eq:representation_of_gradient_Hamiltonian} 
does not yet constitute an implementable policy gradient formula.
We further exploit the 
martingale property of  
the compensated Poisson arrival process,
$\{N_t^{\lambda} - \lambda t: t\in [0, T]\}$, to convert the Hamiltonian term in \eqref{eq:representation_of_gradient_Hamiltonian} into a computable form.
This leads to an  implementable policy gradient formula stated in Theorem~\ref{thm:PG}. The rigorous derivation is deferred to Appendix~\ref{app:pf_statements}.

\begin{theorem}\label{thm:PG}
    Given an admissible parameterized policy $\bpi^{\phi}$ satisfying Assumption~\ref{apt:policy_phi}, the policy gradient $\nabla_{\phi} J (0, c; \bpi^{\phi})$ admits the following representation: 
    \begin{align}
        \nabla_{\phi} J (0, c; \bpi^{\phi}) =
        \E \Bigg[ {}& \int_{(0,T]}  \sum_{j=1}^{n} \nabla_{\phi} \log \bpi^{\phi} ( S_t^{\bpi^{\phi}} \mid t, X_{t-}^{\bpi^{\phi}}) [J(t, X_{t-}^{\bpi^{\phi}} - A^j; \bpi^{\phi} ) - J(t, X_{t-}^{\bpi^{\phi}}; \bpi^{\phi}) + p_j] d N_{j,t}^{\bpi^{\phi}} \notag \\
        & + \gamma \int_{0}^{T} \nabla_{\phi} \mathcal{H}(\bpi^{\phi} ( \cdot \mid t, X_{t-}^{\bpi^{\phi}})) dt \mid X_0^{\bpi^{\phi}}=c\Bigg] .  \label{eq:PG-formula}
    \end{align}
\end{theorem}

Theorem~\ref{thm:PG} extends the policy gradient formula for controlled diffusion processes with continuous states in \cite{jia2022policygradient} to the event-driven intensity control problem with discrete states.
While both formulas are established in continuous time, the fundamental difference lies in whether upfront time discretization is needed for RL implementation in the continuous-time system.
For diffusion-based control problems, 
the state dynamics evolve continuously, and the control acts upon the state at every infinitesimal instant.
Therefore, one is forced to discretize the time horizon and approximately implement the continuous-time policy by sampling an action at each grid point and holding it constant over the subsequent interval.
The data collected at these grid points are then used to approximate the continuous-time integral in their policy gradient formula.
This procedure inevitably introduces discretization errors.
In contrast, 
we benefit from the inherent characteristics of event-driven intensity problems: only actions taken at the arrival times affect the state dynamics.
As a result, we can implement a continuous-time randomized policy exactly by sampling an action from the policy at each arrival time, and then collect data at the state-jump times to evaluate the policy gradient formula in Theorem~\ref{thm:PG} with high accuracy.

To be specific, given a sample trajectory $\mathcal{D}^{\bpi^{\phi}} = \{(\tau_l, X_{\tau_l}^{\bpi^{\phi}}, S_{\tau_l}^{\bpi^{\phi}},
r_{\tau_l}^{\bpi^{\phi}} ): l =1, \ldots, L\}$ generated under the current policy $\bpi^{\phi}$, suppose that the PE step has produced an estimate $J^{\theta^*}(t, x)$ of the value function $J(t, x; \bpi^{\phi})$.
We then use $\mathcal{D}^{\bpi^{\phi}}$ and $J^{\theta^*}(t, x)$ to construct a single-sample estimate of the policy gradient and derive the update rule for the parameters $\phi$.
Note that $N_{j,t}^{\bpi^{\phi}}$ is a counting process where its increment $d N_{j,t}^{\bpi^{\phi}} = 1$ if a product $j$ is sold at time $t$, and $0$ otherwise, and the term $J(t, X_{t-}^{\bpi^{\phi}} - A^j; \bpi^{\phi} ) - J(t, X_{t-}^{\bpi^{\phi}}; \bpi^{\phi}) + p_j$ captures the shadow price of product $j$.
With the value function $J(t, x; \bpi^{\phi})$ replaced by its estimate $J^{\theta^*}(t, x)$, we apply the adaptive discretization procedure to compute the integrals in \eqref{eq:PG-formula}.
This yields the following update rule for policy parameters $\phi$:
\begin{equation}\label{eq:PG-update-rule}
\begin{aligned}
    \phi \gets \phi + \alpha_{\phi} \nabla_{\phi} \Bigg\{&\sum_{l=1}^L \log \bpi^{\phi} ( S_{\tau_l}^{\bpi^{\phi}} \mid \tau_l, X_{\tau_{l-1}}^{\bpi^{\phi}}) [J^{\theta^*}(\tau_l, X_{\tau_l}^{\bpi^\phi}) - J^{\theta^*}(\tau_l, X_{\tau_{l-1}}^{\bpi^{\phi}}) + r_{\tau_l}^{\bpi^{\phi}}] \\
    &+ \gamma \sum_{l = 0}^{L} E(\tau_l, \tau_{l+1}, X_{\tau_l}^{\bpi^{\phi}}; \boldsymbol{1}, \bpi^{\phi}) \Bigg\},
\end{aligned}
\end{equation}
where $\alpha_{\phi}$ is the learning rate for $\phi$. 
In practice, while the first sum in \eqref{eq:PG-update-rule} incurs no approximation error, the term $E(\tau_l, \tau_{l+1}, X_{\tau_l}^{\bpi^{\phi}}; \boldsymbol{1}, \bpi^{\phi})$ is typically approximated over each interval $[\tau_l, \tau_{l+1}]$ via numerical integration, due to the absence of a closed-form expression for $E(t_1, t_2, x; \boldsymbol{1}, \bpi^{\phi})$ under the chosen policy parametrization $\bpi^\phi$.
Nevertheless, as clarified in Section~\ref{sec:Loss_Function_MC}, our adaptive approach eliminates the additional state discretization error associated with a pre-specified discretization scheme, which leads to a more accurate approximation of the integral.

Overall, our approach utilizes the data collected at the jump times to evaluate the continuous-time policy gradient formula with high accuracy. 
This consistent accuracy, maintained throughout both PE and PG steps ensures the structural advantage of our continuous-time RL approach relative to the discretization-based RL methods. 
Ultimately, the structural advantage translates into superior empirical performance, which will be illustrated in Section~\ref{sec:CT-vs-DT}.

\section{Actor-Critic Algorithms}\label{sec:AC}
Combining the PE and PG modules in Sections \ref{sec:PE} and \ref{sec:PG}, we next present two model-free actor-critic algorithms for the network revenue management problem introduced in Section~\ref{sec:formulation}. 
Algorithm 1, which utilizes the Monte Carlo method for PE (Section \ref{sec:Loss_Function_MC}), is detailed in the main text.
Algorithm 2, adhering to a similar framework but employing the TD(0) method for PE (Section \ref{ssec:TD}), is presented in Appendix \ref{app:AC_algs}. 

In both
algorithms, we assume a linear parametrization for $\{J^{\theta}(t, x): \theta \in \Theta\}$, that is, $J^{\theta}(t, x) = \sum_{j=1}^W \theta_j \varphi_j(t, x)$, where $\varphi_1, \ldots, \varphi_W$ are the basis functions.
We also consider a parameterized family of stochastic policies $\{\bpi^{\phi}(S \mid t, x): \phi \in \Phi\}$. Our aim is to determine the optimal values for $(\theta, \phi)$ jointly, by alternately updating each parameter. Note that both algorithms are designed for offline learning, where full trajectories are sampled and observed repeatedly
during different episodes and
$(\theta, \phi)$ are updated after every $M$ episodes, with $M$ defined as the batch size.



In addition, we employ an environment simulator to generate trajectories under given policies.
The environment simulator, denoted as $(t', x', S', r') = \text{Environment}(t, x, \bpi^{\phi}(\cdot \mid \cdot, \cdot))$, operates by first taking current time-state pair $(t, x)$ and the policy $\bpi^{\phi}(\cdot \mid \cdot, \cdot)$ as inputs.
It then samples a time interval $s$ from an exponential distribution with rate $\lambda$, indicating the duration until the next customer arrival.
Using the updated time $t' = t+s$ and the current state $x$, an offer set $S'$ is sampled from the policy $\bpi^{\phi}(\cdot \mid t', x)$.
The state transitions to $x-A^j$ with probability $P_j(S')$ and remains at $x$ with probability $P_0(S')$.
Based on this distribution, the state $x'$ at time $t'$ is sampled, and the corresponding reward $r'$ is returned.
If $x' \neq x$, the simulation outputs the jump time $t'$, new state $x'$, action $S'$ and reward $r'$.
If $x' = x$, with the current time-state pair $(t', x')$, the process repeats: a new time interval $s$ is drawn from the same exponential distribution, and the above steps are executed until a transition to a different state occurs, at which point it outputs the jump time, along with the corresponding new state, action, and reward at that time.
We note that the simulator may not coincide with the ground-truth data generating process when the environment is unknown.
Even in this case, the simulator can be constructed using the estimation from past real-world data.
We explain this subtle point in Section~\ref{sec:Numerical_Experiments} in the application to network revenue management.

\begin{algorithm}
\caption{Actor-Critic Algorithm with Linear Value Function Approximation (PE via Monte Carlo)}\label{alg:MC}
\small
    \begin{algorithmic}[1]
    \State\textbf{Inputs:} initial state $c$, time horizon $T$, number of {episodes} $N$, batch size $M$; functional forms of basis functions $\varphi_1, \ldots, \varphi_W$,
    functional form of the policy $\bpi^{\phi}(\cdot \mid \cdot, \cdot)$ and an initial value $\phi_0$; 
    {entropy factor $\gamma$, learning rate $\alpha_{\phi}$}
    \State\textbf{Required program:} an environment simulator $(t', x', S', r') = Environment(t, x, \bpi^{\phi}(\cdot \mid \cdot, \cdot))$ \label{step:simulator}
    \State Initialize $\phi \leftarrow \phi_0$
    \For{$i = 1$ to $N$} 
    \State Store $(\tau_0^{(i)}, x_0^{(i)}) \gets (0, c)$
    \State Initialize $l = 0$, $(t, x) = (0, c)$ 
    \Comment{Initialize $l$ to count jumps in each state trajectory, and $(t, x)$ to record the time and state right after each jump}
    \While{True}
    \State Apply $(t, x)$ to the environment simulator to get $(t', x', S', r') = Environment(t, x, \bpi^{\phi}(\cdot \mid \cdot, \cdot))$
    \If{$t' \geq T$}
    \State Store $L^{(i)} \gets l$, $\tau_{L^{(i)}+1}^{(i)} \gets T$
    \State \textbf{break}
    \EndIf
    \State Update $l \gets l + 1$
    \State Store observation at jump time: $(\tau_l^{(i)}, x_l^{(i)}, S_l^{(i)}, r_l^{(i)}) \gets (t', x', S', r')$
    \State Update $(t, x) \gets (t', x')$
    \EndWhile
    \If{$i= 0 \pmod{M} $} {\Comment{Perform an update using $M$ episodes generated under the policy}}
        \State Evaluate policy $\bpi^{\phi}$: [using formula \eqref{eq:PE_MC}, incorporating techniques \eqref{eq:integral_D} and \eqref{eq:integral_MC_right_vector}]
        \begin{align*}
            \theta^* = 
            &\Bigg[\frac{1}{M} \sum_{k=i-M+1}^i \sum_{l = 0}^{L^{(k)}} \bar D(\tau_l^{(k)}, \tau_{l+1}^{(k)}, x_{l}^{(k)}) \Bigg]^{(-1)} \times \\
            & \Bigg[ \frac{1}{M} \sum_{k=i-M+1}^i \sum_{l = 0}^{L^{(k)}}\bigg( \bar b(\tau_{l}^{(k)}, \tau_{l+1}^{(k)}, x_{l}^{(k)}) \cdot \sum_{l' = l + 1}^{L^{(k)}} [ r_{l'}^{(k)} + \gamma E(\tau_{l'}^{(k)}, \tau_{l'+1}^{(k)}, x_{l'}^{(k)}; \boldsymbol{1}, \bpi^{\phi})] \\
            &\hspace{2.5cm} + \gamma E(\tau_{l}^{(k)}, \tau_{l+1}^{(k)}, x_{l}^{(k)}; \bar b(\tau_{l}^{(k)}, \cdot, x_{l}^{(k)}), \bpi^{\phi})\bigg)\Bigg]
        \end{align*}
        \State Compute policy gradient at $\phi$: [using formula \eqref{eq:PG-update-rule}]
        \begin{align*}
            \Delta \phi = 
            \nabla_{\phi} \Bigg\{\frac{1}{M}\sum_{k=i-M+1}^i \bigg(&\sum_{l = 1}^{L^{(k)}} \log \bpi^{\phi} ( S_{l}^{(k)} \mid \tau_{l}^{(k)}, x_{l-1}^{(k)}) [ J^{\theta^*} (\tau_l^{(k)}, x_l^{(k)}) - J^{\theta^*} (\tau_{l}^{(k)}, x_{l-1}^{(k)} ) + r_{l}^{(k)} ] \\
            & + \gamma \sum_{l = 0}^{L^{(k)}} E(\tau_{l}^{(k)}, \tau_{l+1}^{(k)}, x_{l}^{(k)}; \boldsymbol{1}, \bpi^{\phi}) \bigg) \Bigg\}
        \end{align*}
        \State Update $\phi$ by
        \begin{align*}
            \phi \leftarrow \phi + \alpha_{\phi} \Delta \phi
        \end{align*} 
    \EndIf
    \EndFor
    \end{algorithmic}
\end{algorithm}

\begin{remark}
\label{rk:neural_network}
{We consider linear value function approximations as an example in presenting the algorithm.
It is a common approximation scheme, used by many studies including the discrete-time ADP method in \citealt{zhang2009approximate} which serves as a benchmark in our experiments in Section~\ref{sec:Numerical_Experiments}. 
Our algorithmic framework for event-driven intensity control is general, and one can apply nonlinear neural-network based approximations for value functions and policies. In such a case, instead of obtaining an explicit optimal $\theta^*$ during the PE step (see line 19 of Algorithm 1), we 
can employ the gradient method in \eqref{eq:MC-general-update-rule} to update the parameter $\theta$ in the critic network, and alternately update $\phi$ in the actor network using the policy gradient formula \eqref{eq:PG-formula}. 
The corresponding algorithm (Algorithm \ref{alg:NN}) is presented in Appendix \ref{app:AC_algs}. 
}
\end{remark}

\begin{remark}
In Algorithms \ref{alg:MC} - \ref{alg:NN}, the parameters are updated using the fixed learning rates and step sizes. However, when implementing the algorithms, a suitable optimizer can be employed to dynamically adapt the step size, making the learning process more efficient and stable. In the experiments presented in this paper, we consistently use the Adam optimizer, developed by \cite{kingma2014adam}.
\end{remark}


\section{Experimental Setup and Numerical Performance}
\label{sec:Numerical_Experiments}
In this section, we specialize the algorithms to the choice-based network revenue management.
In our experiments, we consider three different combinations of value and policy approximations, which are described in detail below.

\emph{Linear-Pair}: We first seek a suitable family of functions $\{J^{\theta}(t, x): \theta \in \Theta\}$ that can approximate the value function associated with any particular policy within our selected parametric policy family $\{\bpi^{\phi}: \phi \in \Phi\}$.
We draw inspiration from the form of the approximated optimal value function within a discrete-time framework, as suggested by \cite{zhang2009approximate}: 
\begin{align}\label{eq:discrete_approximated_optimal_value_function}
    V_{\text{ADP-}\Delta t}(t_k, x) = \theta_k + \sum_{i = 1}^{m} V_{k, i} x_i,
\end{align}
where $V_{k, i}$ estimates the marginal value of a unit resource $i$ in period $(t_k, t_{k+1}]$, and $\theta_k$ is a constant offset. For terminal condition, it is assumed that $\theta_{K} = 0$ and $V_{K, i} = 0$ for all $i = 1, \ldots, m$.
We propose to use the following continuous time counterpart for the family $\{J^{\theta}(t, x): \theta \in \mathbb{R}^{(m+1) \times (d + 1)}\}$:
\begin{align}\label{eq:continuous_approximated_value_function}
    J^{\theta}(t, x) \coloneqq \sum_{l = 0}^{d} \theta_{(0, l)} \Big( 1 - \frac{t}{T} \Big)^{l} + \sum_{i = 1}^{m} \bigg(\sum_{l = 0}^{d} \theta_{(i, l)} \Big( 1 - \frac{t}{T} \Big)^{l}\bigg) x_{i}.
\end{align}
Here, we replace $\theta_k$ and $V_{k,i}$ in the discrete-time approximation \eqref{eq:discrete_approximated_optimal_value_function}, which would be infinite-dimensional in the continuous time,
with a $d$th-order polynomial of $(1 - \frac{t}{T})$. 
The family $\{J^{\theta}(t, x): \theta \in \mathbb{R}^{(m+1) \times (d + 1) }\}$ constitutes a linear space, with the basis functions
\begin{equation}\label{eq:basis_functions}
    \begin{aligned}
        \varphi(t, x) = \bigg[1,\, 1 - \frac{t}{T}, \cdots, \Big( 1 - \frac{t}{T} \Big)^{d};\, &x_1,\, \Big( 1 - \frac{t}{T} \Big)x_1, \cdots, \Big( 1 - \frac{t}{T} \Big)^{d}x_1; \\
        \cdots; \, &x_m, \, \Big( 1 - \frac{t}{T} \Big)x_m, \cdots,  \Big( 1 - \frac{t}{T} \Big)^{d} x_m\bigg]^{\top}. 
    \end{aligned}
\end{equation} 

{We now explore the selection of the parametric family of policies $\{\bpi^{\phi}: \phi \in \Phi\}$, guided by the family of value functions above.} 
Given an approximation $J^{\theta}$ as defined in \eqref{eq:continuous_approximated_value_function} for the optimal value function $J^*$, applying the Hamiltonian $H$ to $J^{\theta}$ yields the following expression:
\begin{align*}
    H(t, x, S, J^{\theta}(\cdot, \cdot)) &= \sum_{j = 1}^{n}  [J^{\theta}(t, x - A^j) - J^{\theta}(t, x) + p_j] \lambda P_j(S) \\
    &= - \sum_{l = 0}^{d} \sum_{i = 1}^{m} \sum_{j = 1}^{n} A_{ij} \theta_{(i, l)}  \lambda  P_j(S) \Big( 1 - \frac{t}{T} \Big)^{l} + r(S),
\end{align*}
which is also a $d$th-order polynomial in $(1 - \frac{t}{T})$.
The above discussion, together with \eqref{eq:fixed_point}, motivates us to consider the parametric family of policies:
$\bpi^{\phi} (S \mid t, x) \coloneqq \frac{ \exp\{\frac{1}{\gamma} h_S (t; \phi)\} }{\sum_{\bar S \in \mathcal{A}(x)} \exp\{\frac{1}{\gamma} h_{\bar S} (t; \phi)\} }$,
where $\phi \in \mathbb{R}^{2^n \times (d + 1)}$ and $h_{S}(t; \phi) \coloneqq \sum_{l = 0}^{d} \phi_{(S, l)} ( 1 - \frac{t}{T} )^{l}$.
It should be noted that the above stochastic policy involves $2^n \times (d + 1)$ parameters $\{\phi_{(S, l)}: S \in \mathcal{A},\ l = 0, \ldots, d\}$, which grows exponentially with $n$ and becomes intractable for even moderate-sized $n$.
To address this challenge, {we limit the number of parameters by only capturing the interaction between a pair of products.
Specifically, we introduce a set of parameters $\{\phi_{(j, j', l)}: j = 1, \ldots, n;\ j' = 1, \ldots, n;\ l = 0, \ldots, d\}$, yielding a parameter space of $\mathbb{R}^{n \times n \times (d+1)}$. 
For all $S \in \mathcal{A}$ and $l = 0, \ldots, d$, we let $\phi_{(S, l)}$ be 
$\phi_{(S, l)} = \sum_{1 \leq j,\, j' \leq n} \phi_{(j, j', l)} \delta_j(S) \delta_{j'}(S)$,
where $\delta_j(S)$ equals 1 if $j \in S$, and 0 otherwise.
Intuitively, $\phi_{(j, j', l)}$ captures the interaction between product $j$ and $j'$.
This strategy effectively reduce the number of parameters from $O(2^n)$ to $O(n^2)$.
Then, we employ the reduced parametric family $\{\bpi^{\phi}: \phi \in \mathbb{R}^{n \times n \times (d + 1)}\}$, where $\bpi^{\phi}$ is specified as follows: for $(t, x, S) \in \mathbb{K}$,
\begin{align}\label{eq:simplified_parametric_policy}
    \bpi^{\phi} (S \mid t, x) = \frac{\exp\big\{ \frac{1}{\gamma} \sum_{l=0}^d \big(\sum_{1 \leq j,\, j' \leq n} \phi_{(j, j', l)} \delta_j(S) \delta_{j'}(S) \big) ( 1 - \frac{t}{T} )^l \big\}}{\sum_{\bar S \in \mathcal{A}(x)} \exp\big\{ \frac{1}{\gamma} \sum_{l=0}^d \big(\sum_{1 \leq j,\, j' \leq n} \phi_{(j, j', l)}  \delta_j(\bar S) \delta_{j'}(\bar S) \big) ( 1 - \frac{t}{T} )^l\big\} }.
\end{align}
We refer to the combination of the value approximation \eqref{eq:continuous_approximated_value_function} and the policy approximation \eqref{eq:simplified_parametric_policy} as the ``Linear-Pair'' approach.

\emph{Linear-RO}:
Drawing the intuition from the observation that the optimal assortment for the MNL model with disjoint segments is revenue-ordered (RO) in the static setting \citep{talluri2004revenue}, we consider the RO-based policy approximation.
More precisely, 
suppose the original product set $\mathcal{J} = \{1, 2, \ldots, n\}$ is composed of $\bar{L}$ disjoint consideration sets for customers, denoted as $\mathcal{J}_l$ for each segment $l = 1, \ldots, \bar{L}$. 
Sorting the products in $\mathcal{J}_l$ by descending prices yields the index sequence $\{i_l^1, i_l^2, \ldots, i_l^{n_l}\}$, which further results in $(n_l+1)$ revenue-ordered sub-assortments $S_l^{[0]} = \emptyset$, $S_l^{[1]} = \{i_l^1\}$, $S_l^{[2]} = \{i_l^1, i_l^2\}$, $\cdots$, $S_l^{[n_l]} = \{i_l^1, i_l^2, \ldots, i_l^{n_l}\}$. 
Then, a revenue-ordered assortment is defined as the union of one revenue-ordered sub-assortment from each segment.
We denote $\bar{n} = \prod_{l=1}^{\bar{L}} (n_l + 1)$ and correspondingly, the $\bar{n}$ revenue-ordered assortments are denoted as $S^{[1]}$, $S^{[2]}$, $\cdots$, $S^{[\bar{n}]}$.
We first introduce a intermediate policy $\bsigma^{\phi}$ whose action space comprises all revenue-ordered assortments without accounting for the constraints.
For $t \in [0, T]$ and $k = 1, \ldots, \bar{n}$,
\begin{align*}
    \bsigma^{\phi} (S^{[k]} \mid t) \coloneqq \frac{\exp\big\{ \frac{1}{\gamma} \sum_{l=0}^d \phi_{(k, l)} (1 - \frac{t}{T})^l \big\}}{\sum_{k' = 1}^{\bar{n}} \exp\big\{ \frac{1}{\gamma} \sum_{l=0}^d \phi_{(k', l)} (1 - \frac{t}{T})^l\big\} }.
\end{align*}
where with slight abuse of notations $\{\phi_{(k, l)}: k = 1, \ldots, \bar{n};\, l = 0, \ldots, d\}$ denotes the set of parameters.}
After sampling a revenue-ordered assortment according to the policy, we must consider the product availability at a specific state.
The set $\mathcal{J}(x) \coloneqq \{j \in \mathcal{J}: x \geq A^j\}$ represents the collection of products that are available for offering when the state is $x$.
The assortment we actually offer is the intersection of the sampled revenue-ordered assortment and $\mathcal{J}(x)$. 
In other words, the policy that is actually implemented is as follows:
for $(t, x, S) \in \mathbb{K}$,
\begin{align}\label{eq:RO_parametric_policy}
    \bpi^{\phi}(S \mid t, x ) = \sum_{k \in \{1, \ldots, \bar{n}\}:\ S^{[k]} \cap \mathcal{J}(x) = S} \bsigma^{\phi} (S^{[k]} \mid t).
\end{align}
We refer to the parameterization combining the value approximation \eqref{eq:continuous_approximated_value_function} and 
 the policy approximation \eqref{eq:RO_parametric_policy} as the ``Linear-RO" approach.

\emph{2-NNs}: as noted in Remark \ref{rk:neural_network}, we can employ neural-network-based approximation for value functions and policies, referred to as the critic and actor networks, respectively.
This parametrization facilitates broader application to various intensity control problems. 
In our experiments, we use fully connected networks with ReLU activation functions, which consist of multiple layers: the input layer, one or more hidden layers, and the output layer.
Specifically, the inputs to both the critic and actor networks are $(1+m)$-dimensional time-state vectors, where the first dimension represents time and the subsequent $m$ dimensions represent the state.
Note that the policy's outputs are $2^n$-dimensional, corresponding to the offering probabilities of $2^n$ different assortments.
Directly setting the output dimension of the actor network to $2^n$ would result in prohibitively high computational costs, even when $n$ is moderate.
To address the issue, we let the actor network output a $n$-dimensional vector. 
The vector is then transformed into a $(0, 1)$-valued $n$-dimensional vector using a sigmoid function $z \rightarrow \frac{e^z}{1+e^z}$. 
It is interpreted as the success probabilities for $n$ Bernoulli random variables. 
Sampling from these distributions yields an $n$-dimensional binary vector, where each element corresponds to a Bernoulli outcome.
Finally, we offer the assortment that consists of products corresponding to the positions with outcome $1$ in this binary vector.
If we denote the actor network as $L^{\phi}: (t, x) \mapsto \mathbb{R}^n$, then the aforementioned policy can be mathematically expressed as follows: for $(t, x, S) \in \mathbb{K}$,
\begin{align*}
    \bpi^{\phi}(S \mid t, x) = \prod_{j \in S} \frac{ e^{\frac{1}{\gamma}[L^{\phi}(t, x)]_j}}{1 + e^{\frac{1}{\gamma}[L^{\phi}(t, x)]_j}} \cdot \prod_{j \notin S} \frac{1}{1 + e^{\frac{1}{\gamma}[L^{\phi}(t, x)]_j}},
\end{align*}
where $\gamma>0$ is the temperature parameter. 
We refer to the neural-network-based approximation described above as the ``2-NNs" approach.

\begin{remark}[Constructing a simulator]
When running actor-critic algorithms, Algorithms \ref{alg:MC} - \ref{alg:NN}, we construct a simulator (Step~\ref{step:simulator}) even when the environment (the customer arrival rate and choice probabilities) is unknown.
We provide an overview of the process below.
The algorithm runs in cycles. 
In the first cycle, we initialize the environment simulator (with an arbitrary choice model) and the policy  
to generate $M$ episodes.
The data from these $M$ episodes is used in the PE and PG processes to update the policy once.
We then implement the updated policy in the simulator and perform another update.
We conduct $k_1$ updates within the simulator, generating $k_1 M$ episodes.
Then, the policy is executed in the real-world environment \emph{for the first time}.
This is only the first episode in the real world, but the $(k_1 M+1)$-th episode in our algorithms considering the simulated ones.
Moreover, the $k_1 M$ episodes are generated and PE/PG are computed offline so that the updated policy can be implemented before the start of the first episode in the real world.
In the real world, the same process of policy implementation and update is iterated $k_2$ times, generating $k_2 M$ episodes.
The collected data from the real world is then used to update the simulator by estimating a choice model, e.g., the MNL model.
In our study, we employ the maximum likelihood estimation (MLE) method for this estimation.
This concludes the first cycle consisting of $(k_1+k_2)M$ episodes, among which $k_2 M$ are collected from the real world.
Subsequently, the simulator with the estimated choice model and the updated policy will be re-implemented and the cycle repeats.
In our experiments, for the small and the medium-sized network, we set $M=10$, $k_1 = 8$, $k_2 = 2$; for the large network, we set $M=1$, $k_1 = 80$, $k_2 = 20$.
Such combination of online real-world data and offline simulated data is common in many RL applications.
This is because of two reasons: the real-world data is expensive while the simulated data is cheap; the estimation of the environment (choice probabilities) is relatively data-efficient while the learning of the optimal policy is data-hungry.
As a result, we use the real-world data to learn the unknown environment and the simulated data to facilitate the learning of the policy.
   
\end{remark}

\subsection{Benchmarks}\label{ssec:benchmarks}
We next introduce the benchmarks that have been studied in the literature to compare the performance of our proposed algorithm with. As several benchmarks are formulated in discrete time, we first describe the corresponding discrete-time model $\mathcal{M}_{\Delta t}$ as follows: The time horizon $[0, T]$ is discretized into a uniform grid $0 = t_0 < t_1 < \cdots < t_K = T$ with step size $\Delta t = \frac{T}{K}$. It is assumed that in each period $(t_k, t_{k+1}]$, one arrival occurs with probability $\lambda \Delta t$, and no arrivals occur with probability $ 1 - \lambda \Delta t$. 
\subsubsection{Optimal Policy from Discretized Dynamic Programming}\label{sec:dynamic_programming}
Given the challenges in solving the HJB equation \eqref{eq:hjb} within a continuous-time framework,
one can consider using finite difference approximation to replace derivative, thus deriving the following dynamic programming (DP) problem
\begin{align}\label{eq:discrete_time_dynamic_programming}
\left\{
    \begin{aligned}
    & V_{\Delta t}^*(t_k, x)= (\lambda \Delta t) \cdot
    \max_{S \in \mathcal{A}(x)}\bigg\{\sum_{j \in S} P_j(S)[p_j + V_{\Delta t}^*(t_{k+1}, x - A^j ) - V_{\Delta t}^*(t_{k+1}, x) ]\bigg\} + V_{\Delta t}^*(t_{k+1}, x), \\
    & \hspace{10cm} \forall\, k = 0, \ldots, K-1,\ x \in \mathcal{X} \\
    & V_{\Delta t}^*(t_{K}, x) = 0, \quad \forall\, x \in \mathcal{X}.
    \end{aligned}
\right.
\end{align}
It is worth noting that the DP problem \eqref{eq:discrete_time_dynamic_programming} corresponds exactly to the underlying dynamic program of the discrete-time model $\mathcal{M}_{\Delta t}$ described in Section \ref{sec:PE}.
Therefore, when $\Delta t$ is sufficiently small, $V_{\Delta t}^*(0, c)$ can serve as an reliable approximation to the true optimal expected revenue $V^*(0, c)$. This makes it a suitable benchmark for evaluating the performance of our proposed policy. 
When the size of the state space 
$\mathcal{X}$ is relatively small, we can solve the DP problem \eqref{eq:discrete_time_dynamic_programming} recursively to obtain $V_{\Delta t}^*(0, c)$. 
However, when the size of $\mathcal{X}$ is large, this approach becomes computationally infeasible.

\subsubsection{Other Policies}
In addition, we consider the following policies as benchmarks.
\begin{itemize}
    \item \textsc{Uniform-Random}: 
    Denoted by $\bpi^{\textsc{Uniform-Random}}$, this
    is a stationary policy that selects among all available offer sets at each given state with equal probability. That is, 
    $\bpi^{\textsc{Uniform-Random}} (S \mid x) = \frac{1}{| \mathcal{A}(x)|}$ for all $x \in \mathcal{X}$ and $S \in \mathcal{A}(x)$.

    \item \textsc{Greedy}: 
    The \textsc{Greedy} policy 
    is a deterministic policy that always selects the offer set with the highest expected revenue from all available offer sets at each state, that is, $\argmax_{S \in \mathcal{A}(x)} \sum_{j \in S} p_j P_j(S)$.

    \item \textsc{CDLP}: 
    This policy is first introduced in \cite{liu2008choice} based on the following choice-based deterministic linear programming (CDLP) model
    \begin{align*}
        z_{\mathrm{CDLP}}=\max_h\  \sum_{S \in \mathcal{A}} \lambda R(S) h(S) \quad
        \text{ s.t. } \ & \sum_{S \in \mathcal{A}} \lambda Q_i(S) h(S) \leq c_i, \quad i = 1, \ldots, m \\
        & \sum_{S \in \mathcal{A}} h(S) \leq T \\
        & h(S) \geq 0, \quad S \in \mathcal{A},
    \end{align*}
    where $h(S)$ denotes the total duration that the subset $S$ is offered, $R(S) = \sum_{j \in S}  p_j P_j(S)$ denotes the expected revenue from offering $S$ to an arriving customer, and $Q_i(S) = \sum_{j = 1}^{n} A_{ij} P_j(S)$ denotes the rate of using a unit of capacity on resource $i$ when $S$ is offered.
    The CDLP policy executes by sequentially offering each set $S$ for the duration specified by the optimal solution to the CDLP, following the lexicographic order of the variable indices.
    
    In addition, it has been demonstrated in \cite{liu2008choice} that the optimal objective function value of the CDLP provides an upper bound on the optimal expected revenue $V^*(0, c)$ in the stochastic problem.
    It is worth mentioning that although the discussions in \cite{liu2008choice} are based on a discrete-time model, the conclusion regarding the upper bound property can be readily extended to our continuous-time framework.

    \item \textsc{ADP}: 
    This policy is derived from the approximate dynamic programming (ADP) approach detailed in \cite{zhang2009approximate}.
    Since the ADP approach operates in a discrete-time setting, 
    we apply it to the discrete-time model $\mathcal{M}_{\Delta t}$ described in Section \ref{sec:PE}.
    That is, to consider an affine functional approximation ${V_{\text{ADP-}\Delta t} (t_k, x)} = \theta_{k} + \sum_{i=1}^m V_{k,i} x_i$ to the optimal value function $V_{\Delta t}^* (t_k, x)$ and then formulates the dynamic program \eqref{eq:discrete_time_dynamic_programming} as a linear program:
    \begin{equation}\label{eq:ADP_optimization_problem}
        \begin{aligned}
        \min_{\theta,\, V}\ \theta_{0} + \sum_i V_{0, i} c_i \quad
        \text{ s.t. } \ &\theta_{k} - \theta_{k+1} + \sum_i (V_{k, i} - V_{k+1, i}) x_i \geq \lambda (t_{k+1} - t_k) \sum_{j \in S} P_{j}(S) [p_j - \sum_{i=1}^m V_{k+1, i} A_{ij}], \\
        &\hspace{5.5cm} k = 0, \ldots, K-1,\ x \in \mathcal{X},\ S \in \mathcal{A}(x) \\
        & \theta_K = 0, \ V_{K, i} = 0.
        \end{aligned}
    \end{equation}
    With the optimal solution $(\theta^*, V^*)$ for the above linear program, the ADP policy is constructed by  offering $\arg\max_{S \in \mathcal{A}(x)} \sum_{j \in S} P_j(S)[p_j - \sum_{i=1}^m V_{k+1, i}^* A_{ij}] $ in time period $(t_k, t_{k+1}]$ and state $x$.
    Given that the ADP policy varies with different degrees of time discretization, we will refer to the ADP policy under the discrete-time model $\mathcal{M}_{\Delta t}$ as ADP-$\Delta t$.
\end{itemize}

Note that all the benchmarks mentioned above, except for the uniform random policy, require the knowledge of environment, specifically the customer arrival rate $\lambda$ and the choice probabilities $P_j(S)$.
Thus we provide the exact values of the parameters to these policies.
Therefore, their performance is slightly inflated compared to our RL algorithm which does not require knowing the values of such parameters.

In the subsequent experiments, the performance of a policy is evaluated using the average revenues obtained from $10,000$ sample paths. Meanwhile, the corresponding 99\% confidence interval is provided.
All experiments in this paper were conducted on a 64-bit Linux operating system equipped with
two AMD EPYC 7413 (Zen 3) CPUs @ 2.65 GHz, and four NVIDIA A100-SXM4 GPUs, each with 40 GB of memory.
Gurobi version 10.0.2 was employed for solving CDLP and ADP.
To enhance computational efficiency, we have fully vectorized the tasks in our algorithm, including sampling and basic arithmetic operations.
By employing PyTorch for these tensor-based, vectorized computations, we leverage the inherent parallel computing capabilities of GPUs.
More precisely, we parallel the simulation of a batch of $M$ trajectories for the resulting policy update, i.e., Step 5 to 16 in Algorithm~\ref{alg:MC} - \ref{alg:NN} for $ i\neq 0 \pmod{M}$.
The simulated batch data is then structured into multi-dimensional matrices (tensors in PyTorch), facilitating vectorized operations for updating $\theta$ and $\phi$ during Steps 18 and 19 in Algorithm~\ref{alg:MC} - \ref{alg:NN}.


\subsection{Experiment One: A Small Network}\label{sec:small-net}
We consider a simple example featuring 2 resources and 3 products. 
The consumption is captured by matrix $A = \begin{bmatrix}
    1 & 0 & 1 \\
    0 & 1 & 1
\end{bmatrix}$. 
The initial stocks for the 2 resources are set to $c = (5,\, 5)^\top$ and the price for the 3 products are fixed at $p = (1,\, 1,\, 1.5)^\top$. 
We set the booking horizon to $T = 15$, during which customer arrivals are modeled by a Poisson process with a rate of $\lambda = 0.9$.
The choice probabilities are determined by the weights vector $v = (v_0,\, v_1,\, v_2,\, v_3) = (27.8,\, 42,\, 42,\, 55)$ through the multinomial logit (MNL) choice model, specified as
$P_j(S) = \frac{v_j}{v_0 + \sum_{j \in S} v_j}$ for all $S\subseteq \mathcal{J}$ and $j \in S$.
For this example, we implement Algorithm \ref{alg:MC} with Linear-Pair parametrization.
The degree of the polynomial is set to $d=2$ in \eqref{eq:continuous_approximated_value_function} and \eqref{eq:simplified_parametric_policy}, and all components of $\phi$ are initialized to zero, which is equivalent to the uniform random policy.
The hyperparameters are configured as follows: batch size $M=10$, entropy factor $\gamma = 2\times 10^{-3}$ and learning rate $\alpha_{\phi} = 1 \times 10^{-5}$. 
Figure \ref{fig:3_pros_performance_evolution} illustrates how the average revenues of the updated policies varies throughout the learning process.
Upon completing the final update, the simulated average revenues of the resulting policy, achieves a value of $8.835$.

\begin{figure}[htbp]
    \centering
    \begin{tikzpicture}
    \begin{axis}[
        xlabel={Episode},
        ylabel={Average Revenue},
        xtick={0,20000,40000,60000,80000,100000},
        xticklabels={0,2,4,6,8,10},
        xtick scale label code/.code={$\times 10^4$},
        ytick={7.5, 8, 8.5, 8.934},
        yticklabel style={
            /pgf/number format/fixed,
            /pgf/number format/precision=5
        },
        scaled y ticks=false,
        xmin=0, xmax=100000,
        ymin=7.5, ymax=9,
        legend pos=south east,
        grid=minor
    ]
    \addplot[thick, blue] table [col sep=comma, x=x, y=y] {data/small_network.csv};
    \addlegendentry{Linear-Pair}
    \addplot [dashed, very thick, samples=2] coordinates {(0,8.934) (100000,8.934)};
    \end{axis}
    \end{tikzpicture}
    \caption{Average revenues of Algorithm~\ref{alg:MC} over episodes for the example in Section~\ref{sec:small-net}}
    \label{fig:3_pros_performance_evolution}
\end{figure}

Next, we solve the DP problem \eqref{eq:discrete_time_dynamic_programming}.
Setting $\Delta t = 0.001$ yields a highly reliable approximation of the optimal expected revenue, $V_{0.001}^*(0, c) = 8.934$, as indicated in Figure \ref{fig:3_pros_performance_evolution}.
Moreover, we implement the benchmarks outlined in Section \ref{ssec:benchmarks}, including the \textsc{Uniform-Random}, \textsc{Greedy}, CDLP, and ADP policies.
Table \ref{tab:3_pros_expected_revenue} reports the simulated average revenues for our Linear-Pair policy and the benchmarks.
To make it easy to compare the performance, we also provide the ratio of the simulated average revenues to $V_{0.001}^*(0, c)$.
The numerical results indicate that, our algorithm achieves 98.89\% of the optimal performance, outperforming all other benchmarks other than dynamic programming.
Moreover, in the considered small network, ADP policies maintain relatively stable performance across different degrees of time discretization and consistently outperform other benchmarks.
Despite this, our Linear-Pair policy still exhibits a slight advantage over the best-performing ADP-0.05 policy.

\begin{table}[htbp]
    \centering
    \caption{Simulation results for selected policies in Section~\ref{sec:small-net}}
    \label{tab:3_pros_expected_revenue}
    \begin{tabular}{p{3.5cm}>{\raggedleft\arraybackslash}p{2.5cm}>{\raggedleft\arraybackslash}p{2cm}>{\raggedleft\arraybackslash}p{3cm}>{\raggedleft\arraybackslash}p{3cm}}
    \hline
    Policy & Average revenue & $99\%$-CI ($\pm$) & {$\frac{\text{Average revenue}}{V_{0.001}^*(0, c)}$ ($\%$)} &  Time cost (second)\\
    \hline
    Linear-Pair &  8.835 &  0.037 & 98.89 & 1,037.82\\
    \textsc{Uniform-Random}  & 7.589 & 0.038 & 84.95 & $*$\\
    \textsc{Greedy} & 8.483 & 0.023 & 94.95 & $*$\\
    CDLP & 8.545 & 0.045 & 95.65 & 0.15\\
    ADP-1 & 8.751 & 0.042 & 97.95 & 1.07  \\
    ADP-0.5 & 8.741 & 0.042 & 97.84 & 1.23 \\
    ADP-0.2 & 8.734 & 0.042 & 97.76 & 1.50 \\
    ADP-0.1 & 8.737 & 0.042 & 97.79 & 2.23 \\
    ADP-0.05 & 8.759 & 0.042 & 98.04 & 5.50 \\
    \hline
    \end{tabular}
    
    \footnotesize
    $*$ indicates that the method involves neither solving LP problems nor learning; the time cost is virtually zero.
\end{table}

\subsection{Experiment Two: A Medium-Sized Airline Network}\label{sec:mid-net}

\begin{figure}[htbp]
    \centering
    \begin{tikzpicture}[
    >=Stealth,
    node distance=4cm,
    every node/.style={font=\scriptsize},
    city/.style={
        circle,
        draw=black,
        thick,
        minimum size=1.3cm,
        inner sep=0pt
        }
    ]
    \node[city, font=\large] (A) {A};
    \node[city, right=of A, font=\large] (H) {H};
    \node[city, right=of H, font=\large] (B) {B};
    \draw[->, thick] 
        ($(A.east)+(0,0.5)$) -- ($(H.west)+(0,0.5)$)
        node[pos=0.5, xshift=1pt,yshift=5pt] {Leg 1 (12; morning)};
    \draw[->, thick] 
        ($(A.east)+(0,0)$) -- ($(H.west)+(0,0)$)
        node[pos=0.5, xshift=2.8pt, yshift=5pt] {Leg 2 (20; afternoon)};
    \draw[->, thick] 
        ($(A.east)+(0,-0.5)$) -- ($(H.west)+(0,-0.5)$)
        node[pos=0.5, yshift=5pt] {Leg 3 (16; evening)};
    \draw[->, thick] 
        ($(H.east)+(0,0.55)$) -- ($(B.west)+(0,0.55)$)
        node[pos=0.5, xshift=1pt, yshift=5pt] {Leg 4 (20; morning)};
    \draw[->, thick] 
        ($(H.east)+(0,0)$) -- ($(B.west)+(0,0)$)
        node[pos=0.5, xshift=2.8pt, yshift=5pt] {Leg 5 (12; afternoon)};
    \draw[->, thick] 
        ($(H.east)+(0,-0.55)$) -- ($(B.west)+(0,-0.55)$)
        node[pos=0.5, yshift=5pt] {Leg 6 (16; evening)};
    \end{tikzpicture}
    \caption{Airline network for the example in Section~\ref{sec:mid-net}}
    \label{fig:9_pros_airline_network}
\end{figure}
In this example, we consider a medium-sized airline network consisting of 6 flight legs (resources) with a total of 9 itineraries (products).
Figure \ref{fig:9_pros_airline_network} presents the airline network, with each leg labeled, for example, ``Leg 1 (12; morning)'' represents a morning flight with an initial capacity of 12 seats, and so forth for the remaining legs.
For customer arrivals and the choice model, we consider three segments depending on the origin-destination pairs, shown in Table~\ref{tab:9_pros_Segments_and_Characteristics}.
For example, the arrival rates for segment $l$ is $\lambda_l$ and the total arrival rate is $\lambda = \sum_{l=1}^3 \lambda_l = 0.8$.
The choice probabilities of segment $l$ follow the MNL model:  
$P_j(S) =
\frac{\lambda_l}{\lambda} \cdot \frac{v_{lj}}{v_{l0} + \sum_{j'} v_{lj'}}$.
The selling horizon is $T=200$.

\begin{table}[htbp]
    \centering
    \caption{Segments and their characteristics}\label{tab:9_pros_Segments_and_Characteristics}
    \begin{tabular}{p{4.5cm}>{\centering\arraybackslash}p{3cm}>{\centering\arraybackslash}p{3cm}>{\centering\arraybackslash}p{4cm}}
        \hline
        Segment $l$ & 1 & 2 & 3 \\
        \hline
        Description & A $\rightarrow$ H & H $\rightarrow$ B & A $\rightarrow$ H $\rightarrow$ B \\
        Arrival rate $\lambda_l$ & 0.2 & 0.2 & 0.4 \\
        Consideration set $\mathcal{J}_l$ & $\{1,\, 2,\, 3\}$ & $\{4,\, 5,\, 6\}$ & $\{7,\, 8,\, 9\}$ \\
        Legs & $\{1,\, 2,\, 3\}$ & $\{4,\, 5,\, 6\}$ & $\{\{1,\, 4\},\, \{2,\, 5\},\, \{3,\, 6\}\}$ \\
        Price & $(8,\, 10,\, 6)$ & $(8,\, 10,\, 6)$ & $(9,\, 12,\, 7)$ \\
        Preference weights $v_{lj}$ & $(5,\, 10,\, 1)$ & $(5,\, 10,\, 1)$ & $(5,\, 1,\, 10)$ \\
        No-purchase weight $v_{l0}$ & 1 & 1 & 5 \\
        \hline
    \end{tabular}
\end{table}
In this example, we apply Algorithm \ref{alg:MC} with both Linear-Pair and Linear-RO approximations, and Algorithm~\ref{alg:NN} with 2-NNs parametrization, to demonstrate the impact of different parametric forms.
The configurations related to the network structures are as follows: For all three experiments, the batch size $M$ is set to $10$. 
In the Linear-Pair and Linear-RO experiments, the degree of the polynomials in both critic and actor approximations is set to $d=3$, and the initial values for all components of $\phi$ are set to zero.
In the 2-NNs experiment, both the actor and critic networks are configured with four hidden layers, with respective widths of $150$, $120$, $80$ and $20$. Moreover, Table~\ref{tab:9_pros_hyperparameter} presents the well-performing hyperparameters used in these three experiments.
Figure~\ref{fig:9_pros_performance_evolution} illustrates the convergence of the policies during the experiments. 
{Table~\ref{tab:9_pros_expected_revenue} reports the average revenues of the final policies and relative performance of each policy compared to the Linear-Pair policy, along with the respective total time costs.}
The Linear-Pair approximation exhibits faster convergence compared to the 2-NNs experiment, because in each iteration, Algorithm \ref{alg:MC} exactly solves the approximate value function, whereas Algorithm \ref{alg:NN} updates the value function network incrementally with just one gradient step.
The average revenues of the neural-network-based policy is slightly lower than that of the Linear-Pair policy. 
This might be due to the simplifications made during the construction of the actor neural network, as detailed in the initial part of Section \ref{sec:Numerical_Experiments}.
The suboptimal performance of the Linear-RO policy is similarly due to its relatively small policy class.
\begin{table}[tbp]
    \centering
    \caption{Well-performing hyperparameters for different parametrization in Section~\ref{sec:mid-net}}
    \label{tab:9_pros_hyperparameter}
    \begin{tabular}{p{3cm}>{\centering\arraybackslash}p{2.5cm}>{\centering\arraybackslash}p{2.5cm}>{\centering\arraybackslash}p{2.5cm}}
    \hline
    & Entropy factor $\gamma$ & Learning rate $\alpha_{\theta}$ & Learning rate $\alpha_{\phi}$ \\
    \hline
    Linear-Pair & $2 \times 10^{-3}$ & - & $3 \times 10^{-6}$ \\
    Linear-RO & $1 \times 10^{-3}$ & - & $5 \times 10^{-6}$ \\
    2-NNs & $1 \times 10^{-3}$ & $8 \times 10^{-2}$ & $2 \times 10^{-7}$ \\
    \hline
    \end{tabular}
\end{table}


\begin{figure}[tbp]
    \centering
    \begin{tikzpicture}
    \begin{axis}[
        xlabel={Episode},
        ylabel={Average Revenue},
        xmin=0, xmax=100000,
        ymin=570, ymax=680,
        xtick={0,20000,40000,60000,80000,100000},
        xticklabels={0,2,4,6,8,10},
        xtick scale label code/.code={$\times 10^4$},
        legend pos=south east,
        grid=minor
    ]
    \addplot[thick, blue] table [x=x, y=y1, col sep=comma] {data/medium_network.csv};
    \addlegendentry{Linear-Pair}
    \addplot[thick, brown] table [x=x, y=y2, col sep=comma] {data/medium_network.csv};
    \addlegendentry{Linear-RO}
    \addplot[thick, red] table [x=x, y=y3, col sep=comma] {data/medium_network.csv};
    \addlegendentry{2-NNs}
    \end{axis}
    \end{tikzpicture}
    \caption{Average revenues of Algorithm~\ref{alg:MC} and Algorithm~\ref{alg:NN} over episodes for the example in Section~\ref{sec:mid-net}} \label{fig:9_pros_performance_evolution}
\end{figure}

\begin{table}[tbp]
    \centering
    \caption{Simulation results for selected policies in Section~\ref{sec:mid-net}}
    \label{tab:9_pros_expected_revenue}
    \begin{tabular}{p{3.5cm}>{\raggedleft\arraybackslash}p{2.5cm}>{\raggedleft\arraybackslash}p{2cm}>{\raggedleft\arraybackslash}p{4cm}>{\raggedleft\arraybackslash}p{3cm}}
    \hline
    Policy & Average revenue & $99\%$-CI ($\pm$)  & {$\frac{\text{Average revenue}}{\text{Average revenue of Linear-Pair}}$ ($\%$)} & Time cost (second) \\
    \hline
    Linear-Pair & 677.356 & 0.753 & 100.00 & 2,774.59 \\
    Linear-RO & 673.595 & 0.792 & 99.44 & 2,366.73 \\
    2-NNs & 676.369 &  0.646 & 99.85 & 2,184.20  \\
    \textsc{Uniform-Random} & 595.603 & 0.845 & 87.93 & $*$ \\
    \textsc{Greedy} & 605.402 & 0.375 & 89.38 & $*$ \\
    CDLP & 651.294 & 1.164 & 96.15 & 0.39 \\
    ADP-1 & 675.566 & 0.684 & 99.74 & 35.70 \\
    ADP-0.5 & 560.899 & 1.457 & 82.81 & 88.30 \\
    ADP-0.2 & 664.745 & 0.966 & 98.14 & 374.22\\
    ADP-0.1 & 640.854 & 1.239 & 94.61 & 1,013.94 \\
    ADP-0.05 & 675.647 & 0.657 & 99.75 & 3,066.90\\
    \hline
    \end{tabular}
    
    \footnotesize
    $*$ indicates that the method involves neither solving LP problems nor learning; the time cost is virtually zero.
\end{table}

We also implement benchmark policies for comparison. Due to the problem's state space being excessively large, approximately $10^6$, dynamic programming approaches become infeasible.
We therefore implement other benchmarks outlined in Section \ref{ssec:benchmarks},
with the results reported in
Table \ref{tab:9_pros_expected_revenue}.
It shows that the Linear-Pair policy outperforms simple heuristics such as \textsc{Uniform-Random} and \textsc{Greedy} by more than {10\%}, and also shows a {3.8\%} competitive edge against the more advanced CDLP algorithm.
Although the performance of ADP-0.05 is comparable to that of Linear-Pair policy, we observe that for this medium-sized network, the performance of ADP policies is highly sensitive to the level of time discretization.
When using a suboptimal discretization level, such as $\Delta t = 0.5$, our algorithm can outperform ADP-0.5 by a margin of up to {17.2\%}.
This further demonstrates the advantage of our algorithm, as it operates within a continuous-time framework and avoids issues with upfront time discretization.
Moreover, 
the CDLP method still offers a theoretical upper bound on the optimal expected revenue, as mentioned in Section ~\ref{ssec:benchmarks}. In this example, the upper bound is $708$. 
Therefore, it can be inferred that the performance of the Linear-Pair policy is within a 5\% gap from the performance of the optimal policy.

\begin{remark}[The instability of ADP time discretization]\label{rmk:adp-interpolation}
To explore the source of instability of time discretization in ADP, we have conducted supplementary experiments using interpolation.
Recall that for a given $\Delta t$, solving the LP problem \eqref{eq:ADP_optimization_problem} yields a linear value function approximation $V_{\text{ADP-}\Delta t} (t_k, x) = \theta^*_{k} + \sum_{i=1}^m V^*_{k,i} x_i$, which is exclusively defined on the time grid $0 = t_0 < t_1 < \cdots < t_K = T$.
To utilize the above value function approximation in a continuous manner, we employ the following interpolation technique: Denote 
\begin{align*}
    V_{\text{I-ADP-}\Delta t} (t, x) \coloneqq \Big[\frac{\theta^*_{k+1} - \theta^*_k}{\Delta t} (t - t_k) + \theta_k^*\Big] +  \sum_{i=1}^m \Big[\frac{V^*_{k+1, i} - V^*_{k,i}}{\Delta t} (t - t_k) + V^*_{k,i}\Big] x_i, \quad t \in (t_k, t_{k+1}]
\end{align*}
as the interpolated approximate value function.
With this, we can obtain a continuous-time policy, referred to as
I-ADP-$\Delta t$, by offering 
\begin{align*}
    &\arg\max_{S \in \mathcal{A}(x)} \sum_{j \in S} P_j(S)[p_j - (V_{\text{I-ADP-}\Delta t} (t, x) - V_{\text{I-ADP-}\Delta t} (t, x - A^j))] \\ 
    = &\arg\max_{S \in \mathcal{A}(x)} \sum_{j \in S} P_j(S)\bigg\{p_j - \sum_{i=1}^m \Big[\frac{V^*_{k+1, i} - V^*_{k,i}}{\Delta t} (t - t_k) + V^*_{k,i}\Big] A_{ij}\bigg\}
\end{align*}
at time $t$ and state $x$.
Table \ref{tab:9_pros_ADP_interpolation} shows that the performance of the interpolation-based ADP policies is almost consistent with that of the original ADP policies, which implies the instability stems from other sources than the coarseness of the decision epochs.
Additional experiments suggest one potential issue: ADP requires the solution to the large-scale LP \eqref{eq:ADP_optimization_problem}.
When $\Delta t$ is small, \eqref{eq:ADP_optimization_problem} presents a greater computational challenge than CDLP.
As a result, the {Gurobi Optimizer} uses column generation to obtain an approximate solution to \eqref{eq:ADP_optimization_problem} with a prespecified optimality tolerance.
When $\Delta t$ changes slightly, there is no guarantee that the solution to \eqref{eq:ADP_optimization_problem}, $\theta_k$ and $V_{k,i}$, are continuous.
{Indeed, we provide the experimental results that capture the variation of the coefficients $V_{0,i}$ for $i=1, \ldots, 6$, selected from the ADP solution under five different levels of time discretization near $\Delta t = 0.5$ in Appendix \ref{app:ADP_solution_time_discretization}.}
The discontinuity of $V_{\text{I-ADP-}\Delta t} (t, x)$ in $\Delta t$ is likely due to the fact that we do not have control over how the (approximate) solution to the LP \eqref{eq:ADP_optimization_problem} is obtained from column generation.
We believe this is the key reason behind the instability of the ADP approach. 
Note that this issue of ADP is unique to the continuous-time problem with na\"ive discretization.
\begin{table}[htbp]
    \centering
    \caption{Simulation results for interpolation-based ADP policies in Section~\ref{sec:mid-net}}
    \label{tab:9_pros_ADP_interpolation}
    \begin{tabular}{p{2cm}>{\raggedleft\arraybackslash}p{2.5cm}>{\raggedleft\arraybackslash}p{2cm}}
    \hline
    Policy & Average revenue & $99\%$-CI ($\pm$) \\
    \hline
    I-ADP-1 & 674.873 & 0.711 \\
    I-ADP-0.5 & 559.358 & 1.467 \\
    I-ADP-0.2 & 664.400 & 0.946 \\
    I-ADP-0.1 & 641.149 & 1.257 \\
    I-ADP-0.05 & 675.288 & 0.674 \\
    \hline
    \end{tabular}
\end{table}
\end{remark}

\subsection{Experiment Three: A Large Network}\label{sec:large-net}
We consider a network with $m = 100$ resources and $n = 200$ products,
each resource having an initial capacity of $c_i = 10$ for $i = 1, \ldots, 100$.
This configuration results in a problem with a state space of size $11^{100}$ and an action space of size $2^{200}$, mirroring the scale of real-world problems. 
The selling horizon is set to $T = 2,000$.
The consumption matrix $A_{100 \times 200}$ is formed by concatenating two square matrices, $B_{100 \times 100}$ and $\bar{B}_{100 \times 100}$, such that $A = [B \ \bar B]$, where matrices $B$ and $\bar{B}$ are generated from the identity matrix $I_{100}$, with 50 and 150 off-diagonal positions respectively set to 1 through random selection. 
Product prices $p_1, \ldots, p_{100}$ are drawn from a uniform distribution with range $[50, 250]$, while prices $p_{101}, \ldots, p_{200}$ are draw from a uniform distribution with range $[250, 500]$.
We assume that the customer choice behavior follows the example in Section~\ref{sec:mid-net}. 
More precisely, the $200$ products are categorized into $10$ segments,
with each product being assigned to one of the segments with equal probability. 
For each segment $l$, the preference weight $v_{lj}$ for the first 100 products $(j=1, \ldots, 100)$ are drawn from the Poisson distribution with mean $50$, while those for the latter $100$ products $(j=101, \ldots, 200)$ are drawn from the Poisson distribution with mean $100$.
The no-purchase weight $v_{l0}$ for each segment $l$ is drawn from the Poisson distribution with mean $10$.
We set the total arrival rate $\lambda$ to $0.8$. To randomly generate the arrival rate $\lambda_l$ for each customer segment $l$, we draw $E_l$ from the Poisson distribution with mean $10$ for each $l$ and set $\lambda_l = 0.8 \frac{E_l}{\sum_{l'= 1}^{10} E_{l'}}$.
Therefore, given an assortment $S$ and a product $j \in \mathcal{J}_l \cap S$, the choice probability is calculated as $P_{j}(S) = \frac{\lambda_l}{\lambda} \cdot \frac{v_{lj}}{v_{l0} + \sum_{j' \in \mathcal{J}_{l} \cap S} v_{lj'}}$.

In this example, we implement Algorithm \ref{alg:NN} with 2-NNs parametrization. 
Both the actor and critic networks are configured with four hidden layers, each having widths of $150$, $120$, $80$, and $20$, respectively.
Given the memory limitations of the GPU, we have reduced the batch size to $M=1$ for this large-scale problem.
The hyperparameters for this model are set as follows: entropy factor $\gamma = 1\times 10^{-4}$, learning rate $\alpha_{\theta} = 1 \times 10^{-3}$, and learning rate $\alpha_{\phi} = 5 \times 10^{-8}$. 
The other two approximation schemes are computationally infeasible because the policy space is too large to be stored or optimized.
Even for Linear-RO, the number of revenue-ordered assortments grows exponentially in the number of segments.
Figure~\ref{fig:200_pros_performance_evolution} illustrates how the average revenues of the updated policies varies throughout the learning process.
The dashed line marked in the figure represents an upper bound of the optimal value for the problem, obtained by solving the CDLP.
{For the ADP method, we cannot solve the (dual of) LP \eqref{eq:ADP_optimization_problem} even with column generation, due to the huge number of constraints}.
Therefore, we only implement other benchmark policies for comparison.
Table \ref{tab:200_pros_expected_revenue} displays the average revenues of the final policy from the 2-NNs experiment and the benchmark policies, \textsc{Uniform-Random}, \textsc{Greedy} and CDLP, along with the calculated ratios relative to the upper bound of $197,785$.
The 2-NNs policy exhibits a small performance gap of $0.13\%$ from the upper bound, indicating that it nearly achieves the optimal solution. 
The computational time for the 2-NNs experiment was approximately $33.6$ hours, which is reasonable given the scale of the example.

\begin{figure}[htbp]
    \centering
    \begin{tikzpicture}
    \begin{axis}[
        xlabel={Episode},
        ylabel={Average Revenue},
        xtick={0,20000,40000,60000,80000,100000},
        xticklabels={0,2,4,6,8,10},
        xtick scale label code/.code={$\times 10^4$},
        ytick={0,140000,160000,180000,197785},
        yticklabel style={
            /pgf/number format/fixed,
            /pgf/number format/precision=5
        },
        scaled y ticks=false,
        xmin=0, xmax=100000,
        ymin=140000, ymax=200000,
        legend pos=south east,
        grid=minor
    ]
    \addplot[very thick, blue] table [col sep=comma, x=x, y=y] {data/large_network.csv};
    \addlegendentry{2-NNs}
    \addplot [dashed, very thick, samples=2] coordinates {(0,197785) (100000,197785)};
    \end{axis}
    \end{tikzpicture}
    \caption{Average revenues of Algorithm~\ref{alg:NN} over episodes for the example in Section~\ref{sec:large-net}}
    \label{fig:200_pros_performance_evolution}
\end{figure}

\begin{table}[htbp]
    \centering
    \caption{Simulation results for selected policies in Section~\ref{sec:large-net}}
    \label{tab:200_pros_expected_revenue}
    \begin{tabular}{p{3.5cm}>{\raggedleft\arraybackslash}p{2.5cm}>{\raggedleft\arraybackslash}p{2cm}>{\raggedleft\arraybackslash}p{3cm}>{\raggedleft\arraybackslash}p{3cm}}
        \hline
        Policy  &  Average revenue & $99\%$-CI ($\pm$) & {$1 - \frac{\text{Average revenue}}{\text{Upper bound}}$ ($\%$)} & Time cost (second)\\
        \hline
        2-NNs & 197,533 & 5.192 & 99.87 & 120,836.50 \\
        \textsc{Uniform-Random} & 141,260 & 78.462 & 71.42 & $*$ \\
        \textsc{Greedy} & 161,205 & 61.465 & 81.51 & $*$ \\
        CDLP & 173,502 & 113.694 & 87.72 & 29.64 \\ 
        \hline
        \end{tabular}
        
    \footnotesize
    $*$ indicates that the method involves neither solving LP problems nor learning; time cost is virtually zero.
\end{table}

\subsection{Experiment Four: Continuous-Time vs. Discrete-Time RL Methods}\label{sec:CT-vs-DT} 
While the preceding experiments examine the scalability of our approach across different network sizes, we now illustrate its structural advantage relative to discretization-based RL methods through an example with highly non-stationary arrivals. Specifically, we consider a network featuring $3$ resources and $7$ products. The consumption is captured by matrix 
$A = \begin{bmatrix} 1 & 1 & 0 & 1 & 1 & 0 & 0 \\ 1 & 1 & 1 & 0 & 0 & 1 & 0 \\ 1 & 0 & 1 & 1 & 0 & 0 & 1 \end{bmatrix}$. 
The initial stocks for the $3$ resources are set to $c = (4,\, 4,\, 4)^{\top}$ and the prices for the $7$ products are fixed at $p = (800,\, 100,\, 100,\, 100,\, 10,\, 10,\, 10)^\top$.
Customer choices follow the MNL model, with the no-purchase weight $v_0 = 1$ and the product preference weights $(v_1, \dots, v_7) = (0.02,\, 1,\, 1,\, 1,\, 10,\, 10,\, 10)$.
The booking horizon is set to $T=10$, during which the arrival rate is maintained at $\lambda_0 = 0.5$, except for a sudden surge to $\lambda_1 = 50$ within the time window $[7.5,\, 8.0]$.

In this example, we adopt  the discrete-time Advantage Actor-Critic (A2C) algorithm as the benchmark, which is the synchronous variant of the widely adopted A3C algorithm \citep{mnih2016asynchronous}.
We select the A2C algorithm because its underlying architecture closely aligns with our continuous-time algorithm, ensuring a fair comparison that isolates the impact of time discretization.
Following the discretization procedure with time step $\Delta t$ as detailed in Section~\ref{sec:structural-advantage}, the A2C algorithm is implemented as follows. The overall objective is to minimize a composite loss function: $L_{total}(\theta, \phi) =  \beta L_{value}(\theta) + L_{policy}(\phi) + \gamma L_{entropy}(\phi)$. 
The value loss $L_{value}(\theta)$ is computed using empirical Monte Carlo returns:
\begin{align*}
    L_{value}(\theta) = \frac{1}{2}\E \left[\sum_{k=0}^{K-1} \bigg(\sum_{k'=k}^{K - 1} r_{(t_{k'}, t_{k'+1}]}^{\bpi^{\phi}} - J^{\theta}_{\Delta t}(t_k, X_{t_k}^{\bpi^{\phi}})\bigg)^2\right].
\end{align*}
The policy loss $L_{policy}(\phi)$ and the entropy loss $L_{entropy}(\phi)$ are defined as
\begin{align*}
    L_{policy}(\phi) = - \E \left[\sum_{k=0}^{K-1} \log \bpi^{\phi}(S_{t_k}^{\bpi^{\phi}} \mid t_k, X_{t_k}^{\bpi^{\phi}}) \widehat{A}_{t_k}^{\bpi^{\phi}}\right] \quad \text{ and } \quad L_{entropy}(\phi) = -\E\left[\sum_{k=0}^{K-1} \mathcal{H}(\bpi^{\phi}(\cdot \mid t_k, X_{t_k}^{\bpi^{\phi}}))\right],
\end{align*}
where the advantage $\widehat{A}_{t_k}^{\bpi^{\phi}}$ is estimated via the 1-step TD error: $\widehat{A}_{t_k}^{\bpi^{\phi}} = r_{(t_{k}, t_{k+1}]}^{\bpi^{\phi}} + J^{\theta}_{\Delta t}(t_{k+1}, X_{t_{k+1}}^{\bpi^{\phi}}) - J^{\theta}_{\Delta t}(t_k, X_{t_k}^{\bpi^{\phi}})$.
We note that, under a linear parameterization $J^{\theta}_{\Delta t}(t_k, x) = \theta^\top \varphi(t_k, x)$, the minimization of the value loss $L_{value}(\theta)$ admits a closed-form solution $\theta^*$, which is analogous to the result derived for the continuous-time approach in \eqref{eq:PE_MC}.
Accordingly, when implementing the A2C algorithm with both Linear-Pair and Linear-RO approximations, we directly compute the optimal parameter $\theta^*$ and omit the value loss term $L_{value}(\theta)$  from the overall objective, in a manner analogous to Algorithm~\ref{alg:MC}.

We compare the performance of our continuous-time algorithm (CT) against the discrete-time A2C algorithm (DT-$\Delta t$) across three different approximation schemes: Linear-Pair, Linear-RO, and 2-NNs. For the A2C benchmark, we examine two discretization levels: $\Delta t = 0.5$ and $\Delta t = 0.05$.}
To ensure a fair comparison, the CT and DT-$\Delta t$ algorithms share identical network configurations under each given approximation scheme. 
The specific configurations are as follows: For all experiments, each update is performed using a batch of $M=10$ episodes. 
In the Linear-Pair and Linear-RO experiments, the degree of the polynomials in both critic and actor approximations is set to $d=2$, and the initial values for all components of $\phi$ are set to zero.
In the 2-NNs experiments, both the actor and critic networks are configured with two hidden layers, each having a width of $64$. 
Table~\ref{tab:DT-CT-revenue} reports the average revenues of the final policies for the three algorithms across the three approximation schemes.
These results are obtained after $10^6$ training episodes and evaluated under their respective well-performing hyperparameters. Table~\ref{tab:DT-CT-time} provides the corresponding computational time cost. 
The proposed continuous-time algorithm consistently outperforms both the DT-0.5 and DT-0.05 benchmarks in terms of average revenue across all three approximation schemes. Notably, the CT algorithm achieves the best performance under the 2-NNs parameterization, with a significant improvement of 16.64\% over the DT-0.5 algorithm.
For the discrete-time A2C algorithms, while adopting a finer time grid (DT-0.05) noticeably improves the average revenue over DT-0.5, it incurs a severe computational penalty, increasing the training time by a factor of roughly 3.5. 
In contrast, our continuous-time algorithm is not subject to the performance-efficiency trade-off inherent in discretization-based methods. It delivers superior revenue while maintaining a time cost comparable to the DT-0.5 algorithm with coarse discretization.
\begin{table}[htbp]
    \centering
    \begin{minipage}[t]{0.48\textwidth}
    \centering
    \caption{Average revenues of three algorithms across different approximation schemes in Section~\ref{sec:CT-vs-DT}}
    \label{tab:DT-CT-revenue}
    \begin{tabular}{p{2cm}>{\raggedleft\arraybackslash}p{1.5cm}>{\raggedleft\arraybackslash}p{1.5cm}>{\raggedleft\arraybackslash}p{1.5cm}}
    \hline
    Policy & DT-0.5 & DT-0.05 & CT \\
    \hline
    Linear-Pair & 585.630 & 640.862 & 681.222 \\
    Linear-RO & 567.776 & 654.919 & 657.115 \\
    2-NNs & 615.275 & 675.850 & 717.660 \\
    \hline
    \end{tabular}
    \end{minipage}
    \hfill
    \begin{minipage}[t]{0.48\textwidth}
    \centering
    \caption{Time costs (in seconds) of three algorithms across different approximation schemes in Section~\ref{sec:CT-vs-DT}}
    \label{tab:DT-CT-time}
    \begin{tabular}{p{2cm}>{\raggedleft\arraybackslash}p{1.5cm}>{\raggedleft\arraybackslash}p{1.5cm}>{\raggedleft\arraybackslash}p{1.5cm}}
    \hline
    Policy & DT-0.5 & DT-0.05 & CT \\
    \hline
    Linear-Pair & 4,428.20 & 16,459.60 & 4,638.53 \\
    Linear-RO & 4,396.50 & 15,967.21 & 4,222.07 \\
    2-NNs & 4,286.19 & 14,730.78 & 4,333.73 \\
    \hline
    \end{tabular}
    \end{minipage}
\end{table}

\section{Conclusion}\label{sec:conclusion}
In this paper, we propose a practical continuous-time RL framework for event-driven intensity control. 
Although grounded in the specific context of choice-based network revenue management, our framework extends naturally to broader contexts of event-driven intensity control. 
For instance, in the queueing context, the decision of whether to admit an arriving job can be modeled as controlling the intensity of the effective arrival process.
In Appendix~\ref{app:queueing-control}, we apply our continuous-time RL approach to the classic problem of admission control in a single-server queue. The accompanying numerical example in this appendix demonstrates the strong performance of our approach in this application.

While we have laid the theoretical foundation for the proposed continuous-time framework and the algorithms have been shown to perform well numerically, the study also opens a number of important future directions.
First, we have shown that by choosing a proper value function approximation, the discretization error associated with integrating the basis functions over time can be eliminated. 
It remains unknown what class of policy and function approximation can have the same property.
Second, the convergence of the continuous-time RL algorithms developed in this paper for event-driven intensity control has yet to be established. 
Third, many new algorithms have been developed in the RL community (such as proximal policy optimization) and implemented in practice with success, most of which target discrete-time systems. 
It remains an open question to systematically convert these algorithms to the continuous-time control problems with possibly discrete state spaces (see e.g. \cite{zhao2024policy} for a recent study on policy optimization for controlled diffusions).

\newpage
\bibliographystyle{chicago}
\bibliography{refs}	

\newpage
\appendix

\section{Summary of Notation}\label{sec:app_notation}
\begin{longtable}{>{\raggedright\arraybackslash}p{.2\textwidth}>{\raggedright\arraybackslash}p{.8\textwidth}}
    $A$ & consumption matrix, $A \coloneqq [a_{ij}]_{m \times n}$ \\
    
    $A^j$ & $j$-th column of $A$, incidence vector for product $j$ \\
    
    $\mathcal{A}$ & action space, collection of all subsets of $\mathcal{J}$ \\

    $\mathcal{A}(x)$ & collection of all available assortments at the state $x$, $\mathcal{A}(x) \coloneqq \{S \in \mathcal{A}: x \geq A^j \text{ for all } j \in S\}$ \\
    $B([0, T] \times \mathcal{X})$ & space of Borel measurable functions defined on $[0, T] \times \mathcal{X}$ such that $\Vert \psi \Vert \coloneqq \sup_{(t, x) \in [0, T] \times \mathcal{X}} \vert \psi(t, x) \vert < \infty$ \\
    $c$ & initial inventory of $m$ resources, $c = (c_1, \ldots, c_m)^\top$ \\
    $C^{1, 0}([0, T] \times \mathcal{X})$ & space of real-valued functions defined on $[0, T] \times \mathcal{X}$ that are continuously differentiable in $t$ over $[0, T]$ for all $x \in \mathcal{X}$\\
    $d$ & degree of polynomial involved in parametric families of the value functions or policies \\
    $\mathcal{F}$ & rich enough $\sigma$-algebra defined on the sample space $\Omega$ \\
    
    {$\{\mathcal{F}_t\}_{t \geq 0}$} & filtration generated by standard Poisson process $N^{\lambda}$ and two sequences of i.i.d. uniform random variables for randomized actions and customer choices, respectively  \\ 
    $\{\mathcal{F}_t^{X^{\bpi}}\}_{t \geq 0}$ & natural filtration generated by state process $X^{\bpi}$ under randomized policy $\bpi$ \\   
    $\{\mathcal{F}_t^{\tilde{X}^{\bpi}}\}_{t \geq 0}$ & natural filtration generated by exploratory state process $\tilde{X}^{\bpi}$ \\  
    {$\{\mathcal{G}_t\}_{t \geq 0}$} & filtration generated by standard Poisson process $N^{\lambda}$ and a sequences of i.i.d. uniform random variables for customer choices \\
    $H(t, x, S, v(\cdot, \cdot))$ & Hamiltonian of the classical intensity control problem \\
    $\mathcal{H}(\bpi(\cdot \mid t, x))$ & entropy of randomized policy $\bpi(\cdot \mid t, x)$\\

    $J(t, x; \bpi)$ & entropy-regularized value function under policy $\bpi$ in continuous-time RL formulation \\
    $J_{\Delta t}(t_k, x; \bpi)$ & entropy-regularized value function under policy $\bpi$ in discretization-based RL methods \\
    $J^{*}(t, x)$ & optimal value function in the continuous-time RL formulation \\
    $J^{\theta}(t, x)$ & parametric value function in continuous-time RL formulation \\
    $J_{\Delta t}^{\theta}(t_k, x)$ & parametric value function in discretization-based RL methods \\
    $\mathcal{J}$ & set of all products, $\mathcal{J} = \{1, \ldots, n\}$ \\
    $K$ & total number of intervals in a uniform grid $0 = t_0 < t_1 < \cdots < t_K = T$ with step size $\Delta t = \frac{T}{K}$ \\
    $\mathbb{K}$ & set of all valid time-state-action triples, $\mathbb{K} \coloneqq \{(t, x, S) : t \in [0, T],\ x \in \mathcal{X},\ S \in \mathcal{A}(x)\}$ \\
    $L$ & total number of state jumps over $[0, T]$ \\
    $\bar{L}$ & number of disjoint consideration sets for customers \\
    $m$ & number of resources \\
    $M$ & batch size in algorithms \\
    $\mathcal{M}_{\Delta t}$ & discrete-time model under a uniform grid $0 = t_0 < t_1 < \cdots < t_K = T$ with step size $\Delta t = \frac{T}{K}$, it is assumed that in each period $(t_k, t_{k+1}]$, one arrival occurs with probability $\lambda \Delta t$, and no arrivals occur with probability $ 1 - \lambda \Delta t$ \\
    $n$ & number of products \\
    $\bar{n}$ & total number of revenue-ordered assortments \\
    $N$ & number of episodes in algorithms \\ 
    $N^{\lambda}$, $N_t^{\lambda}$ & a standard Poisson process with rate $\lambda$, to model the arrival process of all potential customers \\
    
    $N^{\boldsymbol{u}}$, $N_t^{\boldsymbol{u}}$ &  $N_t^{\boldsymbol{u}} = (N_{1,t}^{\boldsymbol{u}}, \ldots, N_{n, t}^{\boldsymbol{u}})^{\top}$, a vector of Poisson processes with intensities $(\lambda P_{1}(S_t), \ldots, \lambda P_{n}(S_t))$ modulated by deterministic control $\boldsymbol{u} = \{S_t : t \in [0, T]\}$ \\

    $N^{\bpi}$, $N_t^{\bpi}$ &  $N_t^{\bpi} = (N_{1,t}^{\bpi}, \ldots, N_{n, t}^{\bpi})^{\top}$, with $N_{j,t}^{\bpi}$ representing the number of product $j$ sold up to time $t$ under the randomized policy $\bpi$ \\

    $p$ & prices vector for $n$ products, $p = (p_1, \ldots, p_n)^\top$ \\
    
    $P_0(S)$ & probability that an arriving customer makes no purchase when assortment $S$ is offered \\
    
    $P_j(S)$ & probability that an arriving customer purchases product $j$ when assortment $S$ is offered \\
    $\Prob$ & probability measure defined on $\mathcal{F}$\\
    $q(y \mid t, x, S)$ & transition rate of state process \\
    
    $r(S)$ & average revenue rate when offered $S$, $r(S)\coloneqq \lambda \sum_{j=1}^{n} p_j P_j(S)$\\
    
    $S$ & assortment offered by the firm, $S \subseteq \mathcal{J}$\\
    
    $S_t^{\bpi}$ & assortment offered at time $t$ under policy $\bpi$ \\
    
    $T$ & selling time limit \\ 
    
    $\boldsymbol{u}$ & deterministic control process, $\boldsymbol{u} = \{S_t \in \mathcal{A}: t \in [0, T] \}$  \\

    $\mathcal{U}$ & set of all non-anticipating control processes \\

    $V(t, x; \boldsymbol{u})$ & value function under control $\boldsymbol{u}$ in the classical formulation \\
    $V^{*}(t, x)$ & optimal value function in the classical formulation \\
    $V_{\Delta t}^{*}(t_k, x)$ &  optimal value function under the discrete-time model $\mathcal{M}_{\Delta t}$ \\
    $V_{\text{ADP-}\Delta t}(t_k, x)$ & approximate value function derived from ADP under the discrete-time model $\mathcal{M}_{\Delta t}$, $V_{\text{ADP-}\Delta t}(t_k, x) = \theta_k + \sum_{i = 1}^{m} V_{k, i} x_i$
    \\
     $V_{\text{I-ADP-}\Delta t}(t, x)$ & continuous-time interpolated approximated value function based on $V_{\text{ADP-}\Delta t}(t_k, x)$ \\
    $X^{\boldsymbol{u}}$, $X_t^{\boldsymbol{u}}$, $X_{t-}^{\boldsymbol{u}}$ & process of the remaining inventory (state) under deterministic control $\boldsymbol{u}$, $X_{t-}^{\boldsymbol{u}}$ refers to the left limit of process $X^{\boldsymbol{u}}$ at time $t$ \\
    
    $X^{\bpi}$, $X_t^{\bpi}$, $X_{t-}^{\bpi}$ &  process of the remaining inventory (state) under randomized policy $\bpi$, $X_{t-}^{\bpi}$ refers to the left limit of process $X^{\bpi}$ at time $t$ \\
    $\tilde{X}^{\bpi}$, $\tilde{X}_t^{\bpi}$, $\tilde{X}_{t-}^{\bpi}$ &  exploratory state process defined on the probability space $(\Omega, \mathcal{F}, \Prob; \{\mathcal{G}_t\}_{t \geq 0})$, it has the same distribution as the sample state process $\{X^{\bpi}_t: t \in [0, T] \}$ defined on $(\Omega, \mathcal{F}, \Prob; \{\mathcal{F}_t\}_{t \geq 0})$) \\

    $\mathcal{X}$ & state space, $\mathcal{X} = \{0, \ldots, c_1\} \times \cdots \times \{0, \ldots, c_m\}$ \\
    $\alpha_{\phi}$ & learning rate for actor (policy) network \\
    $\alpha_{\theta}$ & learning rate for critic (value function) network \\
    $\gamma$ & entropy factor (temperature parameter) \\
    $\lambda$ & customer arrival rate \\
    $\bpi(S \mid t, x)$ & randomized policy \\
    $\bpi^{*}(S \mid t, x)$ & optimal randomized policy for the entropy-regularized optimization problem \\
    $\bpi^{\phi}(S \mid t, x)$ & parametric randomized policy \\
    $\bsigma^{\phi}(S^{[j]} \mid t)$ & parametric policy with action space that comprises all revenue-ordered assortments $S^{[1]}$, $\cdots$, $S^{[\bar{n}]}$  \\
    $\varphi(t, x)$ & vector of basis functions for the linear value function approximation, $\varphi(t, x) \coloneqq (\varphi_1(t, x), \ldots, \varphi_W(t, x))^\top$  \\
    $\phi$ & parameters in actor (policy) network \\
    $\theta$ & parameters in critic (value function) network \\
    $\tau_l$ & $l$-th jump time the state process
\end{longtable}

\section{The Reformulation of \eqref{eq:value_func_entropy_exploratory} in Section \ref{ssec:RL-formulation}\label{app:rl-reformulation}}

{We first prove that
for each $j=1, \ldots, n$, the process $\{N^{\bpi}_{j,t} - \int_0^t \sum_{S \in \mathcal{A}(X^{\bpi}_{t-})} \lambda P_j(S)\bpi (S \mid s, X^{\bpi}_{t-}) dt : t \in [0, T]\}$ is an $\{\mathcal{F}_t\}_{t\geq 0}$-martingale.
Let $N^{\lambda}_{(t, t+\Delta t]}$ denote the number of customer arrivals in $(t, t+\Delta t]$; if there is exactly one customer arrival during this time interval, we denote the arrival time by $t_a$. Given $N^{\lambda}_{(t, t+\Delta t]} = 1$, the random variable $t_{a}$ is uniformly distributed over $(t, t+\Delta t]$ by the property of the Poisson process $N^{\lambda}$. 
To demonstrate the Markov property of $N^{\bpi}$ and derive its infinitesimal generator, for any bounded, measurable, real-valued function $f$, we evaluate $\E[f( N_{t+\Delta t}^{\bpi}) \mid N_t^{\bpi} = z,\, \mathcal{F}_t]$ as follows:
\begin{align*}
    &\E[f( N_{t+\Delta t}^{\bpi}) \mid N_t^{\bpi} = z,\, \mathcal{F}_t] \\
    = {}&\sum_{y \in \mathcal{X}} f(y) \cdot \Prob (N_{t+\Delta t}^{\bpi} = y \mid N_t^{\bpi} = z,\, \mathcal{F}_t) \\
    = {}&\sum_{y \in \mathcal{X}} f(y) \cdot \bigg\{\Prob (N_{t+\Delta t}^{\bpi} = y \mid N^{\lambda}_{(t, t+\Delta t]} = 0,\, N_t^{\bpi} = z,\, \mathcal{F}_t) \cdot \Prob (N^{\lambda}_{(t, t+\Delta t]} = 0 \mid  N_t^{\bpi} = z,\, \mathcal{F}_t) \\
    &+ \bigg[\int_{t}^{t+\Delta t}\Prob (N_{t+\Delta t}^{\bpi} = y \mid t_a = u,\, N^{\lambda}_{(t, t+\Delta t]} = 1,\, N_t^{\bpi} = z,\, \mathcal{F}_t) \cdot \frac{1}{\Delta t} du \bigg] \cdot \Prob (N^{\lambda}_{(t, t+\Delta t]} = 1 \mid  N_t^{\bpi} = z,\, \mathcal{F}_t)  \\
    &+ \Prob (N_{t+\Delta t}^{\bpi} = y \mid N^{\lambda}_{(t, t+\Delta t]} \geq 2,\, N_t^{\bpi} = z,\, \mathcal{F}_t) \cdot \Prob (N^{\lambda}_{(t, t+\Delta t]} \geq 2 \mid  N_t^{\bpi} = z,\, \mathcal{F}_t)
    \bigg\} \\
    = {}&f(z) \cdot [1-\lambda \Delta t + o(\Delta t)] + \bigg[f(z) \bigg(\sum_{S \in \mathcal{A}(c-Az)} P_0(S) \frac{\int_{t}^{t+\Delta t} \bpi(S \mid u,\, c-Az) du}{\Delta t} \bigg)\\
    &+ \sum_{j=1}^n f(z + \boldsymbol{e}^j) \bigg( \sum_{S \in \mathcal{A}(c-Az)} P_j(S) \frac{\int_{t}^{t+\Delta t} \bpi(S \mid u, c-Az) du}{\Delta t}\bigg)\bigg] \cdot [\lambda \Delta t + o(\Delta t)] + o(\Delta t) \\
    = {}& f(z) + \lambda \Delta t \cdot \sum_{j=1}^n \bigg(\sum_{S \in \mathcal{A}(c-Az)} P_j(S) \frac{\int_{t}^{t+\Delta t} \bpi(S \mid u, c-Az) du}{\Delta t}\bigg)  \cdot [f(z + \boldsymbol{e}^j) - f(z)] + o(\Delta t).
\end{align*}
Since the result depends only on the current state $z$, the process $N^{\bpi}$ satisfies the Markov property. 
Next, its infinitesimal generator is given by
\begin{align}\label{eq:N-generator}
    \lim_{\Delta t\to 0}\frac{\E[f( N_{t+\Delta t}^{\bpi}) \mid N_t^{\bpi} = z] -f(z)}{\Delta t} = \lambda \sum_{j=1}^n \sum_{S \in \mathcal{A}(c-Az)} P_j(S) \bpi(S \mid t, c-Az)  \cdot [f(z + \boldsymbol{e}^j) - f(z)].
\end{align}
Given that $N^{\bpi}$ has generator \eqref{eq:N-generator} and $X_t^{\bpi} = c - AN_t^{\bpi}$ is bounded for all $t \in [0, T]$, it follows from Dynkin's formula that 
for each $j=1, \ldots, n$, the process $\{N^{\bpi}_{j,t} - \int_0^t \sum_{S \in \mathcal{A}(X^{\bpi}_{t-})} \lambda P_j(S)\bpi (S \mid s, X^{\bpi}_{t-}) dt : t \in [0, T]\}$ is an $\{\mathcal{F}_t\}_{t\geq 0}$-martingale}.

Next, we introduce the exploratory dynamics of $\{\tilde X^{\bpi}_t: t \in [0, T] \}$, which can be interpreted as the average of trajectories of the sample state process $\{X^{\bpi}_t: t \in [0, T] \}$ over infinitely many randomized actions. In the case of controlled diffusions, \cite{wang2020reinforcement} derived heuristically the exploratory state process by applying a law-of-large-numbers argument to the drift and diffusion coefficients of the controlled diffusion process. However, the sample state process $\{X^{\bpi}_t: t \in [0, T] \}$ in our setting is a pure jump process, so their approach does not apply. Instead, we derive the infinitesimal generator of the sample state process $\{X^{\bpi}_t: t \in [0, T] \}$, from which we will identify the dynamics of the exploratory state process. 

From the above derivation for the generator of $N^{\bpi}$, it is evident that for $s<t$, given the present state $X_s^{\bpi}$, the increment $X_t^{\bpi} - X_s^{\bpi} = -A(N_t^{\bpi} - N_s^{\bpi})$ does not depend on any other past information in $\mathcal{F}_s$. Therefore, the sample state process $\{X_t^{\bpi}: t\in [0, T]\}$ is also a Markov process. Using a similar derivation as for \eqref{eq:N-generator}, we can obtain the generator of $\{X_t^{\bpi}: t\in [0, T]\}$. For each time-state pair $(t, x)$ and any bounded and measurable function $g: \mathcal{X} \mapsto \mathbb{R}$,
\begin{align}\label{eq:gen-expl}
    \lim_{\Delta t\to 0}\frac{\E[g( X_{t+\Delta t}^{\bpi}) \mid X_t^{\bpi} = x] -g(x)}{\Delta t} = \lambda \sum_{j=1}^n \sum_{S \in \mathcal{A}(x)} P_j(S) \bpi(S \mid t, x)  \cdot [g(x - A^j) - g(x)].
\end{align}
{One can observe from \eqref{eq:gen-expl} that the effect of individually sampled actions has been averaged out. We can construct an (``averaged") exploratory process $\{\tilde{X}^{\bpi}_{t}: t \in [0, T] \}$, defined on the original probability space $(\Omega, \mathcal{F}, \Prob; \{\mathcal{G}_t\}_{t \geq 0})$, as a continuous-time Markov chain with the generator given by \eqref{eq:gen-expl} and $\tilde{X}^{\bpi}_{0}=c$.} 
Because the generator and the initial state of $\{X^{\bpi}_{t}: t \in [0, T] \}$ and $\{\tilde{X}^{\bpi}_{t}: t \in [0, T] \}$ are identical, we infer that 
the sample state process $\{X^{\bpi}_t: t \in [0, T] \}$ has the same distribution as the distribution of the exploratory state process $\{\tilde X^{\bpi}_t: t \in [0, T] \}$. 
This completes the proof of \eqref{eq:value_func_entropy_exploratory}.

\newpage
\section{Actor-Critic Algorithms}\label{app:AC_algs}
\begin{algorithm}[H]
\caption{Actor-Critic Algorithm with Linear Value Function Approximation (PE via TD(0))}\label{alg:TD}
\small
    \begin{algorithmic}[1]
    \State\textbf{Inputs:} initial state $c$, time horizon $T$, number of episodes $N$, batch size $M$; functional forms of basis functions $\varphi_1, \ldots, \varphi_W$,
    functional form of the policy $\bpi^{\phi}(\cdot \mid \cdot, \cdot)$ and an initial value $\phi_0$; 
    entropy factor $\gamma$, learning rate $\alpha_{\phi}$

    \State\textbf{Required program:} an environment simulator $(t', x', S', r') = Environment(t, x, \bpi^{\phi}(\cdot \mid \cdot, \cdot))$
    \State Initialize $\phi \leftarrow \phi_0$
    \For{$i = 1$ to $N$}
    \State Store $(\tau_0^{(i)}, x_0^{(i)}) \gets (0, c)$
    \State Initialize $l = 0$, $(t, x) = (0, c)$ 
    \Comment{Initialize $l$ to count jumps in each state trajectory, and $(t, x)$ to record the time and state right after each jump}
    \While{True}
    \State Apply $(t, x)$ to the environment simulator to get $(t', x', S', r') = Environment(t, x, \bpi^{\phi}(\cdot \mid \cdot, \cdot))$
    \If{$t' \geq T$}
    \State Store $L^{(i)} \gets l$, $\tau_{L^{(i)}+1}^{(i)} \gets T$
    \State \textbf{break}
    \EndIf
    \State Update $l \gets l + 1$
    \State Store observation at jump time: $(\tau_l^{(i)}, x_l^{(i)}, S_l^{(i)}, r_l^{(i)}) \gets (t', x', S', r')$
    \State Update $(t, x) \gets (t', x')$
    \EndWhile
    \If{$i= 0 \pmod{M} $} \Comment{Perform an update using $M$ episodes generated under the policy}
        \State Evaluate policy $\bpi^{\phi}$: [using formula \eqref{eq:PE_TD_0}, incorporating techniques \eqref{eq:integral_TD_right_vector} and \eqref{eq:integral_F}]
        \begin{align*}
            \theta^* = 
            &\Bigg[\frac{1}{M} \sum_{k=i-M+1}^i \bigg(\sum_{l = 0}^{L^{(k)}} F(\tau_{l}^{(k)}, \tau_{l+1}^{(k)}, x_{l}^{(k)}) + \sum_{l = 1}^{L^{(k)}} \varphi(\tau_l^{(k)}, x_{l-1}^{(k)}) [\varphi(\tau_l^{(k)}, x_{l}^{(k)}) - \varphi(\tau_l^{(k)}, x_{l-1}^{(k)})] \bigg) \Bigg]^{(-1)} \times\\
            & \Bigg[ \frac{1}{M} \sum_{k=i-M+1}^i \bigg( \sum_{l = 1}^{L^{(k)}} \varphi(\tau_l^{(k)}, x_{l-1}^{(k)}) r_{\tau_l}^{(k)} + \gamma \sum_{l = 0}^{L^{(k)}} E(\tau_l^{(k)}, \tau_{l+1}^{(k)}, x_{l}^{(k)}; \varphi(\cdot, x_{l}^{(k)}), \bpi^{\phi}) \bigg) \Bigg]
        \end{align*}
        \State Compute policy gradient at $\phi$: [using formula \eqref{eq:PG-update-rule}]
        \begin{align*}
            \Delta \phi = \nabla_{\phi}\Bigg\{ 
            \frac{1}{M} \sum_{k= i - M + 1}^{i} \bigg(&\sum_{l = 1}^{L^{(k)}} \log \bpi^{\phi} ( S_{l}^{(k)} \mid \tau_{l}^{(k)}, x_{l-1}^{(k)}) [ J^{\theta^*} (\tau_l^{(k)}, x_l^{(k)}) - J^{\theta^*} (\tau_{l}^{(k)}, x_{l-1}^{(k)} ) + r_{l}^{(k)} ] \\
            & + \gamma \sum_{l = 0}^{L^{(k)}} E(\tau_{l}^{(k)}, \tau_{l+1}^{(k)}, x_{l}^{(k)}; \boldsymbol{1}, \bpi^{\phi}) \bigg)\Bigg\}
        \end{align*}
        \State Update $\phi$ by
        \begin{align*}
            \phi \leftarrow \phi + \alpha_{\phi}\Delta \phi
        \end{align*} 
    \EndIf
    \EndFor
    \end{algorithmic}
\end{algorithm}

\begin{algorithm}[H]
\caption{Actor-Critic Algorithm with Neural Networks (PE via Monte Carlo)} \label{alg:NN}
\small
    \begin{algorithmic}[1]
    \State\textbf{Inputs:} initial state $c$, time horizon $T$, number of episodes $N$, batch size $M$; critic network $J^{\theta}(t, x)$ and an initial value $\theta_0$; 
    actor network $\bpi^{\phi}(S \mid t, x)$ and an initial value $\phi_0$; 
    entropy factor $\gamma$, learning rates $\alpha_{\theta}$, $\alpha_{\phi}$
    \State\textbf{Required program:} an environment simulator $(t', x', S', r') = Environment(t, x, \bpi^{\phi}(\cdot \mid \cdot, \cdot))$
    \State {Initialize $\theta \leftarrow \theta_0$, $\phi \leftarrow \phi_0$}
    \For{$i = 1$ to $N$} 
    \State Store $(\tau_0^{(i)}, x_0^{(i)}) \gets (0, c)$
    \State Initialize $l = 0$, $(t, x) = (0, c)$ 
    \Comment{Initialize $l$ to count jumps in each state trajectory, and $(t, x)$ to record the time and state right after each jump}
    \While{True}
    \State Apply $(t, x)$ to the environment simulator to get $(t', x', S', r') = Environment(t, x, \bpi^{\phi}(\cdot \mid \cdot, \cdot))$
    \If{$t' \geq T$}
    \State Store $L^{(i)} \gets l$, $\tau_{L^{(i)}+1}^{(i)} \gets T$
    \State \textbf{break}
    \EndIf
    \State Update $l \gets l + 1$
    \State Store observation at jump time: $(\tau_l^{(i)}, x_l^{(i)}, S_l^{(i)}, r_l^{(i)}) \gets (t', x', S', r')$
    \State Update $(t, x) \gets (t', x')$
    \EndWhile
    \If{$i= 0 \pmod{M}$} \Comment{Perform an update using $M$ episodes generated under the policy}
        \State Update critic network: [using formula \eqref{eq:MC-general-update-rule}]
        \begin{align*}
            \theta \leftarrow \theta - \alpha_{\theta} \nabla_{\theta} \Bigg\{\frac{1}{M} \sum_{k=i - M + 1}^i \sum_{l = 0}^{L^{(k)}} \bigg( &\frac{1}{2} D(\tau_l^{(k)}, \tau_{l+1}^{(k)}, x_{l}^{(k)}; \theta) \\
            &- b(\tau_{l}^{(k)}, \tau_{l+1}^{(k)}, x_{l}^{(k)}; \theta) \cdot \sum_{l' = l + 1}^{L^{(k)}} [ r_{l'}^{(k)} + \gamma E(\tau_{l'}^{(k)}, \tau_{l'+1}^{(k)}, x_{l'}^{(k)}; \boldsymbol{1}, \bpi^{\phi})] \\
            &- \gamma E(\tau_{l}^{(k)}, \tau_{l+1}^{(k)}, x_{l}^{(k)}; b(\tau_{l}^{(k)}, \cdot, x_{l}^{(k)}; \theta), \bpi^{\phi}) \bigg)\Bigg\}
        \end{align*}
        \State Update actor network: [using formula \eqref{eq:PG-update-rule}]
        \begin{align*}
            \phi \leftarrow \phi + \alpha_{\phi} \nabla_{\phi} \Bigg\{
            \frac{1}{M} \sum_{k=i - M + 1}^i \bigg(&\sum_{l = 1}^{L^{(k)}} \log \bpi^{\phi} ( S_{l}^{(k)} \mid \tau_{l}^{(k)}, x_{l-1}^{(k)}) [ J^{\theta} (\tau_l^{(k)}, x_l^{(k)}) - J^{\theta} (\tau_{l}^{(k)}, x_{l-1}^{(k)} ) + r_{l}^{(k)} ] \\
            & + \gamma \sum_{l = 0}^{L^{(k)}} E(\tau_{l}^{(k)}, \tau_{l+1}^{(k)}, x_{l}^{(k)}; \boldsymbol{1}, \bpi^{\phi}) \bigg)\Bigg\} 
        \end{align*}
    \EndIf
    \EndFor
    \end{algorithmic}
\end{algorithm}

\section{Proofs of Statements}\label{app:pf_statements}

\begin{proof}[Proof of Lemma~\ref{lem:ode_value_func}]
    Let $B([0, T] \times \mathcal{X})$ denote the space of all Borel measurable functions $\psi: [0, T] \times \mathcal{X} \rightarrow \mathbb{R}$ such that $\Vert \psi \Vert \coloneqq \sup_{(t, x) \in [0, T] \times \mathcal{X}} \vert \psi(t, x) \vert < \infty$. This space becomes a Banach space when endowed with the supreme norm $\Vert \cdot \Vert$
    We denote 
    \begin{equation*}
        R(t, x) = \sum_{S \in \mathcal{A}(x)} r(S) \bpi(S \mid t, x) + \gamma \mathcal{H}(\bpi(\cdot \mid t, x)),
    \end{equation*}
    and then define an operator $\mathcal{L}$ on $B([0, T] \times \mathcal{X})$ by
    \begin{align}\label{eq:contraction_operator}
       \mathcal{L} \psi (t, x) \coloneqq e^{\beta t} \int_{t}^{T} \bigg\{ R(s, x) + \sum_{S \in \mathcal{A}(x)} \bigg(\sum_{y \in \mathcal{X}} e^{-\beta s}\psi(s, y) q(y \mid s, x, S)\bigg) \bpi(S \mid s, x) \bigg\} ds,
    \end{align}
    where $\beta \coloneqq 2 \lambda + 1$.
    Given that the integrand on the right-hand side of \eqref{eq:contraction_operator} also belongs to the space $B([0, T] \times \mathcal{X})$,  it follows that the operator $\mathcal{L}$ is well-defined. 
    Then, for $\psi_1, \psi_2 \in B([0, T] \times \mathcal{X})$, we obtain
    \begin{align*}
        \left\vert \mathcal{L} \psi_1 (t, x) - \mathcal{L} \psi_2 (t, x) \right\vert &\leq e^{\beta t}  \int_t^T e^{-\beta s}\bigg(\sum_{S \in \mathcal{A}(x)} \sum_{y \in \mathcal{X}} \left\vert \psi_1(s, y) - \psi_2(s, y) \right\vert \cdot \left\vert q(y \mid s, x, S) \right\vert \bpi(S \mid s, x) \bigg) ds \\
        &\leq e^{\beta t}  \int_t^T e^{-\beta s}\bigg(\left\Vert \psi_1 - \psi_2 \right\Vert \cdot \sum_{S \in \mathcal{A}(x)} \Big(\sum_{y \in \mathcal{X}} \left\vert q(y \mid s, x, S) \right\vert \Big) \bpi(S \mid s, x) \bigg) ds \\
        &\leq 2  \lambda \left\Vert \psi_1 - \psi_2 \right\Vert e^{\beta t} \int_{t}^{T} e^{- \beta s} ds\\
        &\leq \frac{2  \lambda}{\beta} (1 - e^{-\beta(T-t)}) \left\Vert \psi_1 - \psi_2 \right\Vert \\
        &\leq \frac{2  \lambda}{\beta}\left\Vert \psi_1 - \psi_2 \right\Vert.
    \end{align*}
    Observing that $\frac{2\lambda}{\beta} = \frac{2\lambda}{2 \lambda + 1} < 1$, we identify $\mathcal{L}$ as a contraction operator on the Banach space $B([0, T] \times \mathcal{X})$.
    Let $\psi^* \in B([0, T] \times \mathcal{X})$ be the fixed point of $\mathcal{L}$, i.e.,
    \begin{align}
        \psi^* (t, x) \coloneqq e^{\beta t} \int_{t}^{T} \bigg\{ R(s, x) + \sum_{S \in \mathcal{A}(x)} \bigg(\sum_{y \in \mathcal{X}} e^{-\beta s}\psi^*(s, y) q(y \mid s, x, S)\bigg) \bpi(S \mid s, x) \bigg\} ds.
    \end{align}
    Let $v(t, x) = e^{-\beta t} \psi^* (t, x)$ for all $(t, x) \in [0, T] \times \mathcal{X}$, then $v$ is continuously differentiable in $t$, i.e. $v \in C^{1, 0}([0, T] \times \mathcal{X})$, and satisfies \eqref{eq:ode_value_func}.
    Thus, we establish the existence of the solution to \eqref{eq:ode_value_func}.
    
    Suppose $v^* \in C^{1, 0}([0, T] \times \mathcal{X})$ is a solution to \eqref{eq:ode_value_func}. One can readily see that the transition rates of the Markov process $\tilde{X}^{\bpi}$, introduced in Appendix~\ref{app:rl-reformulation}, at time $s$ is given by 
    \begin{align*}
    \sum_{S \in \mathcal{A}(\tilde{X}^{\bpi}_{s-})} q (y \mid s, \tilde{X}^{\bpi}_{s-}, S) \bpi(S \mid s, \tilde{X}^{\bpi}_{s-}), \quad y \ne \tilde{X}^{\bpi}_{s-}.
     \end{align*}
    It follows from Theorem 3.1 in \cite{guo2015finite} that we have the Dynkin's formula:
    \begin{align*}
        &\E \left[v^*(T, \tilde{X}^{\bpi}_{T}) \mid \tilde{X}_t^{\bpi} = x\right] - v^*(t,x) \\
        ={}& \E \left[ \int_t^T \bigg( \frac{\partial v^*}{\partial s}(s, \tilde{X}^{\bpi}_{s-}) +  \sum_{y \in \mathcal{X}} v^*(s, y) \Big[\sum_{S \in \mathcal{A}(\tilde{X}^{\bpi}_{s-})} q (y \mid s, \tilde{X}^{\bpi}_{s-}, S) \bpi(S \mid s, \tilde{X}^{\bpi}_{s-}) \Big]\bigg) ds \mid \tilde{X}_t^{\bpi} = x\right].
    \end{align*}
    Since $v^*$ satisfies Equation \eqref{eq:ode_value_func} with $v^*(T, x) =0$,  we then infer that
    \begin{align*} 
        v^*(t,x) &= \E \left[ \int_t^T \bigg\{\sum_{S \in \mathcal{A}(\tilde{X}^{\bpi}_{s-})} r(S)\bpi (S \mid s, \tilde{X}^{\bpi}_{s-}) + \gamma \mathcal{H}(\bpi (\cdot \mid s, \tilde{X}_{s-}^{\bpi}))\bigg\} ds \mid \tilde{X}_t^{\bpi} = x \right] \\
        &= J(t, x; \bpi), \quad (t, x) \in [0, T] \times \mathcal{X}.
    \end{align*}
    The proof is therefore complete. 
\end{proof}

\begin{proof}[Proof of the optimal stochastic policy \eqref{eq:fixed_point} in Section \ref{ssec:RL-formulation}]
    Let the space $B([0, T] \times \mathcal{X})$, the endowed norm $\Vert \cdot \Vert$, and $R(t, x)$ be as specified in the proof of Lemma \ref{lem:ode_value_func}.
    We then define an operator $\bar{\mathcal{L}}$ on $B([0, T] \times \mathcal{X})$ by
    \begin{align*}
       \bar{\mathcal{L}} \psi (t, x) \coloneqq e^{\beta t} \int_{t}^{T} \sup_{\bpi \in \Pi} \bigg\{ R(s, x) + \sum_{S \in \mathcal{A}(x)} \bigg(\sum_{y \in \mathcal{X}} e^{-\beta s}\psi(s, y) q(y \mid s, x, S)\bigg) \bpi(S \mid s, x) \bigg\} ds,
    \end{align*}
    where $\beta \coloneqq 2 \lambda + 1$.
    Using a similar argument as in the proof of Lemma \ref{lem:ode_value_func}, we can identify $\bar{\mathcal{L}}$ as a contraction operator on the Banach space $B([0, T] \times \mathcal{X})$ and then establish the existence of the solution in the space $C^{1, 0}([0, T] \times \mathcal{X})$ to the following exploratory HJB equation:
     \begin{equation}\label{eq:opt_ode}
     \left\{
        \begin{aligned}
            &\frac{\partial v}{\partial t } (t, x) + \sup_{\bpi \in \Pi} \left\{\sum_{S \in \mathcal{A}(x)} [H(t, x, S, v(\cdot, \cdot)) - \gamma \log\bpi(S \mid t, x)] \bpi(S \mid t, x) \right\} = 0, \quad (t, x) \in [0, T) \times \mathcal{X} \\
            &v(T, x) = 0, \quad x \in \mathcal{X}.
        \end{aligned}
        \right.
    \end{equation}

    Assume $v^* \in C^{1, 0}([0, T] \times \mathcal{X})$ is a solution to \eqref{eq:opt_ode}, then $v^* (T, x) = 0$ for all $x \in \mathcal{X}$ and the following inequality holds for all
    $\bpi \in \Pi$ and $(t, x) \in [0, T] \times \mathcal{X}$:
    \begin{align}\label{eq:pf_opt_inequlity}
       \frac{\partial v^*}{\partial t } (t, x) + \sum_{S \in \mathcal{A}(x)} \left\{H(t, x, S, v^*(\cdot, \cdot)) - \gamma \log\bpi(S \mid t, x) \right\} \bpi(S \mid t, x)  \leq 0.
    \end{align}
    Then, it follows from Theorem 3.1 in \cite{guo2015finite} that
    \begin{align*}
        &v^*(t,x) \\ 
        = {}&\E \left[v^*(T, \tilde{X}^{\bpi}_{T}) \mid \tilde{X}_t^{\bpi} = x\right] \\
        &- \E \left[ \int_t^T \bigg( \frac{\partial v^*}{\partial s}(s, \tilde{X}^{\bpi}_{s-}) +  \sum_{y \in \mathcal{X}} v^*(s, y) \Big[\sum_{S \in \mathcal{A}(\tilde{X}^{\bpi}_{s-})} q (y \mid s, \tilde{X}^{\bpi}_{s-}, S) \bpi(S \mid s, \tilde{X}^{\bpi}_{s-}) \Big]\bigg) ds \mid \tilde{X}_t^{\bpi} = x\right] \\
        \geq {}&\E \left[ \int_t^T \sum_{S \in \mathcal{A}(\tilde{X}^{\bpi}_{s-})}\{r(S) - \gamma \log \bpi(S \mid t, \tilde{X}^{\bpi}_{s-})\} \bpi(S \mid s, \tilde{X}^{\bpi}_{s-}) ds \mid
        \tilde{X}_t^{\bpi} = x\right] \\
        = {}&J(t, x; \bpi), \quad \text{ for all } \bpi \in \Pi, \ (t, x) \in [0, T] \times \mathcal{X}.
    \end{align*}
    Moreover, the equality in \eqref{eq:pf_opt_inequlity} holds for policy $\bpi^*$ defined as follows:
    \begin{equation*}
    \bpi^*(S \mid t, x) = \frac{\exp\{\frac{1}{\gamma} H(t, x, S, v^*(\cdot, \cdot))\}}{\sum_{S \in \mathcal{A}(x)} \exp\{\frac{1}{\gamma} H(t, x, S, v^*(\cdot, \cdot))\}}, \quad \text{ for } (t, x, S) \in \mathbb{K}.
    \end{equation*}
    Thus, we conclude that $v^*(t, x) = J(t, x; \bpi^*) = J^*(t, x)$ for all $(t, x) \in [0, T] \times \mathcal{X}$.
    This establishes the characterization of the optimal policy in \eqref{eq:fixed_point}.
\end{proof}

\begin{proof}[Proof of Proposition~\ref{prop:martingale}]
    Denote
    \begin{align*}
    \tilde{M}_t \coloneqq J ( t, \tilde{X}_t^{\bpi}; \bpi) + \int_0^t \bigg\{\sum_{S \in \mathcal{A}(\tilde{X}^{\bpi}_{s-})} r(S)\bpi (S \mid s, \tilde{X}^{\bpi}_{s-}) + \gamma \mathcal{H}(\bpi (\cdot \mid s, \tilde{X}_{s-}^{\bpi}))\bigg\} ds, \quad t \in [0, T].
    \end{align*}
    We then show that $\{\tilde{M}_t: t \in [0, T] \}$ is an $\{\mathcal{F}_t^{\tilde{X}^{\bpi}}\}_{t \geq 0}$-martingale. We integrate the expression for $J(t, x; \bpi)$ in \eqref{eq:value_func_entropy_exploratory}, yielding
   \begin{equation}\label{eq:pf_martingale_only_if}
    \begin{aligned}
        \tilde{M}_t = {}& \E \left[\int_t^T \bigg\{\sum_{S \in \mathcal{A}(\tilde{X}^{\bpi}_{s-})} r(S)\bpi (S \mid s, \tilde{X}^{\bpi}_{s-}) + \gamma \mathcal{H}(\bpi (\cdot \mid s, \tilde{X}_{s-}^{\bpi}))\bigg\} ds \mid  \tilde{X}_{t}^{\bpi} \right]  \\
        & + \left[\int_0^t \bigg\{\sum_{S \in \mathcal{A}(\tilde{X}^{\bpi}_{s-})} r(S)\bpi (S \mid s, \tilde{X}^{\bpi}_{s-}) + \gamma \mathcal{H}(\bpi (\cdot \mid s, \tilde{X}_{s-}^{\bpi}))\bigg\} ds \right].
    \end{aligned}
    \end{equation}
    Due to the Markov property of $\{\tilde{X}_t^{\bpi}: t \in [0, T] \}$, and also note that the second term on the right side of equation \eqref{eq:pf_martingale_only_if} is $\mathcal{F}_t^{\tilde{X}^{\bpi}}$-measurable, we obtain
    \begin{align*}
        \tilde{M}_t = \E \left[\int_0^T \bigg\{\sum_{S \in \mathcal{A}(\tilde{X}^{\bpi}_{s-})} r(S)\bpi (S \mid s, \tilde{X}^{\bpi}_{s-}) + \gamma \mathcal{H}(\bpi (\cdot \mid s, \tilde{X}_{s-}^{\bpi}))\bigg\} ds  \mid \mathcal{F}_t^{\tilde{X}^{\bpi}}\right].
    \end{align*}
    Since $J(T, x; \bpi) = 0$ for all $x \in \mathcal{X}$, we conclude that  $\tilde{M}_t = \E[\tilde{M}_T \mid \mathcal{F}_t^{\tilde{X}^{\bpi}}]$, which implies $\{\tilde{M}_t : t \in [0, T] \}$ is an $\{\mathcal{F}_t^{\tilde{X}^{\bpi}}\}_{t \geq 0}$-martingale.
    Moreover, since $\tilde{M}_t$ is bounded for any fixed time $t \in [0, T]$, it follows that $\{\tilde{M}_t : t \in [0, T] \}$ is square-integrable.

	Conversely, assume that $\{\tilde{M}_t^v: t \in [0, T] \}$ is an $\{\mathcal{F}_t^{\tilde{X}^{\bpi}}\}_{t \geq 0}$-martingale and $v(T, x) = 0$ for all $x \in \mathcal{X}$.
    From this, it follows directly that
    $\tilde{M}_t^v= \E [\tilde{M}_T^v \mid \mathcal{F}_t^{\tilde{X}^{\bpi}}]$for all $t \in [0, T]$, 
    which is equivalent to \begin{equation}\label{eq:pf_martingale_if}
        \begin{aligned}
            v ( t, \tilde{X}_t^{\bpi} ) = &\E \left[\int_0^T \bigg\{\sum_{S \in \mathcal{A}(\tilde{X}^{\bpi}_{s-})} r(S)\bpi (S \mid s, \tilde{X}^{\bpi}_{s-}) + \gamma \mathcal{H}(\bpi (\cdot \mid s, \tilde{X}_{s-}^{\bpi}))\bigg\} ds  \mid \mathcal{F}_t^{\tilde{X}^{\bpi}} \right] \\ 
            &- \left[ \int_0^t \bigg\{\sum_{S \in \mathcal{A}(\tilde{X}^{\bpi}_{s-})} r(S)\bpi (S \mid s, \tilde{X}^{\bpi}_{s-}) + \gamma \mathcal{H}(\bpi (\cdot \mid s, \tilde{X}_{s-}^{\bpi}))\bigg\} ds  \right], \quad t \in [0, T].
        \end{aligned}
    \end{equation}
    Since the second term on the right side of Equation \eqref{eq:pf_martingale_if} is $\mathcal{F}_t^{\tilde{X}^{\bpi}}$-measurable, then it follows from the Markov property of $\{\tilde{X}_t^{\bpi}: t \in [0, T] \}$ that
    \begin{align*}
        v ( t, \tilde{X}_t^{\bpi} ) & = \E \left[\int_t^T \bigg\{\sum_{S \in \mathcal{A}(\tilde{X}^{\bpi}_{s-})} r(S)\bpi (S \mid s, \tilde{X}^{\bpi}_{s-}) + \gamma \mathcal{H}(\bpi (\cdot \mid s, \tilde{X}_{s-}^{\bpi}))\bigg\} ds \mid \mathcal{F}_t^{\tilde{X}^{\bpi}} \right] \\
        & = \E \left[\int_t^T \bigg\{\sum_{S \in \mathcal{A}(\tilde{X}^{\bpi}_{s-})} r(S)\bpi (S \mid s, \tilde{X}^{\bpi}_{s-}) + \gamma \mathcal{H}(\bpi (\cdot \mid s, \tilde{X}_{s-}^{\bpi}))\bigg\} ds \mid \tilde{X}_t^{\bpi} \right] \\
        & = J(t, \tilde{X}_t^{\bpi}; \bpi),  \quad t \in [0, T].
    \end{align*}
    Therefore, we conclude that $v(t, x) = J(t, x; \bpi)$ for all $(t, x) \in [0, T] \times \mathcal{X}$.
\end{proof}

\begin{proof}[Proof of Theorem \ref{thm:argminML=argminMSVE}]
Denote $U_t \coloneqq \int_{(t, T]} p^{\top} dN_s^{\bpi} - \int_t^T \sum_{S \in \mathcal{A}(X^{\bpi}_{s-})} r(S)\bpi (S \mid s, X^{\bpi}_{s-}) ds$, then we have
\begin{equation}\label{eq:sample_state_jump_to_intensity}
    \E \left[ U_t\mid X_t^{\bpi}\right] = 0. 
\end{equation}
We further denote $M_t \coloneqq J (t, X_t^{\bpi}; \bpi) + \int_0^t \{\sum_{S \in \mathcal{A}(X^{\bpi}_{s-})} r(S)\bpi (S \mid s, X^{\bpi}_{s-}) + \gamma \mathcal{H}(\bpi (\cdot \mid s, X_{s-}^{\bpi}))\} ds$ and $M_t^{\theta} \coloneqq J^{\theta}(t, X_t^{\bpi}) + \int_0^t \{\sum_{S \in \mathcal{A}(X^{\bpi}_{s-})} r(S)\bpi (S \mid s, X^{\bpi}_{s-}) + \gamma \mathcal{H}(\bpi (\cdot \mid s, X_{s-}^{\bpi}))\} ds$.
It follows from $J (T, x; \bpi) \equiv 0$ and \eqref{eq:sample_state_jump_to_intensity} that
\begin{align*}
   2 L(\theta) &= \E \left[\int_{0}^{T} (U_t  + M_T - M_t^{\theta})^{2} dt \right] \\
   &= \E \left[\int_{0}^{T} [U_t^2 + 2U_t(M_T - M_t^{\theta})  + (M_T - M_t^{\theta})^2] dt \right] \\
   &=\E \left[\int_{0}^{T} \bigg[U_t^2 + 2U_t\bigg(\int_t^T \bigg\{\sum_{S \in \mathcal{A}(X^{\bpi}_{s-})} r(S)\bpi (S \mid s, X^{\bpi}_{s-}) + \gamma \mathcal{H}(\bpi (\cdot \mid s, X_{s-}^{\bpi}))\bigg\} ds\bigg)\bigg] dt \right] \\
   &\hspace{0.5cm} - \E \left[\int_{0}^{T} 2J^{\theta}(t, X_t^{\bpi}) \E [U_t \mid X_t^{\bpi}]  dt \right] +\E \left[\int_{0}^{T} (M_T - M_t^{\theta})^2 dt \right] \\
   &=\E \left[\int_{0}^{T} \bigg[U_t^2 + 2U_t\bigg(\int_t^T \bigg\{\sum_{S \in \mathcal{A}(X^{\bpi}_{s-})} r(S)\bpi (S \mid s, X^{\bpi}_{s-}) + \gamma \mathcal{H}(\bpi (\cdot \mid s, X_{s-}^{\bpi}))\bigg\} ds\bigg)\bigg] dt \right] \\
   &\hspace{0.5cm} +\E \left[\int_{0}^{T} (M_T - M_t^{\theta})^2 dt \right].
\end{align*}
Since the first term does not rely on $\theta$, 
we have 
\begin{equation}\label{eq:sample_state_loss_equivalent}
    \argmin_{\theta} L(\theta) = \argmin_{\theta} \E \left[\int_{0}^{T} (M_T - M_t^{\theta})^2 dt \right].
\end{equation}

Next, we denote $\tilde{M}_t \coloneqq J (t, \tilde{X}_t^{\bpi}; \bpi) + \int_0^t \{\sum_{S \in \mathcal{A}(\tilde{X}^{\bpi}_{s-})} r(S)\bpi (S \mid s, \tilde{X}^{\bpi}_{s-}) + \gamma \mathcal{H}(\bpi (\cdot \mid s, \tilde{X}_{s-}^{\bpi}))\} ds$ and $\tilde{M}_t^{\theta} \coloneqq J^{\theta} (t, \tilde{X}_t^{\bpi}) + \int_0^t \{\sum_{S \in \mathcal{A}(\tilde{X}^{\bpi}_{s-})} r(S)\bpi (S \mid s, \tilde{X}^{\bpi}_{s-}) + \gamma \mathcal{H}(\bpi (\cdot \mid s, \tilde{X}_{s-}^{\bpi}))\} ds$.
From Proposition~\ref{prop:martingale} we know that $\{\tilde{M}_t: t \in [0, T] \}$ is an $\{\mathcal{F}_t^{\tilde{X}^{\bpi}}\}_{t \geq 0}$-martingale. Therefore, 
\begin{align*}
    \E \left[ \int_0^T ( \tilde{M}_T - \tilde{M}_t^\theta )^2 dt \right]
    &=\E \left[ \int_0^T ( \tilde{M}_T - \tilde{M}_t + \tilde{M}_t - \tilde{M}_t^\theta )^2 dt \right] \\
    &=\E \left[ \int_0^T [ ( \tilde{M}_T - \tilde{M}_t )^2 + ( \tilde{M}_t - \tilde{M}_t^\theta )^2 + 2( \tilde{M}_T - \tilde{M}_t ) ( \tilde{M}_t - \tilde{M}_t^\theta ) ] dt \right] \\
    &=\E \left[ \int_0^T ( \tilde{M}_T - \tilde{M}_t )^2 dt \right] + \E \left[ \int_0^T ( \tilde{M}_t - \tilde{M}_t^\theta )^2 dt \right] \\
    & \hspace{0.5cm} + 2 \int_0^T \E \left[( \tilde{M}_t - \tilde{M}_t^\theta ) \cdot \E \big[\tilde{M}_T-\tilde{M}_t \mid \mathcal{F}_t^{\tilde{X}^{\bpi}}\big] \right] dt \\
    &= \E \left[ \int_0^T ( \tilde{M}_T - \tilde{M}_t )^2 dt \right] + \E \left[ \int_0^T | J(t, \tilde{X}_t^{\bpi}; \bpi) - J^{\theta}(t, \tilde{X}_t^{\bpi}) |^2 dt \right].
\end{align*}
Noting that the first term does not rely on $\theta$, we obtain
\begin{align*}
    \arg\min_{\theta} \E \left[ \int_0^T | \tilde{M}_T - \tilde{M}_t^\theta |^2 dt \right]
    = \arg\min_{\theta}\E \left[ \int_0^T | J(t, \tilde{X}_t^{\bpi}; \bpi) - J^{\theta}(t, \tilde{X}_t^{\bpi}) |^2 dt \right].
\end{align*}
Since the distribution of $\{\tilde X^{\bpi}_t: t \in [0, T] \}$ is the same as the distribution of $\{X^{\bpi}_t: t \in [0, T] \}$ together with \eqref{eq:sample_state_loss_equivalent}, we conclude that $\argmin_{\theta} L(\theta) = \argmin_{\theta} \operatorname{MSVE} (\theta)$.
\end{proof}

\begin{proof}[Proof of Theorem \ref{thm:martingale_orthogonality_condition}]
Note that a bounded process $\xi$ with $\xi_t \in \mathcal{F}_{t-}^{X^{\bpi}}$ is also $\{\mathcal{F}_{t}\}_{t \geq 0}$-predictable.
On the other hand, for each $j=1, \cdots, n$, {the process $\{N^{\bpi}_{j,t} - \int_0^t \sum_{S \in \mathcal{A}(X^{\bpi}_{t-})} \lambda P_j(S)\bpi (S \mid s, X^{\bpi}_{t-}) dt : t \in [0, T]\}$ is an $\{\mathcal{F}_t\}_{t\geq 0}$-martingale.} Thus we have
\begin{equation*}
    \E\left[ \int_{0}^{T} \xi_t (p^\top d N_t^{\bpi}) \right] = 
    \E\left[ \int_{0}^{T} \xi_t \Big[\sum_{S \in \mathcal{A}(X^{\bpi}_{t-})} r(S)\bpi (S \mid t, X^{\bpi}_{t-})\Big] dt \right].
\end{equation*}
It suffices to prove that $v(t, x) = J(t, x; \bpi)$ for all $(t, x) \in [0, T] \times \mathcal{X}$, if and only if $v$ satisfies $v (T, x ) = 0$ for all $x \in \mathcal{X}$, and 
\begin{equation}\label{eq:pf_N_martingale}
    \E\left[ \int_{0}^{T} \xi_t \bigg\{ d v ( t, X_t^{\bpi}) + \Big[\sum_{S \in \mathcal{A}(X^{\bpi}_{t-})} r(S)\bpi (S \mid t, X^{\bpi}_{t-}) + \gamma \mathcal{H}(\bpi (\cdot \mid t, X_{t-}^{\bpi}))\Big] dt \bigg\} \right]
\end{equation}
for any bounded process $\xi$ with $\xi_t \in \mathcal{F}_{t-}^{X^{\bpi}}$.

We first establish that $v(t, x) = J(t, x; \bpi)$ for all $(t, x) \in [0, T] \times \mathcal{X}$, if and only if $v$ satisfies $v (T, x ) = 0$ for all $x \in \mathcal{X}$, and 
\begin{align} \label{eq:orthogonality_condition_exploratory}
    \E\left[ \int_{0}^{T} \tilde{\xi}_t \bigg\{ d v ( t, \tilde{X}_t^{\bpi}) + \Big[\sum_{S \in \mathcal{A}(\tilde{X}^{\bpi}_{t-})} r(S)\bpi (S \mid t, \tilde{X}^{\bpi}_{t-}) + \gamma \mathcal{H}(\bpi (\cdot \mid t, \tilde{X}_{t-}^{\bpi}))\Big] dt \bigg\} \right] = 0,
\end{align}
for any bounded process $\tilde{\xi}$ with $\tilde \xi_t \in \mathcal{F}_{t-}^{\tilde{X}^{\bpi}}$.
The ``only if'' part follows immediately from Proposition~\ref{prop:martingale}. 
To establish the ``if'' part, assume that $v ( T, x) = 0$ for all $x \in \mathcal{X}$, and \eqref{eq:orthogonality_condition_exploratory} holds for any bounded process $\tilde{\xi}$ with $\tilde \xi_t \in \mathcal{F}_{t-}^{\tilde{X}^{\bpi}}$.
It follows from Theorem 3.1 of \cite{guo2015finite} that the process 
\begin{align*}
    v(t, \tilde{X}_t^{\bpi}) - \int_{0}^{t} \bigg(\frac{\partial v}{\partial s }(s, \tilde{X}^{\bpi}_{s-}) + \sum_{y \in \mathcal{X}} v(s, y) \Big[\sum_{S \in \mathcal{A}(\tilde{X}^{\bpi}_{s-})} q (y \mid s, \tilde{X}^{\bpi}_{s-}, S) \bpi(S \mid s, \tilde{X}^{\bpi}_{s-}) \Big] \bigg) ds
\end{align*}
defines an $\{\mathcal{F}_t^{\tilde{X}^{\bpi}}\}_{t \geq 0}$-martingale, and its boundedness at any fixed time $t \in [0, T]$ ensures square-integrability.
Then, it follows that
\begin{equation}\label{eq:pf_TD_xi_1}
    \begin{aligned}
        \E\bigg[ \int_{0}^{T} \tilde{\xi}_t \bigg\{d v ( t, \tilde{X}_t^{\bpi}) - \bigg(\frac{\partial v}{\partial t }(t, \tilde{X}^{\bpi}_{t-}) + \sum_{y \in \mathcal{X}} v(t, y) \Big[\sum_{S \in \mathcal{A}(\tilde{X}^{\bpi}_{t-})} q (y \mid t, \tilde{X}^{\bpi}_{t-}, S) \bpi(S \mid t, \tilde{X}^{\bpi}_{t-}) \Big] \bigg) dt \bigg\} \bigg] = 0,
    \end{aligned}
\end{equation}
for any bounded process $\tilde{\xi}$ with $\tilde \xi_t \in \mathcal{F}_{t-}^{\tilde{X}^{\bpi}}$.
By taking the difference between equations \eqref{eq:orthogonality_condition_exploratory} and \eqref{eq:pf_TD_xi_1}, we obtain that
\begin{align}\label{eq:pf_TD_xi_2}
    \E \left[\int_{0}^{T} \tilde{\xi}_t \bigg(\frac{\partial v }{\partial t } (t, \tilde{X}_{t-}^{\bpi}) + \sum_{S \in \mathcal{A}(\tilde{X}_{t-}^{\bpi})} H(t, \tilde{X}_{t-}^{\bpi}, S, v(\cdot, \cdot))\bpi(S \mid t, \tilde{X}_{t-}^{\bpi}) + \gamma \mathcal{H}( \bpi(\cdot \mid t, \tilde{X}_{t-}^{\bpi}))\bigg) dt \right] = 0
\end{align}
holds for any bounded process $\tilde{\xi}$ with $\tilde \xi_t \in \mathcal{F}_{t-}^{\tilde{X}^{\bpi}}$.
Define the test function $\tilde{\xi}_t$ as follows:
\begin{align*}
    \tilde{\xi}_t = \operatorname{sgn} \bigg(\frac{\partial v }{\partial t } (t, \tilde{X}_{t-}^{\bpi}) + \sum_{S \in \mathcal{A}(\tilde{X}_{t-}^{\bpi})} H(t, \tilde{X}_{t-}^{\bpi}, S, v(\cdot, \cdot))\bpi(S \mid t, \tilde{X}_{t-}^{\bpi}) + \gamma \mathcal{H}( \bpi(\cdot \mid t, \tilde{X}_{t-}^{\bpi}))\bigg),
\end{align*}
{where $\operatorname{sgn}(\cdot)$ is the sign function.}
Then, \eqref{eq:pf_TD_xi_2} implies that
\begin{align}
    \frac{\partial v }{\partial t } (t, \tilde{X}_{t-}^{\bpi}) + \sum_{S \in \mathcal{A}(\tilde{X}_{t-}^{\bpi})} H(t, \tilde{X}_{t-}^{\bpi}, S, v(\cdot, \cdot))\bpi(S \mid t, \tilde{X}_{t-}^{\bpi}) + \gamma \mathcal{H}( \bpi(\cdot \mid t, \tilde{X}_{t-}^{\bpi})) = 0, \quad t \in [0, T].
\end{align}
It follows from Lemma \ref{lem:ode_value_func} that $v(t, x) = J(t, x; \bpi)$ for all $(t, x) \in [0, T] \times \mathcal{X}$.
Hence, we complete the proof of the equivalent condition \eqref{eq:orthogonality_condition_exploratory}.

Let $D[0, T]$ denote the space of all mappings $f: [0, T] \mapsto \mathcal{X}$. For any process $Y \in D[0, T]$ and fixed $t\in [0, T]$, define the stopped process $Y_{t- \wedge \cdot}^{\bpi} \in D[0, T]$ such that $Y_{t- \wedge \cdot}^{\bpi}(s) = Y_s^{\bpi}$ for $s \in [0, t)$, and $Y_{t- \wedge \cdot}^{\bpi}(s) = Y_{t-}^{\bpi}$ for $s \in [t, T]$.
Note that any process $\xi$ with $\xi_t \in \mathcal{F}_{t-}^{X^{\bpi}}$ corresponds to a measurable function $\boldsymbol{\xi}: [0, T] \times D([0, T]) \mapsto \mathbb{R}$ such that $\xi_t = \boldsymbol{\xi}(t, X_{t- \wedge \cdot}^{\bpi})$. Then, $\boldsymbol{\xi}(t, \tilde{X}_{t- \wedge \cdot}^{\bpi})$ is $\mathcal{F}_{t-}^{\tilde{X}^{\bpi}}$-measurable for each $t \in [0, T]$.
Since the distribution of $\{X_t^{\bpi}: t \in [0, T] \}$ is the same as the distribution of $\{\tilde{X}_t^{\bpi}: t \in [0, T] \}$, it follows that
\begin{equation}\label{eq:pf_orthogonality_sample_exploratory}
    \begin{aligned}
        {}&\E\left[ \int_{0}^{T} \boldsymbol{\xi}(t, X_{t- \wedge \cdot}^{\bpi}) \bigg\{ d v ( t, X_t^{\bpi}) + \Big[\sum_{S \in \mathcal{A}(X^{\bpi}_{t-})} r(S)\bpi (S \mid t, X^{\bpi}_{t-}) + \gamma \mathcal{H}(\bpi (\cdot \mid t, X_{t-}^{\bpi}))\Big] dt \bigg\} \right] \\
        = {}&\E\left[ \int_{0}^{T} \boldsymbol{\xi}(t, \tilde{X}_{t- \wedge \cdot}^{\bpi}) \bigg\{ d v ( t, \tilde{X}_t^{\bpi}) + \Big[\sum_{S \in \mathcal{A}(\tilde{X}^{\bpi}_{t-})} r(S)\bpi (S \mid t, \tilde{X}^{\bpi}_{t-}) + \gamma \mathcal{H}(\bpi (\cdot \mid t, \tilde{X}_{t-}^{\bpi}))\Big] dt \bigg\} \right]. 
    \end{aligned}
\end{equation}
Conversely, any process $\tilde{\xi}$ with $\tilde{\xi}_t \in \mathcal{F}_{t-}^{\tilde{X}^{\bpi}}$ corresponds to a measurable function $\tilde{\boldsymbol{\xi}}: [0, T] \times D([0, T]) \mapsto \mathbb{R}$ such that $\tilde{\xi}_t = \tilde{\boldsymbol{\xi}}(t, \tilde{X}_{t- \wedge \cdot}^{\bpi})$.
Then, $\tilde{\boldsymbol{\xi}}(t, X_{t- \wedge \cdot}^{\bpi})$ is $\mathcal{F}_{t-}^{X^{\bpi}}$-measurable for each $t \in [0, T]$, and equation \eqref{eq:pf_orthogonality_sample_exploratory} holds for $\tilde{\boldsymbol{\xi}}(\cdot, \cdot)$.
This establishes the equivalence between condition \eqref{eq:orthogonality_condition_exploratory} and condition \eqref{eq:pf_N_martingale}. The proof is therefore complete. 
\end{proof}

\begin{proof}[Proof of Theorem~\ref{thm:PG}]
    By the representation \eqref{eq:representation_of_gradient_Hamiltonian} of $g(t, x; \phi) = \nabla_{\phi} J( t, x; \bpi^{\phi})$, we have 
    \begin{equation}\label{eq:PG2}
        \begin{aligned}
        \nabla_{\phi} J (0, c; \bpi^{\phi}) = 
        \E \Bigg[{}&\int_{0}^{T}  \bigg(\sum_{S \in \mathcal{A}(X^{\bpi^{\phi}}_{t-})} \nabla_{\phi} \log \bpi^{\phi} ( S \mid t, X^{\bpi^{\phi}}_{t-}) H(t, X^{\bpi^{\phi}}_{t-}, S, J(\cdot, \cdot; \bpi^{\phi})) \bpi^{\phi} ( S \mid t, X^{\bpi^{\phi}}_{t-}) \bigg) dt \\
        & + \gamma \int_{0}^{T} \nabla_{\phi} \mathcal{H}(\bpi^{\phi} ( \cdot \mid t, \tilde{X}_{t-}^{\bpi^{\phi}})) dt \mid X_0^{\bpi^{\phi}}=c \Bigg].
        \end{aligned}
    \end{equation}
    It suffices to show the first term of \eqref{eq:PG2} also equals to the first term of \eqref{eq:PG-formula}.
    To simplify the notation, we use $\E_c$ to denote the expectation conditional on the initial state $X_0^{\bpi^{\phi}}=c$ in the subsequent derivations.
    
    {Denote the arrival time of the $k$-th customer as $T_k$ and the product chosen by the customer as $\zeta_{T_k}$, where $\zeta_{T_k} \in \{0, 1, \ldots, n\}$. 
    For $j = 1, \ldots, n$, denote
    \begin{align*}
        I_j(t, x, S) \coloneqq \nabla_{\phi} \log \bpi^{\phi} (S \mid t, x) [J(t, x-A^{j}; \bpi^{\phi}) - J(t, x; \bpi^{\phi}) + p_j].
    \end{align*}
    Then, the first term of \eqref{eq:PG-formula} can be rewritten as \begin{align}\label{eq:pf_PG_arrival_time_formulation}
        \sum_{k=1}^{\infty} \E_c \Bigg[ \sum_{j=1}^{n} I_j(T_k, X_{T_k -}^{\bpi^{\phi}}, S_{T_k}^{\bpi^{\phi}}) \boldsymbol{1}_{\{\zeta_{T_k} = j\}} \boldsymbol{1}_{\{T_k \leq T\}} \Bigg].
    \end{align}
    For each $k \geq 1$, by taking conditional expectation with respect to $\sigma(T_k, X_{T_k -}^{\bpi^{\phi}})$ inside the expectation in \eqref{eq:pf_PG_arrival_time_formulation}, we obtain
    \begin{align}
        &\sum_{k=1}^{\infty} \E_c \Bigg[ \sum_{S \in \mathcal{A}(X_{T_k -}^{\bpi^{\phi}})} \bigg( \sum_{j=1}^{n}  I_j(T_k, X_{T_k -}^{\bpi^{\phi}}, S) P_j(S) \bpi^{\phi}(S \mid T_k, X_{T_k -}^{\bpi^{\phi}}) \bigg) \boldsymbol{1}_{\{T_k \leq T\}} \Bigg] \notag\\
        = {}& \E_c \Bigg[\int_0^T \sum_{S \in \mathcal{A}(X_{t -}^{\bpi^{\phi}})} \bigg( \sum_{j=1}^{n}  I_j(t, X_{t -}^{\bpi^{\phi}}, S) P_j(S) \bpi^{\phi}(S \mid t, X_{t -}^{\bpi^{\phi}}) \bigg) d N_t^{\lambda} \Bigg] \notag\\
        = {}& \E_c \Bigg[\int_0^T \sum_{S \in \mathcal{A}(X_{t -}^{\bpi^{\phi}})} \bigg( \sum_{j=1}^{n}  I_j(t, X_{t -}^{\bpi^{\phi}}, S) \lambda P_j(S) \bpi^{\phi}(S \mid t, X_{t -}^{\bpi^{\phi}}) \bigg) d t\Bigg] \label{eq:pf_PG_possion_integral},
    \end{align}
    where \eqref{eq:pf_PG_possion_integral} is derived from the fact that $\{N_t^{\lambda} - \lambda t: t\in [0, T]\}$ is an $\{\mathcal{F}_t\}_{t\geq 0}$-martingale, and the integrand is $\{\mathcal{F}_t\}_{t\geq 0}$-predictable and bounded. The boundedness of the integrand is ensured by Assumption \ref{apt:policy_phi} and the inherent boundedness of the state process $\{X_t^{\bpi^{\phi}}: t\in [0, T]\}$.
    On the other hand, given the definition of $H$ in \eqref{eq:Hamiltonian}, we have
    \begin{align*}
        H(t, x, S, J(\cdot, \cdot; \bpi^{\phi})) &= \lambda \sum_{j=1}^{n} p_j P_j(S) + \sum_{y \neq x} \bigg( \sum_{\{j \in \mathcal{J}: A^j = x - y\}} \lambda P_j(S) \bigg) J(t, y; \bpi^{\phi})  - \lambda [1 - P_0(S)] J(t, x; \bpi^{\phi}) \\
        &= \sum_{j = 1}^{n} [J(t, x - A^j; \bpi^{\phi}) - J(t, x; \bpi^{\phi}) + p_j] \lambda P_j(S).
    \end{align*}
    Thus, \eqref{eq:pf_PG_possion_integral} is exactly the first term of \eqref{eq:PG2}. This concludes the proof.} 

\end{proof}

\section{The instability of ADP time discretization}\label{app:ADP_solution_time_discretization}
Figure \ref{fig:ADP_solution_time_discretization} illustrates the variation of the coefficients $V_{0,i}$ for $i=1, \ldots, 6$, selected from the ADP solution in Section~\ref{sec:mid-net}, across five closely spaced levels of time discretization $\Delta t = $ 0.48, 0.49, 0.50, 0.51 and 0.52.
\begin{figure}[htp]
    \centering
    \begin{subfigure}{.32\textwidth}
        \centering
        \begin{tikzpicture}
            \begin{axis}[xlabel={$\Delta t$}, ylabel={$V_{0, 1}$}, height=5cm, width=\linewidth, xtick={0.48, 0.49, 0.50, 0.51, 0.52}]
                \addplot table [col sep=comma, x=Delta_t, y=Resource_1] {data/ADP.csv};
            \end{axis}
        \end{tikzpicture}
    \end{subfigure}
    \hfill
    \begin{subfigure}{.32\textwidth}
        \centering
        \begin{tikzpicture}
            \begin{axis}[xlabel={$\Delta t$}, ylabel={$V_{0, 2}$}, height=5cm, width=\linewidth, xtick={0.48, 0.49, 0.50, 0.51, 0.52}]
                \addplot table [col sep=comma, x=Delta_t, y=Resource_2] {data/ADP.csv};
            \end{axis}
        \end{tikzpicture}
    \end{subfigure}
    \hfill
    \begin{subfigure}{.32\textwidth}
        \centering
        \begin{tikzpicture}
            \begin{axis}[xlabel={$\Delta t$}, ylabel={$V_{0, 3}$}, height=5cm, width=\linewidth, xtick={0.48, 0.49, 0.50, 0.51, 0.52}]
                \addplot table [col sep=comma, x=Delta_t, y=Resource_3] {data/ADP.csv};
            \end{axis}
        \end{tikzpicture}
    \end{subfigure} 
    
    \begin{subfigure}{.32\textwidth}
        \centering
        \begin{tikzpicture}
            \begin{axis}[xlabel={$\Delta t$}, ylabel={$V_{0, 4}$}, height=5cm, width=\linewidth, xtick={0.48, 0.49, 0.50, 0.51, 0.52}]
                \addplot table [col sep=comma, x=Delta_t, y=Resource_4] {data/ADP.csv};
            \end{axis}
        \end{tikzpicture}
    \end{subfigure}
    \hfill
    \begin{subfigure}{.32\textwidth}
        \centering
        \begin{tikzpicture}
            \begin{axis}[xlabel={$\Delta t$}, ylabel={$V_{0, 5}$}, height=5cm, width=\linewidth, xtick={0.48, 0.49, 0.50, 0.51, 0.52}]
                \addplot table [col sep=comma, x=Delta_t, y=Resource_5] {data/ADP.csv};
            \end{axis}
        \end{tikzpicture}
    \end{subfigure}
    \hfill
    \begin{subfigure}{.32\textwidth}
        \centering
        \begin{tikzpicture}
            \begin{axis}[xlabel={$\Delta t$}, ylabel={$V_{0, 6}$}, height=5cm, width=\linewidth, xtick={0.48, 0.49, 0.50, 0.51, 0.52}]
                \addplot table [col sep=comma, x=Delta_t, y=Resource_6] {data/ADP.csv};
            \end{axis}
        \end{tikzpicture}
    \end{subfigure}
    \caption{Variations of selected coefficients from the ADP solution across different time discretization levels}
    \label{fig:ADP_solution_time_discretization}
\end{figure}

\section{Extension to Admission Control in a Single-Server Queue}\label{app:queueing-control}
To demonstrate the generality of our framework, we apply it to another classic problem in Operations Research:
admission control in a single-server queue. We assume the customer arrival process and service process are non-homogeneous Poisson process with rate function $\lambda(t)$ and $\mu(t)$, respectively. If $\mu(t)\equiv \mu$, that means the service times of customers are i.i.d. exponentially distributed with rate $\mu$. 
An agent is responsible for determining whether to admit arrivals in order to maximize the expected total revenue over a finite horizon $[0, T]$.
The system has a maximum capacity of $C$ customers, and it is intially empty. 
Let $A_t$ denote the admitted arrival process, and $D_t$ denote the customer departure process. 
The queue length (which includes the customer in service), represented by $X_t = A_t - D_t$, is the system state.
The action space is a binary set $\{0, 1\}$, where 0 indicates rejecting an arriving customer, and 1 indicates admitting the customer. 
Then, a (randomized) policy $\bpi$ is defined as a mapping from each time-state pair $(t, x)$ to a Bernoulli distribution with success probability $p$. The entropy-regularized value function is given by
\begin{align*}
    J(t, x; \bpi) \coloneqq \E \left[ K_1 \int_{(t, T]}  d {A_s^{\bpi}} -K_2 \int_t^T X_{s-}^{\bpi} ds - K_3 X_T^{\bpi} + \gamma \int_t^T \mathcal{H}(\bpi(\cdot \mid s, X_{s-}^{\bpi})) ds \mid X_t^{\bpi} = x\right].
\end{align*}
The positive coefficients $K_1$, $K_2$, $K_3$ are interpreted as follows: $K_1$ is the jump reward for admitting a customer, $K_2$ is the holding cost per unit of time for each admitted customer, and $K_3$ denotes the penalty for an admitted customer not served by the terminal time $T$.
The earlier analysis can be readily extended to this queueing formulation, with almost identical proof methodologies. 
We present the results as follows.

For an approximate value function parametrized by $\theta$, we define the loss function $L(\theta)$ as 
\begin{align*}
    L(\theta) = \frac{1}{2} \E \Bigg[\int_0^T \bigg( &K_1 \int_{(t, T]}  d {A_s^{\bpi}} -K_2 \int_t^T X_{s-}^{\bpi} ds - K_3 X_T^{\bpi} + \gamma \int_t^T  \mathcal{H}(\bpi(\cdot \mid s, X^{\bpi}_{s-})) ds \\
    &- J^{\theta} ( t, X_t^{\bpi}) \bigg)^{2} dt \Bigg].
\end{align*}
It allows us to establish the equivalence theorem, which confirms that the Monte Carlo method for PE remains effective within the queueing framework.
\begin{theorem}[Parallel to Theorem \ref{thm:argminML=argminMSVE}]
    It holds that {$\argmin_{\theta} L(\theta) = \argmin_{\theta} \operatorname{MSVE} (\theta)$}.
\end{theorem}
Moreover, we have developed the martingale orthogonality conditions for the queueing model, which form the basis for the TD method in PE.
\begin{theorem}[Parallel to Theorem \ref{thm:martingale_orthogonality_condition}]
    A function $v \in C^{1, 0}([0, T] \times \mathcal{X})$ is the value function associated with the policy $\bpi$, i.e. $v(t, x) = J(t, x; \bpi)$ for all $(t, x) \in [0, T] \times \mathcal{X}$, if and only if it satisfies $v (T, x) = - K_3 x$ for all $x \in \mathcal{X}$, and the following martingale orthogonality condition holds for any bounded process $\xi$ with $\xi_t \in \mathcal{F}_{t-}^{X^{\bpi}}$ for all $t \in [0, T]$:
    \begin{align*}
        \E \Bigg[ \int_{0}^{T} \xi_t \bigg\{ d v ( t, X_t^{\bpi}) + K_1 d A_t^{\bpi} - K_2 X_{t-}^{\bpi} dt + \gamma \mathcal{H}(\bpi(\cdot \mid t, X_{t-}^{\bpi})) dt \bigg\} \Bigg] = 0.
    \end{align*}
\end{theorem}
Finally, we derive the formula for PG. 
\begin{theorem}[Parallel to Theorem \ref{thm:PG}]
    Given an admissible parameterized policy $\bpi^{\phi}$ satisfying Assumption~\ref{apt:policy_phi}, the policy gradient $\nabla_{\phi} J (0, c; \bpi^{\phi})$ admits the following representation: 
    \begin{align*}
        \nabla_{\phi} J (0, c; \bpi^{\phi}) =
        \E \Bigg[ {}& \int_{(0,T]}  \nabla_{\phi} \log \bpi^{\phi} ( 1 \mid t, X_{t-}^{\bpi^{\phi}}) [J(t, X_{t-}^{\bpi^{\phi}} + 1; \bpi^{\phi} ) - J(t, X_{t-}^{\bpi^{\phi}}; \bpi^{\phi}) + K_1] dA_t^{\bpi^{\phi}} \notag \\
        & + \gamma \int_{0}^{T} \nabla_{\phi} \mathcal{H}(\bpi^{\phi} ( \cdot \mid t, X_{t-}^{\bpi^{\phi}})) dt \Bigg] .
    \end{align*}
\end{theorem}
On combining PE with PG, we can now design actor-critic algorithms. We present Algorithm \ref{alg:NN_for_queueing} as an example. 
This algorithm employs the neural-network-based approximation for both value and policy functions, and utilizes the Monte Carlo method for PE.

To demonstrate the performance of Algorithm \ref{alg:NN_for_queueing}, we consider a simple example: we set the maximum capacity $C = 10$, the time horizon $T = 20$, the coefficients $K_1 = 10$, $K_2 = 1$, $K_3 = 0.1$. The rate functions for customer arrival process and service process are defined as $\lambda(t) = 0.5 + 0.3 \cdot \sin(\frac{2\pi t}{T})$ and $\mu(t) = 0.1 + 0.1 \cdot \frac{t}{T}$, respectively. However, the RL agent is not aware of these underlying rate functions.

We employ the neural-network-based approximation for value functions and policies, following the 2-NNs approach described in Section \ref{sec:Numerical_Experiments}. 
We continue to use fully connected networks with each hidden layer followed by a ReLU activation function.
The only minor variation in this example is that the outputs of the actor network are one-dimensional, and we directly apply the sigmoid function to convert these outputs into admission probabilities. Algorithm \ref{alg:NN_for_queueing} is implemented with both the actor and critic networks configured with two hidden layers, each having a width of $8$.
The hyperparameters are set as follows: batch size $M=100$, entropy factor $\gamma = 1 \times 10^{-3}$, learning rate $\alpha_{\theta} = 3 \times 10^{-2}$, and learning rate $\alpha_{\phi} = 1 \times 10^{-5}$. 
Figure \ref{fig:queueing_performance_evolution} 
illustrates how the average revenues of the updated policies varies throughout the learning process. Upon completing the final update, the simulated average revenues of the resulting policy, achieves a value of $23.930$.

We also compare the performance of our RL algorithm with three benchmark policies:
\begin{itemize}
    \item Optimal Policy from Discretized Dynamic Programming: Similar as in Section~\ref{sec:dynamic_programming}, for small \(\Delta t\), solving the DP problem (without entropy regularization) for the discrete-time model can be expected to yield a reliable approximation $V_{\Delta t}^*(0, 0)$ of the true optimal expected revenue $V^*(0, 0)$. The associated DP equation is given by \begin{align}\label{eq:discrete_time_dynamic_programming_for_queueing}
    \left\{
        \begin{aligned}
        &V_{\Delta t}^*(t_k, x)= \lambda(t_{k+1}) \Delta t \cdot \boldsymbol{1}_{\{x \leq C-1\}} \cdot
        \max_{a \in \{0, 1\}} \big\{a \cdot [K_1 + V_{\Delta t}^*(t_{k+1}, x+1) - V_{\Delta t}^*(t_{k+1}, x)]\big\} + K_2 \Delta t \cdot x \\ 
        & \hspace{2cm}+ \mu(t_{k+1})\Delta t \cdot \boldsymbol{1}_{\{x \geq 1\}} \cdot [V_{\Delta t}^*(t_{k+1}, x-1) - V_{\Delta t}^*(t_{k+1}, x) ] + V_{\Delta t}^*(t_{k+1}, x), \quad \forall\, k,\ x \\
        & V_{\Delta t}^*(t_{K}, x) = K_3 \cdot x, \quad \forall\, x
        \end{aligned}
    \right.
    \end{align}
    \item \textsc{Uniform-Random}: 
    At each time and state, the decision to admit or reject an arriving customer is made with an equal probability of $0.5$.
    \item \textsc{Optimal-Threshold}: For each state $\bar{x} \in \{1, \ldots, C\}$, a (deterministic) threshold policy $\bpi^{\bar{x}}$ admits an arriving customer only if the current state is below the threshold \(\bar{x}\). We execute all these threshold policies and refer to the maximum average revenues obtained as the performance of the \textsc{Optimal-Threshold} policy.
\end{itemize}

We set $\Delta t = 0.001$ and solve the DP problem \eqref{eq:discrete_time_dynamic_programming_for_queueing}, yielding an optimal expected revenue of $V_{0.001}^*(0, 0) = 23.997$, as indicated in Figure \ref{fig:queueing_performance_evolution}.
Table \ref{tab:queueing_expected_revenue} reports the simulated average revenues for our 2-NNs policy and the benchmark policies \textsc{Uniform-Random} and \textsc{Optimal-Threshold}.
The numerical results demonstrate that our RL algorithm significantly outperforms the \textsc{Uniform-Random} and \textsc{Optimal-Threshold} benchmarks, and its performance approaches that of the optimal policy from DP with a negligible gap of only $0.3\%$.
\begin{algorithm}[htbp]
\caption{Actor-Critic Algorithm with Neural Networks for Queueing Control} \label{alg:NN_for_queueing}
\small
    \begin{algorithmic}[1]
    \State\textbf{Inputs:} maximum capacity $C$, time horizon $T$, the values of $K_1, K_2, K_3$; number of episodes $N$, batch size $M$; critic network $J^{\theta}(t, x)$ and an initial value $\theta_0$; 
    actor network $\bpi^{\phi}(a \mid t, x)$ and an initial value $\phi_0$; 
    entropy factor $\gamma$, learning rates $\alpha_{\theta}$, $\alpha_{\phi}$
    \State\textbf{Required program:} an environment simulator $(t', x', r') = Environment(t, x, \bpi^{\phi}(\cdot \mid \cdot, \cdot); C)$
    \State {Initialize $\theta \leftarrow \theta_0$, $\phi \leftarrow \phi_0$}
    \For{$i = 1$ to $N$} 
    \State Store $(\tau_0^{(i)}, x_0^{(i)}) \gets (0, 0)$
    \State Initialize $l = 0$, $(t, x) = (0, 0)$ 
    \Comment{Initialize $l$ to count jumps in each state trajectory, and $(t, x)$ to record the time and state right after each jump}
    \While{True}
    \State Apply $(t, x)$ and policy $\bpi^{\phi}$ to the environment simulator to get the next jump time $t'$, post-jump state $x'$ and the jump reward $r'$ \Comment{$r' = K_1$ for $x' = x + 1$ and $r' = 0$ for $x' = x - 1$}
    \If{$t' \geq T$}
    \State Store $L^{(i)} \gets l$, $\tau_{L^{(i)}+1}^{(i)} \gets T$
    \State \textbf{break}
    \EndIf
    \State Update $l \gets l + 1$
    \State Store observation at jump time: $(\tau_l^{(i)}, x_l^{(i)}, r_l^{(i)}) \gets (t', x', r')$
    \State Update $(t, x) \gets (t', x')$
    \EndWhile
    \State Store $\mathcal{I}^{(i)} = \{1 \leq l \leq L^{(i)} : x_l^{(i)} = x_{l-1}^{(i)} + 1\}$ \Comment{Store indices of customer admission transitions }
    \If{$i= 0 \pmod{M }$} \Comment{Perform an update using $M$ episodes generated under the policy}
        \State Update critic network with $\bar D(t_1, t_2, x; \theta) \coloneqq \int_{t_1}^{t_2} [J^{\theta}(s, x)]^2 ds$, $\bar b(t_1, t_2, x; \theta) \coloneqq \int_{t_1}^{t_2} J^{\theta}(s, x) ds$, $E(t_1, t_2, x;v, \bpi) \coloneqq \int_{t_1}^{t_2} v(s) \mathcal{H}(\bpi ( \cdot \mid s, x)) ds$ and $\bar w(t_1, t_2, x; \theta) \coloneqq \int_{t_1}^{t_2} s \cdot J^{\theta}(s, x) ds$:
        \begin{align*}
            \theta \leftarrow \theta - \alpha_{\theta} \nabla_{\theta} \Bigg\{& \frac{1}{M} \sum_{k=i - M + 1}^i \sum_{l = 0}^{L^{(k)}} \bigg( \frac{1}{2} \bar D(\tau_l^{(k)}, \tau_{l+1}^{(k)}, x_{l}^{(k)}; \theta) + \bar b(\tau_{l}^{(k)}, \tau_{l+1}^{(k)}, x_{l}^{(k)}; \theta)[K_2 \cdot x_{l}^{(k)} \tau_{l+1}^{(k)} + K_3 \cdot x_{L^{(k)}}^{(k)}] \\
            &- \bar b(\tau_{l}^{(k)}, \tau_{l+1}^{(k)}, x_{l}^{(k)}; \theta) \sum_{l' = l + 1}^{L^{(k)}} \left[ r_{l'}^{(k)} - K_2 \cdot x_{l'}^{(k)} (\tau_{l'+1}^{(k)} - \tau_{l'}^{(k)}) + \gamma E(\tau_{l'}^{(k)}, \tau_{l'+1}^{(k)}, x_{l'}^{(k)}; \boldsymbol{1}, \bpi^{\phi})\right] \\
            &- K_2 \cdot x_{l}^{(k)} \bar w(\tau_{l}^{(k)}, \tau_{l+1}^{(k)}, x_{l}^{(k)}; \theta) - \gamma E(\tau_{l}^{(k)}, \tau_{l+1}^{(k)}, x_{l}^{(k)}; \bar b(\tau_{l}^{(k)}, \cdot, x_{l}^{(k)}; \theta), \bpi^{\phi}) \bigg)\Bigg\}
        \end{align*}
        \State Update actor network: 
        \begin{align*}
            \phi \leftarrow \phi + \alpha_{\phi} \nabla_{\phi} \Bigg\{
            \frac{1}{M} \sum_{k=i - M + 1}^i \bigg(&\sum_{l \in \mathcal{I}^{(k)}} \log \bpi^{\phi} ( 1 \mid \tau_{l}^{(k)}, x_{l-1}^{(k)}) [ J^{\theta} (\tau_l^{(k)}, x_{l-1}^{(k)} + 1) - J^{\theta} (\tau_{l}^{(k)}, x_{l-1}^{(k)} ) + K_1 ] \\
            & + \gamma \sum_{l = 0}^{L^{(k)}} E(\tau_{l}^{(k)}, \tau_{l+1}^{(k)}, x_{l}^{(k)}; \boldsymbol{1}, \bpi^{\phi}) \bigg)\Bigg\} 
        \end{align*}
    \EndIf
    \EndFor
    \end{algorithmic}
\end{algorithm}

\begin{figure}[htbp]
    \centering
    \begin{tikzpicture}
    \begin{axis}[
        xlabel={Episode},
        ylabel={Average Revenue},
        xtick={0,1000000,2000000,3000000,4000000,5000000},
        xticklabels={0,1,2,3,4,5},
        xtick scale label code/.code={$\times 10^6$},
        ytick={0,10,15,20,23.997},
        yticklabel style={
            /pgf/number format/fixed,
            /pgf/number format/precision=5
        },
        scaled y ticks=false,
        xmin=0, xmax=5000000,
        ymin=8, ymax=25,
        legend pos=south east,
        grid=minor
    ]
    \addplot[very thick, blue] table [col sep=comma, x=x, y=y] {data/queueing_example.csv};
    \addlegendentry{2-NNs}
    \addplot [dashed, very thick, samples=2] coordinates {(0,23.997) (5000000,23.997)};
    \end{axis}
    \end{tikzpicture}
    \caption{Average revenues of Algorithm~\ref{alg:NN_for_queueing} over episodes for the example in Appendix~\ref{app:queueing-control}}
    \label{fig:queueing_performance_evolution}
\end{figure}

\begin{table}[htbp]
    \centering
    \caption{Simulation results for selected policies in Appendix~\ref{app:queueing-control}}
    \label{tab:queueing_expected_revenue}
    \begin{tabular}{p{3.5cm}>{\raggedleft\arraybackslash}p{2.5cm}>{\raggedleft\arraybackslash}p{2cm}>{\raggedleft\arraybackslash}p{3cm}>{\raggedleft\arraybackslash}p{3cm}}
    \hline
    Policy & Average revenue & $99\%$-CI ($\pm$) & {$\frac{\text{Average revenue}}{V_{0.001}^*(0, 0)}$ ($\%$)} &  Time cost (second)\\
    \hline
    2-NNs & 23.930 &  0.390 & 99.72 & 1,526 \\
    \textsc{Uniform-Random}  & 8.603 & 0.415 & 35.85 & $*$\\
    \textsc{Optimal-Threshold} & 13.358 & 0.325 & 55.67 & $*$\\
    \hline
    \end{tabular}
    
    \footnotesize
    $*$ indicates that the method involves neither solving LP problems nor learning; the time cost is virtually zero.
\end{table}

\end{document}